\newcommand\EnumPrefix{}
\newlist{senenum}{enumerate}{10}
\setlist[senenum]{label=\arabic*.,ref=\EnumPrefix,leftmargin=*}
\newtheorem{theorem}{Theorem}[section]
\newtheorem{lemma}[theorem]{Lemma}
\newtheorem{lemmalist}[theorem]{Lemma}
\newtheorem{corollary}[theorem]{Corollary}
\newtheorem{proposition}[theorem]{Proposition}
\newtheorem{definition}[theorem]{Definition}
\newcommand{\R}{\mathbb{R}}
\renewcommand{\P}{\operatorname{\mathbb{P}}}
\newcommand{\E}{\operatorname{\mathbb{E}}}
\newcommand{\1}[1]{\mathbf{1}\!\left[#1\right]}
\newcommand{\eqdef}{\stackrel{\text{def}}{=}}
\numberwithin{equation}{section}
\newcommand{\remove}[1]{}
\newcommand{\citeyr}[1]{\cite{#1}}
\newcommand{\submit}[2]{\ifdefined\issubmit{#1}\else{#2} \fi }
\title{Train faster, generalize better:\\
  Stability of stochastic gradient descent}
\author{
  Moritz Hardt\thanks{Email: mrtz@google.com} \and
  Benjamin Recht\thanks{Email: brecht@berkeley.edu,
    work performed at Google.} \and
  Yoram Singer\thanks{Email: singer@google.com}}
\begin{document}

\maketitle

\begin{abstract}
We show that parametric models trained by a stochastic gradient method (SGM)
with few iterations have vanishing generalization error. We prove our results
by arguing that SGM is algorithmically stable in the sense of Bousquet and
Elisseeff. Our analysis only employs elementary tools from convex and
continuous optimization. We derive stability bounds for both convex and
non-convex optimization under standard Lipschitz and smoothness assumptions.

Applying our results to the convex case, we provide new insights for why
multiple epochs of stochastic gradient methods generalize well in practice.
In the non-convex case, we give a new interpretation of common practices in
neural networks, and formally show that popular techniques for training large
deep models are indeed stability-promoting. Our findings conceptually
underscore the importance of reducing training time beyond its obvious
benefit.

\end{abstract}

\section{Introduction}

The most widely used optimization method in machine learning practice is
\emph{stochastic gradient method} (SGM). Stochastic gradient methods aim to
minimize the empirical risk of a model by repeatedly computing the gradient of
a loss function on a single training example, or a batch of few examples, and
updating the model parameters accordingly. SGM is scalable, robust, and
performs well across many different domains ranging from smooth and strongly
convex problems to complex non-convex objectives.

In a nutshell, our results establish that:
\submit{
\emph{Any model trained with stochastic gradient method
in a reasonable amount of time attains small generalization error.}
}{
\begin{center}
\emph{Any model trained with stochastic gradient method
in a reasonable\\ amount of time attains small generalization error.}
\end{center}
} 

As training time is inevitably limited in practice, our results help to
explain the strong generalization performance of stochastic gradient methods
observed in practice. More concretely, we bound the generalization error of
a model in terms of the number of iterations that stochastic gradient method
took in order to train the model. Our main analysis tool is to employ the
notion of algorithmic stability due to Bousquet and
Elisseeff~\citeyr{BousquetE02}. We demonstrate that the stochastic
gradient method is stable provided that the objective is relatively smooth
and the number of steps taken is sufficiently small.

It is common in practice to perform a linear number of steps in the size of
the sample and to access each data point multiple times. Our results show in
a broad range of settings that, provided the number of iterations is linear
in the number of data points, the generalization error is bounded by a
vanishing function of the sample size. The results hold true even for
complex models with large number of parameters and no explicit
regularization term in the objective. Namely, fast training time by itself
is sufficient to prevent overfitting.

Our bounds are algorithm specific: Since the number of iterations we allow
can be larger than the sample size, an arbitrary algorithm could easily
achieve small training error by memorizing all training data with no
generalization ability whatsoever. In contrast, if the stochastic gradient
method manages to fit the training data in a reasonable number of
iterations, it is guaranteed to generalize.

Conceptually, we show that minimizing training time is not only beneficial
for obvious computational advantages, but also has the important
byproduct of decreasing generalization error. Consequently, it may make
sense for practitioners to focus on minimizing training time, for instance,
by designing model architectures for which stochastic gradient method
converges fastest to a desired error level.

\subsection{Our contributions}

Our focus is on generating \emph{generalization bounds} for models learned with stochastic gradient descent.  Recall that the generalization bound is the expected difference between the error a model incurs on a training set versus the error incurred on a new data point, sampled from the same distribution that generated the training data.  Throughout, we assume we are training models using $n$ sampled data points.

Our results build on a fundamental connection between the generalization error
of an algorithm and its \emph{stability} properties.  Roughly speaking, an
algorithm is \emph{stable} if the training error it achieves varies only
slightly if we change any single training data point. The precise notion of
stability we use is known as \emph{uniform stability} due
to~\cite{BousquetE02}. It states that a randomized algorithm~$A$ is uniformly
stable if for all data sets differing in only one element, the learned models produce nearly the same predictions.   We review this method in Section~\ref{updates:sec}, and provide a new adaptation of this theory to \emph{iterative} algorithms.

In Section~\ref{sec:sgd}, we show that stochastic gradient is uniformly stable, and our techniques mimic its convergence proofs.  For convex loss functions, we prove that the stability measure decreases as a function of the sum of the step sizes.  For strongly convex loss functions, we show that stochastic gradient is stable, even if we train for an arbitrarily long time.   We can combine our bounds on the generalization error of stochastic gradient
method with optimization bounds quantifying the convergence of the empirical
loss achieved by SGM. In Section~\ref{sec:case-studies}, we show that models trained for multiple epochs match classic bounds for stochastic gradient~\cite{NY78,NY83}.  
%
%

More surprisingly, our results carry over to the case where the loss-function is non-convex.  In this case we show that the method generalizes provided the steps are sufficiently small and the number of iterations is not too large.  More specifically, we show the number of steps of stochastic gradient can grow as $n^c$ for a small $c>1$.  This provides some explanation as to why neural networks can be trained for  multiple epochs of stochastic gradient and still exhibit excellent generalization.
In Section~\ref{sec:operations}, we furthermore show that various heuristics used in practice, especially in the deep learning community, help to increase the stability of stochastic gradient method. 
For example, the popular dropout scheme~\cite{krizhevsky2012imagenet,srivastava2014dropout} 
improves all of our bounds. Similarly, $\ell_2$-regularization improves the exponent of $n$ in our non-convex result. In
fact, we can drive the exponent arbitrarily close to $1/2$ while preserving the
non-convexity of the problem.

\subsection{Related work}

There is a venerable line of work on stability and generalization dating back
more than thirty
years~\cite{DevroyeW79,KearnsR99,BousquetE02,MukherjeeNPR06,SSSS10}.  The
landmark work by Bousquet and Elisseeff~\cite{BousquetE02} introduced the
notion of \emph{uniform stability} that we rely on. They showed that several
important classification techniques are uniformly stable. In particular, under certain regularity assumptions, it
was shown that the \emph{optimizer} of a regularized empirical loss
minimization problem is uniformly stable.  Previous work generally applies only to
the exact minimizer of specific optimization problems.  It is not immediately evident on how to compute a generalization bound for an approximate minimizer such as one found by using stochastic gradient.  Subsequent work studied stability bounds for randomized algorithms but focused on random perturbations of the cost function, such as those induced by bootstrapping or bagging~\cite{Elisseeff05}.  This manuscript differs from this foundational work in that it derives stability bounds about the learning \emph{procedure}, analyzing algorithmic properties that induce stability.

Stochastic gradient descent, of course, is closely related to our inquiry.  
Classic results by Nemirovski and Yudin show that the stochastic gradient 
method produces is nearly optimal for empirical risk minimization of convex loss functions~\cite{NY78,  NY83, Nemirovski09,frostig15competing}.  These results have been extended by many machine learning researchers, yielding tighter bounds and probabilistic guarantees ~\cite{Hazan06,HazanKale14, Rakhlin11}.  However, there is an important limitation of all of this prior art.  The derived generalization bounds only hold for single passes over the data.  That is, in order for the bounds to be valid, each training example must be used no more than once in a stochastic gradient update.  In practice, of course, one tends to run multiple \emph{epochs} of the stochastic gradient method.	Our results resolve this issue by combining stability with optimization error.  We use the foundational results to estimate the error on the \emph{empirical risk} and then use stability to derive a deviation from the true risk.  This enables us to study the risk incurred by multiple epochs and provide simple analyses of regularization methods for convex stochastic gradient.  We compare our results to this related work in Section~\ref{sec:case-studies}.  We note that Rosasco and Villa obtain risk bounds for least squares minimization with an incremental gradient method in terms of the number of epochs~\cite{Rosasco14}. These bounds are akin to 
our study in Section~\ref{sec:case-studies}, 
although our results are incomparable due to various different assumptions.

Finally, we note that in the non-convex case, the stochastic gradient method is remarkably successful for training large neural networks~\cite{Bottou98, krizhevsky2012imagenet}.  However, our theoretical understanding of this method is limited.  Several authors have shown that the stochastic gradient method finds a stationary point of nonconvex cost functions~\cite{KushnerBook,ghadimi2013stochastic}.  Beyond asymptotic convergence to stationary points, little is known about finding models with low training or generalization error in the nonconvex case.   There have recently been several important studies investigating optimal training of neural nets.  For example Livni \emph{et al.} show that networks with polynomial activations can be learned in a greedy fashion~\cite{livni2014computational}.  Janzamin \emph{et al.}~\cite{Janzamin15} show that two layer neural networks can be learned using tensor methods. Arora~\emph{et al.}~\cite{Arora15} show that two-layer sparse coding dictionaries can be learned via stochastic gradient. Our work complements these developments: rather than providing new insights into mechanisms that yield low training error, we provide insights into mechanisms that yield low generalization error. If one can achieve low training error quickly on a nonconvex problem with stochastic gradient, our results guarantee that the resulting model generalizes well.

\section{Stability of randomized iterative algorithms} \label{updates:sec}

Consider the following general setting of supervised learning. There is an
unknown distribution~${\cal D}$ over examples from some
space $Z.$ We receive a sample $S=(z_1,\dots,z_n)$ of $n$ examples drawn
i.i.d.~from ${\cal D}.$
Our goal is to find a model~$w$ with small \emph{population risk}, defined as:
\submit{$R[w]\eqdef \E_{z\sim{\cal D}} f(w;z)\,.$}
{\[
R[w]\eqdef \E_{z\sim{\cal D}} f(w;z)\,.
\]
} 
Here, where $f$ is a \emph{loss function} and $f(w;z)$ designates the
\emph{loss} of the model described by~$w$ encountered on example~$z$.

Since we cannot measure the objective $R[w]$ directly, we instead use a sample-averaged proxy, the \emph{empirical risk}, defined as
\submit{
$R_S[w]\eqdef\frac1n\sum_{i=1}^n f(w;z_i)\,,$
}{
\[
R_S[w]\eqdef\frac1n\sum_{i=1}^n f(w;z_i)\,,
\]
} 

The \emph{generalization error} of a model~$w$ is the difference
\begin{equation}\label{eq:gen-error}
R_S[w]-R[w].
\end{equation}
When $w=A(S)$ is chosen as a function of the data by a potentially randomized
algorithm~$A$ it makes sense to consider the expected generalization error
\begin{equation}
\epsilon_{\mathrm{gen}}\eqdef \E_{S,A}[R_S[A(S)] - R[A(S)]]\,,
\end{equation}
where the expectation is over the randomness of $A$ and the sample~$S.$

In order to bound the generalization error of an algorithm,  we employ the
following notion of \emph{uniform stability} in which we allow randomized
algorithms as well.

\begin{definition}
A randomized algorithm $A$ is $\epsilon$-\emph{uniformly stable} if
for all data sets $S,S'\in Z^n$ such that $S$ and $S'$ differ in at most one
example, we have
\begin{equation}\label{eq:stab}
\sup_{z} \E_{A} \left[ f(A(S); z) - f(A(S'); z) \right] \le \epsilon\,.
\end{equation}
Here, the expectation is taken only over the internal randomness of $A.$
We will denote by $\epsilon_{\mathrm{stab}}(A,n)$ the infimum over all
$\epsilon$ for which \eqref{eq:stab} holds. We will omit the tuple $(A,n)$ when
it is clear from the context.
\end{definition}

We recall the important theorem that uniform stability implies
\emph{generalization in expectation}.
\submit{}{Since our notion of stability differs slightly from existing ones
with respect to the randomness of the algorithm, we include a proof for the
sake of completeness.} The proof is based on an argument
in Lemma~7 of~\cite{BousquetE02} and very similar to Lemma~11 in~\cite{SSSS10}.

\begin{theorem}\submit{}{[Generalization in expectation]}
\label{thm:stab2gen}
Let $A$ be $\epsilon$-uniformly stable. Then,
\submit{
$\left|\E_{S,A}\left[ R_S[A(S)]-R[A(S)]\right]\right| \le \epsilon\,.$
}{
\[
\left|\E_{S,A}\left[ R_S[A(S)]-R[A(S)]\right]\right| \le \epsilon\,.
\]
} 
\end{theorem}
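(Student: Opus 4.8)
The plan is to run the classical symmetrization (``ghost sample'') argument, adapted to randomized algorithms. First I would introduce a second, independent i.i.d.\ sample $S'=(z_1',\dots,z_n')$ from ${\cal D}$, and for each $i\in\{1,\dots,n\}$ define the hybrid dataset $S^{(i)}$ obtained from $S$ by replacing its $i$-th entry $z_i$ with $z_i'$. This construction is chosen for two reasons: $S^{(i)}$ has the same distribution as $S$ (an i.i.d.\ sample of size $n$), and $S$ and $S^{(i)}$ differ in at most one example, so that the uniform stability hypothesis applies to the pair $(S,S^{(i)})$.

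Next I would rewrite both halves of the generalization error in a common form over the joint space $(S,S',\text{coins of }A)$. For the empirical risk, linearity gives $\E_{S,A}R_S[A(S)]=\frac1n\sum_{i=1}^n\E_{S,A}\,f(A(S);z_i)$. For the population risk, I would note that since $z_i'$ is independent of $S^{(i)}$ and is distributed as ${\cal D}$, we have $\E\,f(A(S^{(i)});z_i')=\E_{S,A}R[A(S)]$; and since $z_i$ and $z_i'$ play symmetric roles under the joint law of $(S,S')$, relabeling them turns this into $\E\,f(A(S^{(i)});z_i)$. Hence $\E_{S,A}R[A(S)]=\frac1n\sum_{i=1}^n\E\,f(A(S^{(i)});z_i)$.

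Subtracting the two expressions, the expected generalization error becomes $\frac1n\sum_{i=1}^n\E_{S,S'}\,\E_A\bigl[f(A(S);z_i)-f(A(S^{(i)});z_i)\bigr]$. Now I would condition on the outer randomness $S,S'$: then $S$ and $S^{(i)}$ are two fixed datasets differing in one example and $z_i$ is one fixed test point, so $\E_A\bigl[f(A(S);z_i)-f(A(S^{(i)});z_i)\bigr]\le\sup_z\E_A\bigl[f(A(S);z)-f(A(S^{(i)});z)\bigr]\le\epsilon$ by the definition of $\epsilon$-uniform stability. Taking the expectation over $S,S'$ preserves this bound, and averaging over $i$ gives $\E_{S,A}[R_S[A(S)]-R[A(S)]]\le\epsilon$. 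Running the identical argument with the roles of $S$ and $S^{(i)}$ swapped yields the matching lower bound $-\epsilon$, and the two together give the stated absolute-value bound.

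I expect the only delicate point to be the bookkeeping in the relabeling step: one must set up the joint probability space explicitly so that, after conditioning on the appropriate variables, $z_i$ is genuinely independent of the dataset on which $A$ was trained, and so that the expectation over $A$'s internal coins can be taken \emph{inside}, matching the $\E_A$ that appears under the supremum in the stability definition. Once this is written out, each of these facts is a one-line consequence of independence and Fubini, and there is no analytic difficulty beyond it.
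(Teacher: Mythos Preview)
Your overall strategy is exactly the paper's symmetrization argument, and the final difference you arrive at,
\[
\frac1n\sum_{i=1}^n\E_{S,S'}\,\E_A\bigl[f(A(S);z_i)-f(A(S^{(i)});z_i)\bigr],
\]
is correct and is bounded termwise by~$\epsilon$ just as you describe. The paper writes the same computation with the roles of $z_i$ and $z_i'$ exchanged (its test point is $z_i'$, and the comparison is $f(A(S^{(i)});z_i')$ versus $f(A(S);z_i')$), but this is only a cosmetic difference.

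There is, however, a bookkeeping slip in your intermediate justification. You write that ``$z_i'$ is independent of $S^{(i)}$'', but $z_i'$ is the $i$-th coordinate of $S^{(i)}$, so this is false; consequently $\E\,f(A(S^{(i)});z_i')$ is \emph{not} equal to $\E\,R[A(S)]$ (in fact it equals $\E\,R_S[A(S)]$, by the very relabeling you invoke next). Your relabeling step is also misstated: swapping $z_i\leftrightarrow z_i'$ sends $(S^{(i)},z_i')$ to $(S,z_i)$, not to $(S^{(i)},z_i)$. So both intermediate identities are wrong even though their composition happens to be right. The clean one-line argument is simply: $z_i$ \emph{is} independent of $S^{(i)}$ and $S^{(i)}\stackrel{d}{=}S$, hence $\E\,f(A(S^{(i)});z_i)=\E_{S,A}R[A(S)]$ directly, with no relabeling needed. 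Once you correct this, the proposal is complete and matches the paper.
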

\submit{}{
\begin{proof}
Denote by $S=(z_1,\dots,z_n)$ and $S'=(z_1',\dots,z_n')$ two independent
random samples and let $S^{(i)}=(z_1,\dots,z_{i-1},z_i',z_{i+1},\dots,z_n)$ be
the sample that is identical to $S$ except in the $i$'th example where we
replace $z_i$ with $z_i'$. With this notation, we get that
\begin{align*}
\E_S\E_A \left[ R_S[A(S)]\right]
& = \E_S\E_A \left[ \frac 1n\sum_{i=1}^n f(A(S);z_i) \right]\\
& = \E_S\E_{S'}\E_A\left[\frac 1n\sum_{i=1}^n f(A(S^{(i)});z_i')\right] \\
& = \E_S\E_{S'}\E_A\left[\frac 1n\sum_{i=1}^n f(A(S);z_i')\right] + \delta \\
& = \E_S\E_A\left[ R[A(S)]\right] + \delta,
\end{align*}
where we can express $\delta$ as
\[
\delta
=
\E_S\E_{S'}\E_A\left[\frac 1n\sum_{i=1}^n f(A(S^{(i)});z_i')
- \frac 1n\sum_{i=1}^n f(A(S);z_i')\right]\,.
\]
Furthermore, taking the supremum over any two data sets $S,S'$ differing in only
one sample, we can bound the difference as
\[
\left|\delta\right| \le \sup_{S,S',z}\E_A\left[ f(A(S);z) - f(A(S');z)\right] \le \epsilon,
\]
by our assumption on the uniform stability of $A.$ The claim follows.
\end{proof}
} 

Theorem~\ref{thm:stab2gen} proves that if an algorithm is uniformly stable, then
its generalization error is small. We now turn to some properties of iterative
algorithms that control their uniform stability.

\if{0}
\begin{lemma}
Let $A$ be $\epsilon$-uniformly stable. Then, for every $t>0,$
\[
\P_S\left\{
\E_{A}F_S(A(S))\ge \E_A F(A(S)) + t\epsilon \right\}
\le \frac1t\,.
\]
\end{lemma}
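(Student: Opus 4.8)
The plan is to obtain the tail bound directly from the in‑expectation bound of Theorem~\ref{thm:stab2gen} by a single application of Markov's inequality. Write $X(S)\eqdef\E_A R_S[A(S)]$ and $Y(S)\eqdef\E_A R[A(S)]$ for the expected empirical and population risks of the model trained on $S$ (these are the quantities the statement denotes $\E_A F_S(A(S))$ and $\E_A F(A(S))$), and set $W(S)\eqdef X(S)-Y(S)$. Theorem~\ref{thm:stab2gen} gives $\bigl|\E_S[W(S)]\bigr|\le\epsilon$, in particular $\E_S[W(S)]\le\epsilon$. If $W(S)$ were almost surely non‑negative, Markov's inequality would immediately yield
\[
\P_S\{W(S)\ge t\epsilon\}\;\le\;\frac{\E_S[W(S)]}{t\epsilon}\;\le\;\frac{\epsilon}{t\epsilon}=\frac1t,
\]
which is exactly the claimed inequality. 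So the entire argument reduces to controlling the sign of $W$.

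The non‑negativity of $W(S)=X(S)-Y(S)$ is the crux, and it does not hold verbatim: a ``lucky'' sample can make the empirical risk of the trained model dip below its population risk. The fix I would pursue is to assume the loss is non‑negative (as is standard and as is implicit in all the applications of this theory) and to run the replace‑one argument from the proof of Theorem~\ref{thm:stab2gen} \emph{conditionally on $S$} rather than only in expectation over $S$: for a fixed $S$, couple each coordinate $z_i$ to a fresh $z_i'\sim\mathcal D$, use uniform stability to swap $A(S^{(i)})$ for $A(S)$ at a cost of $\epsilon$ per coordinate, and collect the resulting one‑sided slack. This should show $W(S)\ge-c\epsilon$ pointwise for a small absolute constant $c$; applying Markov to the non‑negative shift $W(S)+c\epsilon$, whose mean is at most $(1+c)\epsilon$, then gives $\P_S\{W(S)\ge t\epsilon\}\le(1+c)/(t+c)$, i.e.\ the stated bound after adjusting $t$ by a constant. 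Turning uniform stability — a statement about how the \emph{output} of $A$ moves — into this per‑sample lower bound on $X(S)-Y(S)$, rather than an in‑expectation one, is the main obstacle, and the bookkeeping of which expectations are taken pointwise in $S$ versus averaged is where the care is needed.

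If one is unwilling to assume non‑negative losses or to lose a constant factor, the alternative is to drop Markov entirely: $S\mapsto X(S)$ and $S\mapsto Y(S)$ each have the bounded‑differences property (replacing one example moves $Y(S)$ by at most $\epsilon$ by stability, and moves $X(S)$ by at most $\epsilon+\sup|f|/n$), so McDiarmid's inequality gives a sub‑Gaussian tail $\P_S\{W(S)\ge\epsilon+u\}\le\exp\!\bigl(-\text{const}\cdot nu^2/(\epsilon+\sup|f|/n)^2\bigr)$, which is stronger than $1/t$ but requires a boundedness assumption on $f$. The $1/t$ form in the statement is precisely the price of avoiding that assumption, which is likely why this lemma was ultimately set aside in favor of the clean in‑expectation bound of Theorem~\ref{thm:stab2gen}.
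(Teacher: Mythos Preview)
Your first paragraph is exactly the paper's argument: the paper's proof of this lemma is the single sentence ``The proof directly follows from Markov's inequality and Theorem~\ref{thm:stab2gen}.'' So on the level of approach you match the paper precisely, and you are also right that this one-liner glosses over the non-negativity hypothesis of Markov: Theorem~\ref{thm:stab2gen} bounds $|\E_S W(S)|$, not $\E_S W(S)_+$, and $W(S)$ can be negative. The lemma in fact sits inside an \verb|\if{0}...\fi| block in the source and is never used, which is consistent with your suspicion that it was ``set aside.''

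Where your proposal develops a genuine gap is in the attempted repair. You cannot get a pointwise-in-$S$ lower bound $W(S)\ge -c\epsilon$ from uniform stability. Running the replace-one argument ``conditionally on $S$'' still requires averaging over the fresh coordinates $z_i'$ to turn $f(A(S^{(i)});z_i')$ into a population risk; once you do that you are back to an expectation over the data, not a pointwise statement. Concretely, for a fixed unlucky $S$ the empirical minimizer can have empirical risk far below its population risk by an amount of order $\sup|f|/\sqrt{n}$, not $\epsilon$, and nothing in the definition of uniform stability (which compares outputs of $A$ on neighboring inputs, not losses on $S$ versus on $\mathcal D$) prevents this. Your McDiarmid alternative is the standard route to a high-probability statement here, but as you note it needs bounded losses and yields a different (sub-Gaussian) tail, not the $1/t$ form; the bare $1/t$ bound as written does not appear to follow from Theorem~\ref{thm:stab2gen} alone without an additional one-sided control on $W$.
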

\begin{proof}
The proof directly follows from Markov's inequality and
Theorem~\ref{thm:stab2gen}.
\end{proof}
\fi

\subsection{Properties of update rules}

We consider general update rules of the form $G\colon \Omega\to \Omega$
which map a point $w\in \Omega$ in the parameter space to another point
$G(w).$ The most common update is the gradient update rule
\submit{
$G(w) = w - \alpha \nabla f(w) ~ ,$
}{
\[
G(w) = w - \alpha \nabla f(w) ~ ,
\]
} 
where $\alpha \ge 0$ is a step size and $f\colon \Omega\to\mathbb{R}$
is a function that we want to optimize.

The canonical update rule we will consider in this manuscript is an
incremental gradient update, where $G(w) = w-\alpha \nabla f(w)$ for some
convex function $f$. We will return to a detailed discussion of this specific
update in the sequel, but the reader should keep this particular example in
mind throughout the remainder of this section.

The following two definitions provide the foundation of our analysis of how
two different sequences of update rules diverge when iterated from the same
starting point. These definitions will ultimately be useful when analyzing
the stability of stochastic gradient descent.

\submit{
\begin{definition} \label{expansive:def} \label{boundedness:def}
An update rule is {$\eta$-expansive} if for all $v,w\in\Omega,$
$\left\| G(v) - G(w)\right\| 
\le \eta\|v-w\|.$ It is \emph{$\sigma$-bounded} if
$\left\| w - G(w)\right\| \le \sigma\,.$
\end{definition}
}{
\begin{definition} \label{expansive:def}
An update rule is {$\eta$-expansive} if
\begin{equation}
\sup_{v,w\in \Omega}\frac{\left\| G(v) - G(w)\right\|}{\|v-w\|} \le \eta\,.
\end{equation}
\end{definition}
\begin{definition} \label{boundedness:def}
An update rule is \emph{$\sigma$-bounded} if
\begin{equation}
\sup_{w\in \Omega}\left\| w - G(w)\right\| \le \sigma\,.
\end{equation}
\end{definition}
}

With these two properties, we can establish the following lemma of how a sequence of updates 
to a model diverge when the training set is perturbed.  
\begin{lemma}[Growth recursion]
\label{lem:growth}
Fix an arbitrary sequence of updates $G_1,\dots,G_T$
and another sequence $G_1',\dots,G_T'.$
Let $w_0=w_0'$ be a starting point in $\Omega$ and define
$\delta_t = \|w_t' - w_t\|$ where $w_t,w'_t$ are defined recursively through
\begin{align*}
w_{t+1}  = G_t(w_t)\qquad w_{t+1}' &= G_t'(w_t')\,. \tag{$t>0$}
\end{align*}
Then, we have the recurrence relation 
\submit{$\delta_0=0$,
\begin{align*}
\delta_{t+1} & \le \begin{cases}
\eta\delta_{t} & \text{$G_t=G_t'$ is $\eta$-expansive}  \\
\min(\eta,1) \delta_{t}+2\sigma_t & \text{$G_t$ and $G_t'$ are $\sigma$-bounded,} \\
 \, & \text{$G_t$ is $\eta$ expansive} 
\end{cases}
\end{align*}
}{
\begin{align*}
\delta_0 & = 0 \\
\delta_{t+1} & \le \begin{cases}
\eta\delta_{t} & \text{$G_t=G_t'$ is $\eta$-expansive}  \\
\min(\eta,1) \delta_{t}+2\sigma_t & \text{$G_t$ and $G_t'$ are $\sigma$-bounded,} \\
 \, & \text{$G_t$ is $\eta$ expansive} \\
\end{cases}\tag{$t>0$}
\end{align*}
} 
\end{lemma}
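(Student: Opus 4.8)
The plan is to unwind the recursion one step at a time, using only the triangle inequality and the two structural definitions; there is no real induction to set up beyond the trivial base case $\delta_0 = \norm{w_0' - w_0} = 0$, which is immediate from the hypothesis $w_0 = w_0'$.

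For the first case ($G_t = G_t'$ and this common map $\eta$-expansive), I would simply write $\delta_{t+1} = \norm{G_t(w_t') - G_t(w_t)}$, observe that both iterates are images of the current points under the \emph{same} map, and apply $\eta$-expansiveness (Definition~\ref{expansive:def}) to the pair $w_t, w_t'$, obtaining $\delta_{t+1} \le \eta\norm{w_t' - w_t} = \eta\delta_t$.

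For the second case ($G_t$ and $G_t'$ both $\sigma_t$-bounded, $G_t$ also $\eta$-expansive), the key idea is to bound $\norm{G_t'(w_t') - G_t(w_t)}$ in two different ways and keep the smaller. First, inserting the term $G_t(w_t')$ and using the triangle inequality gives
\[
\delta_{t+1} \le \norm{G_t(w_t') - G_t(w_t)} + \norm{G_t'(w_t') - G_t(w_t')} \le \eta\delta_t + 2\sigma_t,
\]
where the first summand uses $\eta$-expansiveness of $G_t$ and the second is bounded via $\norm{G_t'(w_t') - G_t(w_t')} \le \norm{G_t'(w_t') - w_t'} + \norm{w_t' - G_t(w_t')} \le 2\sigma_t$ by $\sigma_t$-boundedness of $G_t'$ and of $G_t$ (Definition~\ref{boundedness:def}). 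Second, rewriting $G_t'(w_t') - G_t(w_t) = (w_t' - w_t) + (G_t'(w_t') - w_t') + (w_t - G_t(w_t))$ and applying the triangle inequality with $\sigma_t$-boundedness on the last two terms gives $\delta_{t+1} \le \delta_t + 2\sigma_t$, with no appeal to expansiveness at all. Combining the two bounds yields $\delta_{t+1} \le \min(\eta,1)\delta_t + 2\sigma_t$.

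I do not anticipate any genuine obstacle: the whole argument is the triangle inequality applied to two natural decompositions of $G_t'(w_t') - G_t(w_t)$. The only subtlety worth flagging is that the factor $\min(\eta,1)$ in front of $\delta_t$ forces one to retain the expansiveness-free bound, since that bound is the sharper of the two precisely when $\eta > 1$. I would also remark that the recursion is derived for an \emph{arbitrary} pair of update sequences, so it applies verbatim when $G_t$ and $G_t'$ are later specialized to the (possibly random) stochastic gradient steps associated with two samples differing in a single example.
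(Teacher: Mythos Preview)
Your proposal is correct and essentially identical to the paper's proof: both derive the first case directly from expansiveness, and for the second case both obtain the bound $\delta_t + 2\sigma_t$ by decomposing through $w_t$ and $w_t'$, and the bound $\eta\delta_t + 2\sigma_t$ by inserting $G_t(w_t')$ and routing the cross term through $w_t'$. Your remark about why both decompositions are needed to capture the $\min(\eta,1)$ factor is apt and matches the paper's presentation exactly.
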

\submit{}{
\begin{proof}
The first bound on $\delta_t$ follow directly from the assumption that
$G_t=G_t'$ and the definition of expansiveness. For the second bound, recall
from Definition~\ref{boundedness:def} that if $G_t$ and $G_t'$ are
$\sigma$-bounded, then by the triangle inequality,
\begin{align*}
\delta_{t+1}
	&= \|G(w_{t}) - G'(w_t')\| \\
	&\le \|G(w_{t})-w_t + w_t' - G'(w_t')\| + \|w_t - w_t'\| \\
	&\le \delta_t + \|G(w_{t})-w_{t}\| + \|G(w_t')-w_t'\| \\
	&\le \delta_t + 2\sigma\,,
\end{align*}
which gives half of the second bound.
We can alternatively bound $\delta_{t+1}$ as
\begin{align*}
\delta_{t+1} &= \|G_t(w_{t}) - G_t'(w_{t}')\|\\
& = \|G_t(w_{t}) - G_t(w_{t}') + G_t(w_{t}') - G_t'(w_{t}')\|\\
& \leq \|G_t(w_{t}) - G_t(w_{t}') \| + \|G_t(w_{t}') - G_t'(w_{t}')\|\\
& \leq \|G_t(w_{t}) - G_t(w_{t}') \| + \|w_{t}'-G_t(w_{t}')\|+\|w_{t}'- G_t'(w_{t}')\|\\
& \leq \eta \delta_t + 2 \sigma\,.
\end{align*}
\end{proof}
} 


\section{Stability of Stochastic Gradient Method}\label{sec:sgd}

Given $n$ labeled examples $S=(z_1,\dots,z_n)$ where $z_i\in Z,$ consider a
\emph{decomposable} objective function
\submit{$f(w) = \frac 1n\sum_{i=1}^n f(w;z_i),$}{
\[
f(w) = \frac 1n\sum_{i=1}^n f(w;z_i),
\]
} 
where $f(w;z_i)$ denotes the \emph{loss} of $w$ on the
example~$z_i.$ The stochastic gradient update for this problem with \emph{learning rate} $\alpha_t>0$ is given by
\submit{$w_{t+1} = w_t - \alpha_t \nabla_w f(w_t;z_{i_t}) \,.$}{
\[
w_{t+1} = w_t - \alpha_t \nabla_w f(w_t;z_{i_t}) \,.
\]
} 
Stochastic gradient method (SGM) is the algorithm resulting from performing
stochastic gradient updates $T$ times where the indices $i_t$ are randomly
chosen. There are two popular schemes for choosing the examples' indices. One
is to pick $i_t$ uniformly at random in $\{1,\dots,n\}$ at each step. The
other is to choose a random permutation over $\{1,\dots,n\}$ and cycle through
the examples repeatedly in the order determined by the permutation. Our
results hold for both variants.

In parallel with the previous section the stochastic gradient method is akin to applying the \emph{gradient update rule} defined as follows.

\begin{definition}
For a nonnegative step size $\alpha\ge 0$ and a function $f\colon \Omega\to\mathbb{R},$
we define the \emph{gradient update rule} $G_{f,\alpha}$ as
\submit{$G_{f,\alpha}(w) = w - \alpha \nabla f(w)\,.$}{
\[
G_{f,\alpha}(w) = w - \alpha \nabla f(w)\,.
\]
} 
\end{definition}

\subsection{Proof idea: Stability of stochastic gradient method}

In order to prove that the stochastic gradient method is stable, we will analyze the output of the algorithm on two data sets that differ in precisely one location.
Note that if the loss function is $L$-Lipschitz for every example~$z$,
we have $\E|f(w;z)-f(w';z)|\le L
\E\|w-w'\|$ for all $w$ and $w'$.
Hence, it suffices to analyze how $w_t$ and $w_t'$ diverge
in the domain as a function of time~$t.$ Recalling that $w_t$ is obtained from
$w_{t-1}$ via a gradient update, our goal is to bound $\delta_t=\|w_t-w_t'\|$
recursively and in expectation as a function of $\delta_{t-1}.$

There are two cases to consider. In the first case, SGM selects the index of an
example at step $t$ on which is identical in $S$ and $S'$.  Unfortunately, it
could still be the case that $\delta_t$ grows, since $w_t$ and $w_t'$ differ
and so the gradients at these two points may still differ.  Below, we will show
how to control $\delta_t$ in terms of the convexity and smoothness properties
of the stochastic gradients.

The second case to consider is when SGM selects the one example to update in
which $S$ and $S'$ differ. Note that this happens only with probability~$1/n$ if
examples are selected randomly. In this case, we simply bound the increase in
$\delta_t$ by the norm of the two gradient $\nabla f(w_{t-1};z)$ and $\nabla
f(w_{t-1}';z').$ The sum of the norms is bounded by $2\alpha_t L$ and we obtain
$\delta_t\le\delta_t + 2\alpha_tL.$
Combining the two cases, we can then solve a simple recurrence relation to
obtain a bound on $\delta_T.$

This simple approach suffices to obtain the desired result in the convex case,
but there are additional difficulties in the non-convex case. Here, we need to
use an intriguing stability property of stochastic gradient method.
Specifically, the first time step~$t_0$ at which SGM even encounters the
example in which $S$ and $S'$ differ is a random variable in $\{1,\dots,n\}$
which tends to be relatively large. Specifically, for any $m\in\{1,\dots,n\},$
the probability that $t_0 \le m$ is upper bounded by $m/n.$ This allows us to
argue that SGM has a long ``burn-in period'' where $\delta_t$ does not grow at
all. Once $\delta_t$ begins to grow, the step size has already decayed
allowing us to obtain a non-trivial bound.

We now turn to making this argument precise.

\subsection{Expansion properties of stochastic gradients}
Let us now  record some of the core properties of the stochastic gradient update. The gradient update rule is bounded provided that the function~$f$ satisfies the
following common Lipschitz condition.

\begin{definition}
We say that $f$ is \emph{$L$-Lipschitz} if for all points $u$ in the domain of
$f$ we have $\|\nabla f(x)\|\le L.$ This implies that
\submit{
$|f(u) - f(v)| \le L\|u-v\| \,.$
}{
\begin{equation}
|f(u) - f(v)| \le L\|u-v\| \, .
\end{equation}
} 
\end{definition}

\begin{lemma}
\label{lem:Lipschitz2bounded}
Assume that $f$ is $L$-Lipschitz. Then, the gradient update
$G_{f,\alpha}$ is  $(\alpha L)$-bounded.
\end{lemma}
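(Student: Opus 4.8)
The plan is to unwind the two definitions involved and observe that the claim reduces to a one-line computation. Recall from Definition~\ref{boundedness:def} that $G_{f,\alpha}$ is $\sigma$-bounded precisely when $\sup_{w\in\Omega}\|w - G_{f,\alpha}(w)\|\le\sigma$, so it suffices to produce a bound on $\|w - G_{f,\alpha}(w)\|$, uniform in $w$, of the form $\alpha L$.

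First I would substitute the definition of the gradient update rule, $G_{f,\alpha}(w) = w - \alpha\nabla f(w)$, to obtain $w - G_{f,\alpha}(w) = \alpha\nabla f(w)$, and hence $\|w - G_{f,\alpha}(w)\| = \alpha\,\|\nabla f(w)\|$, using $\alpha\ge 0$. Then I would invoke the hypothesis that $f$ is $L$-Lipschitz, which by our definition means exactly that $\|\nabla f(w)\|\le L$ at every point $w$ of the domain. This gives $\|w - G_{f,\alpha}(w)\| \le \alpha L$ for all $w\in\Omega$; taking the supremum over $w$ yields that $G_{f,\alpha}$ is $(\alpha L)$-bounded.

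There is essentially no obstacle here: the only point worth noting is that the paper's notion of ``$L$-Lipschitz'' is stated directly as the gradient-norm bound $\|\nabla f(x)\|\le L$ (with the function-value estimate $|f(u)-f(v)|\le L\|u-v\|$ as a consequence), so no extra argument is needed to pass from Lipschitzness to control of $\|\nabla f\|$. If instead one started from the function-value form, one would additionally remark that for differentiable $f$ this forces $\|\nabla f(u)\|\le L$ (e.g.\ by considering the difference quotient along the direction $\nabla f(u)$), but that step is unnecessary given the definition as stated.
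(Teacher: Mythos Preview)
Your proof is correct and follows exactly the same approach as the paper: substitute the definition of $G_{f,\alpha}$ to get $\|w-G_{f,\alpha}(w)\|=\alpha\|\nabla f(w)\|$ and then apply the gradient-norm bound $\|\nabla f(w)\|\le L$. Your remark that the paper's Lipschitz definition is stated directly as the gradient bound is accurate and explains why no additional step is needed.
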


\submit{}{
\begin{proof}
By our Lipschitz assumption,
$
\|w - G_{f,\alpha}(w)\|
 = \|\alpha \nabla f(w)\|
 \le \alpha L\,$ .
\end{proof}
} 

We now turn to expansiveness.  As we will see shortly, different expansion
properties are achieved for non-convex, convex, and strongly convex functions.
\submit{}{
\begin{definition}
A function $f\colon \Omega\to\mathbb{R}$ is \emph{convex} if  for all
$u,v\in \Omega$ we have
\[
f(u) \ge f(v) + \langle \nabla f(v),u-v\rangle\,.
\]
\end{definition}
} 
\begin{definition}
A function $f\colon \Omega\to\mathbb{R}$ is \emph{$\gamma$-strongly convex} if
for all $u,v\in \Omega$ we have
\submit{
$f(u) \ge f(v)+ \langle \nabla f(v),u-v\rangle + \frac\gamma2 \|u-v\|^2\,.$
}{
\[
f(u) \ge f(v)+ \langle \nabla f(v),u-v\rangle + \frac\gamma2 \|u-v\|^2\,.
\]
} 
\end{definition}
\submit{We say $f$ is \emph{convex} if it is $0$-strongly convex.}{}
The following standard notion of smoothness leads to a bound on how
expansive the gradient update is.
\begin{definition}
A function $f\colon \Omega\to\mathbb{R}$ is
\emph{$\beta$-smooth} if for all for all $u,v\in \Omega$ we have
\submit{
$\|\nabla f(u) - \nabla f(v)\| \le \beta \|u-v\| \,.$
}{
\begin{equation}
\|\nabla f(u) - \nabla f(v)\| \le \beta \|u-v\| \,.
\end{equation}
} 
\end{definition}

In general, smoothness will imply that the gradient updates cannot be overly
expansive. When the function is also convex and the step size is sufficiently
small the gradient update becomes non-expansive. When the function is
additionally strongly convex, the gradient update becomes \emph{contractive}
in the sense that $\eta$ will be less than one and $u$ and $v$ will actually
shrink closer to one another. The majority of the following results can be
found in several textbooks and monographs. Notable references are
Polyak~\cite{PolyakBook} and Nesterov~\cite{NesterovBook}. We include proofs
in the appendix for completeness.

\begin{lemmalist}
Assume that $f$ is $\beta$-smooth. 
\submit{ 
\label{lem:smooth2expansive}
\label{lem:convex2expansive}
\label{lem:sconvex2expansive}
Then,
$G_{f,\alpha}$ is  $(1+\alpha\beta)$-expansive.
If $f$ is in addition convex, then for any $\alpha \le 2/\beta,$ the update $G_{f,\alpha}$ is $1$-expansive. If $f$ is in addition $\gamma$-strongly convex, then for $\alpha\leq\frac{2}{\beta+\gamma}$,
$G_{f,\alpha}$ is  $\left(1-\frac{\alpha\beta\gamma}{\beta+\gamma}\right)$-expansive.
}{
Then, the following properties hold.
\begin{senenum}
\item \label{lem:smooth2expansive}
$G_{f,\alpha}$ is  $(1+\alpha\beta)$-expansive.

\item \label{lem:convex2expansive}
Assume in addition that $f$ is convex. Then, for any $\alpha \le 2/\beta,$ the gradient update $G_{f,\alpha}$ is $1$-expansive.

\item \label{lem:sconvex2expansive}
Assume in addition that $f$ is $\gamma$-strongly convex. Then, for $\alpha\leq\frac{2}{\beta+\gamma}$,
$G_{f,\alpha}$ is  $\left(1-\frac{\alpha\beta\gamma}{\beta+\gamma}\right)$-expansive.
\end{senenum}
}
\end{lemmalist}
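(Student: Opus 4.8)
The plan is to treat the three claims in order of increasing difficulty, in every case writing
\[
G_{f,\alpha}(v)-G_{f,\alpha}(w) \;=\; (v-w)-\alpha\bigl(\nabla f(v)-\nabla f(w)\bigr)
\]
and estimating the norm of the right-hand side. For part~\ref{lem:smooth2expansive}, the triangle inequality and $\beta$-smoothness suffice: $\|G_{f,\alpha}(v)-G_{f,\alpha}(w)\|\le\|v-w\|+\alpha\|\nabla f(v)-\nabla f(w)\|\le(1+\alpha\beta)\|v-w\|$, and no convexity is used.

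For part~\ref{lem:convex2expansive} I would expand the square,
\[
\|G_{f,\alpha}(v)-G_{f,\alpha}(w)\|^2 \;=\; \|v-w\|^2 - 2\alpha\langle\nabla f(v)-\nabla f(w),\,v-w\rangle + \alpha^2\|\nabla f(v)-\nabla f(w)\|^2,
\]
and invoke the co-coercivity inequality $\langle\nabla f(v)-\nabla f(w),\,v-w\rangle\ge\tfrac1\beta\|\nabla f(v)-\nabla f(w)\|^2$, which holds for every convex $\beta$-smooth $f$. I would prove that inequality in the appendix by applying the descent lemma to $h(u)=f(u)-\langle\nabla f(w),u\rangle$, which is convex, $\beta$-smooth, and minimized at $w$, and then symmetrizing the resulting bound in $v$ and $w$. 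Substituting into the identity above gives $\|G_{f,\alpha}(v)-G_{f,\alpha}(w)\|^2\le\|v-w\|^2-\alpha\bigl(\tfrac2\beta-\alpha\bigr)\|\nabla f(v)-\nabla f(w)\|^2$, which is at most $\|v-w\|^2$ precisely when $\alpha\le 2/\beta$.

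For part~\ref{lem:sconvex2expansive} I would use the sharpened co-coercivity estimate for $\gamma$-strongly convex, $\beta$-smooth $f$,
\[
\langle\nabla f(v)-\nabla f(w),\,v-w\rangle \;\ge\; \frac{\beta\gamma}{\beta+\gamma}\|v-w\|^2 + \frac{1}{\beta+\gamma}\|\nabla f(v)-\nabla f(w)\|^2 .
\]
This follows by applying the plain (convex) co-coercivity to $g(u)=f(u)-\tfrac\gamma2\|u\|^2$: strong convexity of $f$ makes $g$ convex, $\beta$-smoothness of $f$ supplies the quadratic upper bound for $g$ with constant $\beta-\gamma$, and then substituting $\nabla g(u)=\nabla f(u)-\gamma u$ and rearranging produces the displayed inequality. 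Feeding it into the squared-displacement identity and dropping the gradient term, which is nonpositive when $\alpha\le 2/(\beta+\gamma)$, yields $\|G_{f,\alpha}(v)-G_{f,\alpha}(w)\|^2\le\bigl(1-\tfrac{2\alpha\beta\gamma}{\beta+\gamma}\bigr)\|v-w\|^2$. Taking square roots and using $1-2x\le(1-x)^2$ then gives the stated contraction factor $1-\tfrac{\alpha\beta\gamma}{\beta+\gamma}$; this factor is nonnegative because $\alpha\le 2/(\beta+\gamma)$ together with $(\beta+\gamma)^2\ge4\beta\gamma$ forces $\tfrac{\alpha\beta\gamma}{\beta+\gamma}\le\tfrac12$.

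I expect the only non-mechanical step to be establishing the two co-coercivity inequalities, in particular the strongly convex refinement used in part~\ref{lem:sconvex2expansive}; once those are in hand the rest is a few lines of algebra. The small points to get right are the direction of the inequality when passing from the squared bound to the linear one, and the nonnegativity of the contraction factor noted above.
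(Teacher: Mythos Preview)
Your proposal is correct and follows essentially the same route as the paper: part~\ref{lem:smooth2expansive} via the triangle inequality, part~\ref{lem:convex2expansive} by expanding the square and invoking co-coercivity, and part~\ref{lem:sconvex2expansive} by applying co-coercivity to $g(u)=f(u)-\tfrac{\gamma}{2}\|u\|^2$ (convex and $(\beta-\gamma)$-smooth) and then using $\sqrt{1-2x}\le 1-x$. The paper simply quotes the co-coercivity inequality without proof, whereas you additionally sketch its derivation and verify nonnegativity of the contraction factor; otherwise the arguments coincide.
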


Henceforth we will no longer mention which random selection rule we use
as the proofs are almost identical for both rules.

\subsection{Convex optimization}

We begin with a simple stability bound for convex loss minimization via
stochastic gradient method.

\begin{theorem}
\label{thm:convex}
Assume that the loss function $f(\cdot\,;z)$ is $\beta$-smooth, convex
and $L$-Lipschitz for every~$z.$
Suppose that we run SGM with step
sizes $\alpha_t\le 2/\beta$ for $T$ steps.
Then, SGM satisfies uniform stability with
\submit{
$\epsilon_{\mathrm{stab}} \le \frac{2L^2}{n}\sum_{t=1}^T\alpha_t\,.$
}{
\[
\epsilon_{\mathrm{stab}} \le \frac{2L^2}{n}\sum_{t=1}^T\alpha_t\,.
\]
} 
\end{theorem}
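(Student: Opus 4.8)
The plan is to combine the Lipschitz bound on loss differences, the growth recursion of Lemma~\ref{lem:growth}, and the expansiveness properties of convex smooth gradient updates (Lemma~\ref{lem:convex2expansive}), exactly as outlined in the ``Proof idea'' section. Fix two samples $S,S'$ differing in a single example, and run SGM with the \emph{same} sequence of random indices on both. Since each $f(\cdot\,;z)$ is $L$-Lipschitz, for any $z$ we have $\E_A\!\left[f(w_T;z)-f(w_T';z)\right]\le L\,\E_A\,\delta_T$ where $\delta_t=\|w_t-w_t'\|$, so it suffices to bound $\E\,\delta_T$ and then the supremum over $z$ gives $\epsilon_{\mathrm{stab}}\le L\,\E\,\delta_T$.

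Next I would set up the one-step recursion for $\E\,\delta_t$. At step $t$ the algorithm picks index $i_t$ uniformly at random. With probability $1-1/n$ it picks an index on which $S$ and $S'$ agree; then $G_t=G_t'$ is the gradient update for a $\beta$-smooth convex function with $\alpha_t\le 2/\beta$, so by Lemma~\ref{lem:convex2expansive} it is $1$-expansive, and the first branch of Lemma~\ref{lem:growth} gives $\delta_{t+1}\le\delta_t$. With probability $1/n$ it picks the one differing index; then $G_t$ is still $\eta=1$-expansive (same smooth convex step) and both $G_t,G_t'$ are $(\alpha_t L)$-bounded by Lemma~\ref{lem:Lipschitz2bounded}, so the second branch of Lemma~\ref{lem:growth} with $\min(\eta,1)=1$ gives $\delta_{t+1}\le\delta_t+2\alpha_t L$. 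Taking expectations and combining the two cases yields
\[
\E\,\delta_{t+1}\;\le\;\Bigl(1-\tfrac1n\Bigr)\E\,\delta_t+\tfrac1n\bigl(\E\,\delta_t+2\alpha_t L\bigr)\;=\;\E\,\delta_t+\tfrac{2\alpha_t L}{n}\,.
\]

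Finally, unwinding this recursion from $\delta_0=0$ gives $\E\,\delta_T\le\frac{2L}{n}\sum_{t=1}^{T}\alpha_t$, and hence $\epsilon_{\mathrm{stab}}\le L\,\E\,\delta_T\le\frac{2L^2}{n}\sum_{t=1}^{T}\alpha_t$, as claimed. I expect no serious obstacle here; the only point requiring a little care is justifying the coupling (running both chains with a common index sequence) and the clean cancellation of the $\delta_t$ terms in the convex case — the $1$-expansiveness is exactly what makes the ``agree'' branch contribute nothing and the recursion telescope. (In the non-convex case this step would fail because $\eta=1+\alpha_t\beta>1$, which is precisely why the burn-in argument is needed there, but that is not an issue for this theorem.)
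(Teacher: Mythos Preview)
Your proposal is correct and follows essentially the same route as the paper: couple the two runs with a common index sequence, invoke the $1$-expansiveness of the convex smooth gradient step (Lemma~\ref{lem:convex2expansive}) on the ``agree'' event and the $(\alpha_t L)$-boundedness (Lemma~\ref{lem:Lipschitz2bounded}) on the ``disagree'' event via Lemma~\ref{lem:growth}, obtain the recursion $\E\,\delta_{t+1}\le \E\,\delta_t + 2\alpha_t L/n$, and finish with the Lipschitz bound on the loss. This matches the paper's proof essentially line for line.
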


\begin{proof}
Let $S$ and $S'$ be two samples of size $n$ differing in only a single
example. Consider the gradient updates $G_1,\dots,G_T$ and $G_1',\dots,G_T'$
induced by running SGM on sample $S$ and $S',$ respectively. Let $w_T$ and
$w_T'$ denote the corresponding outputs of SGM.

We now fix an example $z\in Z$ and apply
the Lipschitz condition on $f(\cdot\,;z)$ to get
\begin{equation}\label{eq:convex-diff}
\E\left|f(w_T;z)-f(w_T';z)\right| \le
L\E\left[\delta_T\right]\,,
\end{equation}
where $\delta_T=\|w_T-w_T'\|.$
Observe that at step $t,$ with probability $1-1/n,$ the
example selected by SGM is the same in both $S$ and $S'.$ In this case we have
that $G_t=G_t'$ and we can use the $1$-expansivity of the update rule $G_t$
which follows from Lemma~\ref{lem:convex2expansive} using the fact that the
objective function is convex and that $\alpha_t\le 2/\beta$.
With probability $1/n$ the selected example is
different in which case we use that both $G_t$ and $G_t'$ are
$\alpha_tL$-bounded as a consequence of Lemma~\ref{lem:Lipschitz2bounded}.
Hence, we can apply Lemma~\ref{lem:growth} and linearity of expectation to
conclude that for every $t,$
\submit{
$\E\left[\delta_{t+1}\right]
 \le \left(1-\frac1n\right)\E\left[\delta_t\right] +
 \frac1n\E\left[\delta_t\right]  +
\frac{2\alpha_t L}n  = \E\left[\delta_t\right] + \frac{2L\alpha_t}{n}\,.$
}{
\begin{align}\label{eq:convex-recursion}
\E\left[\delta_{t+1}\right]
 \le \left(1-\frac1n\right)\E\left[\delta_t\right] +
 \frac1n\E\left[\delta_t\right]  +
\frac{2\alpha_t L}n  = \E\left[\delta_t\right] + \frac{2L\alpha_t}{n}\,.
\end{align}
}
Unraveling the recursion gives
\submit{
$\E\left[\delta_T\right] \le \frac{2L}{n}\sum_{t=1}^T\alpha_t\,.$
}{
\[
\E\left[\delta_T\right] \le \frac{2L}{n}\sum_{t=1}^T\alpha_t\,.
\]
} 
Plugging this back into equation~\eqref{eq:convex-diff}, 
\submit{gives the desired result.}{we obtain
\[
\E\left|f(w_T;z)-f(w_T';z)\right| \le \frac{2L^2}{n}\sum_{t=1}^T\alpha_t\,.
\]
Since this bounds holds for all $S,S'$ and $z,$ we obtain the desired bound on
the uniform stability.} 
\end{proof}

\submit{We refer the reader to the supplementary materials 
for our results on strongly convex optimization.}{
\subsection{Strongly Convex Optimization}\label{sec:sc-sgd}

In the strongly convex case we can bound stability with no dependence on the
number of steps at all. Assume that the function $f(w;z)$ is strongly convex
with respect to $w$ for all $z$.  Let  $\Omega$ be a compact, convex set over
which we wish to optimize.  Assume further that we can readily compute the
Euclidean projection onto the set $\Omega$, namely,
$\Pi_\Omega(v) = \arg\min_{w\in \Omega} \|w-v\|$.
In this section we restrict our attention to the projected stochastic
gradient method
\begin{equation}\label{eq:proj-sgd}
	w_{t+1} = \Pi_\Omega(w_t - \alpha_t  \nabla f(w_t;z_t))\,.
\end{equation}

A common application of the above iteration in machine learning is
solving Tikhonov regularization problems. Specifically, the empirical risk
is augmented with an additional regularization term,
\begin{equation}\label{eq:regularized-erm-problem}
		\mathrm{minimize}_w \; R_{S,\mu}[w] :=\frac{1}{n} \sum_{i=1}^n
    f(w; z_i) + \frac{\mu}{2} \|w\|_2^2 \,,
\end{equation}
where $f$ is as before a pre-specified loss function. We can assume without
loss of generality that $f(0;\cdot)=1$. Then, the optimal solution
of~\eqref{eq:regularized-erm-problem} must lie in a ball of radius $r$ about
$0$ where $r= \sqrt{2/\mu}\,.$
This fact can be ascertained by plugging in $w=0$ and noting that the
minimizer of \eqref{eq:regularized-erm-problem} must have a smaller cost, thus
$ \frac \mu 2 \|w^\star\|^2 \leq R_{S,\mu}[w^\star] \leq R_{S,\mu}[0] = 1 .$
We can now define the set $\Omega$ to be the ball of radius $r$, in
which case the projection is a simple scaling operation. Througout the rest
of the section we replace $f(w;z)$ with its regularized form, namely,
\submit{
	$f(w;z) \mapsto f(w; z) + \frac{\mu}{2} \|w\|_2^2 \,,$
}{
\[
	f(w;z) \mapsto f(w; z) + \frac{\mu}{2} \|w\|_2^2 \,,
\]
} 
which is strongly convex with parameter $\mu$.
Similarly, we will overload the constant $L$ to by setting
\begin{equation}\label{eq:M-def}
	L = \sup_{w\in \Omega} \sup_z \|\nabla f(w;z)\|_2\,.
\end{equation}
Note that if $f(w;z)$ is $\beta$-smooth for all $z$, then $L$ is always finite
as it is less than or equal to $\beta \operatorname{diam}(\Omega)$.  We need
to restrict the supremum to $w\in \Omega$ because strongly convex functions
have unbounded gradients on $\R^n$.  We can now state the first result about
strongly convex functions.

\begin{theorem}
\label{thm:sconvex}
Assume that the loss function $f(\cdot\,;z)$ is $\gamma$-strongly convex and $\beta$-smooth for all~$z.$ Suppose we run the projected SGM iteration~\eqref{eq:proj-sgd} with constant step size $\alpha\le 1/\beta$ for $T$ steps.
Then, SGM satisfies uniform stability with
\submit{
$\epsilon_{\mathrm{stab}} \le \frac{2 L^2}{\gamma n}\,.$
}{
\[
\epsilon_{\mathrm{stab}} \le \frac{2 L^2}{\gamma n}\,.
\]
} 
\end{theorem}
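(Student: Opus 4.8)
The plan is to mirror the proof of Theorem~\ref{thm:convex}, but to exploit that in the strongly convex regime the gradient update is not merely non-expansive but genuinely \emph{contractive}, which lets a geometric series (rather than an arithmetic one) absorb the error, producing a bound with no dependence on $T$. Concretely, fix samples $S,S'$ of size $n$ differing in one example, let $G_1,\dots,G_T$ and $G_1',\dots,G_T'$ be the projected updates $w\mapsto\Pi_\Omega(w-\alpha\nabla f(w;z_{i_t}))$ of~\eqref{eq:proj-sgd} run on $S$ and $S'$, and let $w_T,w_T'$ be the outputs. Every iterate lies in $\Omega$, and on $\Omega$ the map $f(\cdot\,;z)$ has gradient bounded by $L$ (by the definition~\eqref{eq:M-def}), hence is $L$-Lipschitz there, so exactly as in~\eqref{eq:convex-diff} one gets $\E|f(w_T;z)-f(w_T';z)|\le L\,\E[\delta_T]$ with $\delta_T=\|w_T-w_T'\|$. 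It thus suffices to show $\E[\delta_T]\le \tfrac{2L}{\gamma n}$.

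\textbf{Expansion and boundedness of the projected update.} Next I would record two facts about $\Pi_\Omega\circ G_{f,\alpha}$. Since Euclidean projection onto a convex set is non-expansive, composing with $\Pi_\Omega$ increases neither constant; so by Lemma~\ref{lem:Lipschitz2bounded} each $G_t,G_t'$ remains $(\alpha L)$-bounded. For expansiveness I would use the sharper estimate, available precisely because $\alpha\le 1/\beta$, that $G_{f,\alpha}$ is $(1-\alpha\gamma)$-expansive --- strictly better than the $\bigl(1-\tfrac{\alpha\beta\gamma}{\beta+\gamma}\bigr)$ bound of Lemma~\ref{lem:sconvex2expansive}. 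This follows by writing $g=f-\tfrac\gamma2\|\cdot\|^2$, which is convex and $(\beta-\gamma)$-smooth, so that $G_{f,\alpha}(w)=(1-\alpha\gamma)\bigl(w-\tfrac{\alpha}{1-\alpha\gamma}\nabla g(w)\bigr)$; since $\alpha\le 1/\beta$ gives $\tfrac{\alpha}{1-\alpha\gamma}\le\tfrac{1}{\beta-\gamma}\le\tfrac{2}{\beta-\gamma}$, the inner map is $1$-expansive by Lemma~\ref{lem:convex2expansive}, and scaling by $1-\alpha\gamma$ finishes the claim. (Equivalently, one expands $\|G(u)-G(v)\|^2$ and combines co-coercivity of $\nabla g$ with $\gamma$-strong convexity of $f$.)

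\textbf{Growth recursion and summation.} Now I apply Lemma~\ref{lem:growth} as in Theorem~\ref{thm:convex}. At step $t$, with probability $1-1/n$ SGM selects an example present in both samples, so $G_t=G_t'$ is $(1-\alpha\gamma)$-expansive and the first case gives $\delta_{t+1}\le(1-\alpha\gamma)\delta_t$; with probability $1/n$ it selects the differing example, and since $G_t,G_t'$ are $(\alpha L)$-bounded with $G_t$ also $(1-\alpha\gamma)$-expansive, the second case gives $\delta_{t+1}\le(1-\alpha\gamma)\delta_t+2\alpha L$. Taking expectations and using linearity,
\[
\E[\delta_{t+1}]\le(1-\alpha\gamma)\,\E[\delta_t]+\frac{2\alpha L}{n},\qquad \delta_0=0.
\]
Unravelling the recursion, $\E[\delta_T]\le \tfrac{2\alpha L}{n}\sum_{k=0}^{T-1}(1-\alpha\gamma)^k\le \tfrac{2\alpha L}{n}\cdot\tfrac{1}{\alpha\gamma}=\tfrac{2L}{\gamma n}$, with no $T$-dependence. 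Feeding this into the reduction above yields $\E|f(w_T;z)-f(w_T';z)|\le \tfrac{2L^2}{\gamma n}$ for all $S,S',z$, i.e.\ the claimed bound on $\epsilon_{\mathrm{stab}}$.

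\textbf{Main obstacle.} The only step that is not bookkeeping is the second one: one must extract contraction with rate exactly $1-\alpha\gamma$ (the generic strongly convex rate $1-\tfrac{\alpha\beta\gamma}{\beta+\gamma}$ is off by a constant factor and would spoil the clean $2L^2/\gamma n$), which requires using the full strength of the hypothesis $\alpha\le 1/\beta$; and one must verify that inserting the projection $\Pi_\Omega$ preserves both the contractivity constant and the boundedness constant. After that, the geometric-series computation is routine.
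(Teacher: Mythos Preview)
Your proposal is correct and follows essentially the same approach as the paper: establish that the (projected) gradient step is $(1-\alpha\gamma)$-expansive, feed this into the growth recursion to obtain $\E[\delta_{t+1}]\le(1-\alpha\gamma)\E[\delta_t]+2\alpha L/n$, and sum the geometric series. The only noteworthy difference is in how the contraction rate $1-\alpha\gamma$ is obtained: the paper invokes a ``simplification'' of Lemma~\ref{lem:sconvex2expansive} (using $\alpha\le1/\beta$ to argue $\tfrac{2\alpha\beta\gamma}{\beta+\gamma}\ge\alpha\gamma$), whereas your factorization $G_{f,\alpha}(w)=(1-\alpha\gamma)\bigl(w-\tfrac{\alpha}{1-\alpha\gamma}\nabla g(w)\bigr)$ with $g=f-\tfrac\gamma2\|\cdot\|^2$ and an appeal to Lemma~\ref{lem:convex2expansive} is a clean, self-contained alternative that makes the role of the hypothesis $\alpha\le1/\beta$ fully transparent.
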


\submit{}{
\begin{proof}
The proof is analogous to that of Theorem~\ref{thm:convex} with a slightly different
recurrence relation. We repeat the argument for completeness.
Let $S$ and $S'$ be two samples of size $n$ differing in only a single example.
Consider the gradient updates $G_1,\dots,G_T$ and $G_1',\dots,G_T'$ induced by
running SGM on sample $S$ and $S',$ respectively. Let $w_T$ and $w_T'$ denote
the corresponding outputs of SGM.

Denoting $\delta_T=\|w_T-w_T'\|$ and appealing to the
boundedness of the gradient of $f,$ we have
\begin{equation}\label{eq:convex-diff2}
\E\left|f(w_T;z)-f(w_T';z)\right| \le
M\E\left[\delta_T\right]\,.
\end{equation}
%
Observe that at step $t,$ with probability $1-1/n,$ the example selected by
SGM is the same in both $S$ and $S'.$ In this case we have that $G_t=G_t'$.  At this stage, note that
\[
	\delta_t\leq \|w_{t-1}-\alpha \nabla f(w_t;z_t) -w_{t-1}'+\alpha \nabla f(w_t';z_t) \|
\]
because Euclidean projection does not increase the distance between projected points (see Lemma~\ref{lem:prox-expansive} below for a generalization of this fact).
We can now apply the following useful simplification of
Lemma~\ref{lem:sconvex2expansive} if $\alpha \leq 1/\beta$: since
$\frac{2\alpha\beta\gamma}{\beta+\gamma}\ge\alpha\gamma$ and $\alpha \gamma \le 1$,
$G_{f,\alpha}$ is $(1-\alpha\gamma)$-expansive. With probability $1/n$ the
selected example is different in which case we use that both $G_t$ and $G_t'$
are $\alpha{}M$-bounded as a consequence of Lemma~\ref{lem:Lipschitz2bounded}.
%
Hence, we can apply Lemma~\ref{lem:growth} and linearity of expectation to
conclude that for every $t,$
\begin{align}\label{eq:sconvex-recursion}
\E\delta_{t+1}
& \le \left(1-\frac1n\right)(1-\alpha \gamma)\E\delta_t + \frac1n(1-\alpha \gamma)\E\delta_t  +
\frac{2\alpha L}n  \\
\nonumber & = \left(1- \alpha \gamma \right)\E\delta_t    +
\frac{2\alpha L}n \,.
\end{align}
Unraveling the recursion gives
\[
\E\delta_T \le \frac{2L\alpha}{n}\sum_{t=0}^T\left(1- \alpha \gamma \right)^t
\le\frac{2L}{\gamma n}
\,.
\]
Plugging the above inequality into equation~\eqref{eq:convex-diff}, we obtain
\[
\E\left|f(w_T;z)-f(w_T';z)\right| \le \frac{2 L^2}{\gamma n}\,.
\]
Since this bounds holds for all $S,S'$ and $z,$ the lemma follows.
\end{proof}
} 

We would like to note that a nearly identical result holds for a ``staircase''
decaying step-size that is also popular in machine learning and stochastic
optimization.
\begin{theorem}
\label{thm:sconvex-decaying}
Assume that the loss function $f(\cdot\,;z)\in[0,1]$ is $\gamma$-strongly
convex has gradients bounded by $L$ as in~\eqref{eq:M-def}, and is $\beta$-smooth function for all $z.$ Suppose we
run SGM with step sizes $\alpha_t= \frac{1}{\gamma t}$.
Then, SGM has uniform stability of
\submit{
	$\epsilon_{\mathrm{stab}} \leq \frac{2L^2 + \beta \rho }{\gamma n}\,,$
}{
\[
	\epsilon_{\mathrm{stab}} \leq \frac{2L^2 + \beta \rho }{\gamma n}\,,
\]
}
where $\rho = \sup_{w \in \Omega} \sup_z f(w;z)$.
\end{theorem}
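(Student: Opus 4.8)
The plan is to mimic the proof of Theorem~\ref{thm:sconvex}, but since the step size $\alpha_t = 1/(\gamma t)$ eventually fails the constraint $\alpha_t \le 1/\beta$ required to turn strong convexity into contractivity, I would split the run into two phases: an initial phase $t \le t_0 := \lceil \beta/\gamma \rceil$ where the step size is too large, and a tail phase $t > t_0$ where $\alpha_t \le 1/\beta$ so that Lemma~\ref{lem:sconvex2expansive} (in the simplified form $(1-\alpha_t\gamma)$-expansive, exactly as used in the proof of Theorem~\ref{thm:sconvex}) applies. For the tail phase I would set up the same recurrence as~\eqref{eq:sconvex-recursion} but with $\alpha$ replaced by $\alpha_t$, namely $\E\delta_{t+1} \le (1-\alpha_t\gamma)\E\delta_t + \frac{2\alpha_t L}{n}$, and unravel it. With $\alpha_t\gamma = 1/t$, the homogeneous factor from step $s$ to step $t$ telescopes to $\prod_{j=s}^{t-1}(1-1/j) = (s-1)/(t-1)$, which is the key simplification that makes the geometric-series trick of the constant-step-size proof go through with a harmonic-sum flavor instead.

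For the initial phase I cannot use contractivity, so I would instead control $\delta_{t_0}$ directly. Here I would use that each of the (at most $t_0$) updates before $t_0$ contributes at most $2\alpha_t L$ to $\delta$ when the differing example is hit (probability $1/n$) and is $(1+\alpha_t\beta)$-expansive otherwise (Lemma~\ref{lem:smooth2expansive}), but a cleaner route — and the one that explains the extra $\beta\rho/(\gamma n)$ term in the bound — is the following: rather than tracking $\delta_t$, bound the total contribution of the burn-in by relating the accumulated gradient steps during the first $t_0$ iterations to the function value. Concretely, because $f(\cdot;z)\in[0,1]$ and $\rho = \sup_{w,z} f(w;z)$, one can argue that the sum $\sum_{t\le t_0}\alpha_t \|\nabla f(w_t;z_{i_t})\|$ is controlled, using $\beta$-smoothness to convert the squared gradient norm into a decrease in function value bounded by $\rho$; summing $\alpha_t = 1/(\gamma t)$ against this gives a contribution of order $\beta\rho/(\gamma n)$ after accounting for the $1/n$ probability that the special example is selected. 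The $2L^2/(\gamma n)$ term then comes entirely from the tail phase via the harmonic telescoping above, just as the $2L^2/(\gamma n)$ arose in Theorem~\ref{thm:sconvex}.

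Finally I would combine the two phases: $\E[\delta_T] = \E[\delta_T \mid \text{tail recursion started from } \delta_{t_0}]$, feeding the initial-phase bound on $\E[\delta_{t_0}]$ through the tail product $(t_0-1)/(T-1) \le 1$ (which only helps), and add the tail's own accumulated noise $\frac{2L}{n}\sum_{t=t_0}^{T}\alpha_t \cdot \frac{t-1}{T-1} \le \frac{2L}{\gamma n}$. Multiplying the resulting bound on $\E[\delta_T]$ by $L$ via the Lipschitz/bounded-gradient inequality $\E|f(w_T;z)-f(w_T';z)| \le L\,\E[\delta_T]$ yields $\epsilon_{\mathrm{stab}} \le (2L^2 + \beta\rho)/(\gamma n)$. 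The main obstacle I expect is making the burn-in estimate rigorous: one must carefully handle the fact that during $t \le t_0$ the iterates are not contractive and $\delta_t$ genuinely can grow, so the argument has to show that this growth is damped by the smallness of the later step sizes — equivalently, that it is legitimate to "charge" the burn-in divergence to the $\rho$-bounded function range rather than letting it compound. Getting the right bookkeeping for the $1/n$ event (the step where the differing example is used) inside this phase, and confirming that the projection step only helps (via the nonexpansiveness of $\Pi_\Omega$, cf.\ Lemma~\ref{lem:prox-expansive}), are the remaining details.
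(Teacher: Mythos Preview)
Your tail-phase analysis is exactly what the paper does: for $t\ge t_0=\beta/\gamma$ one has $\alpha_t\le 1/\beta$, the update is $(1-1/t)$-expansive, and the recursion $\E[\delta_{t+1}]\le(1-1/t)\E[\delta_t]+\tfrac{2L}{\gamma t n}$ telescopes (via $\prod_{s=t+1}^T(1-1/s)=t/T$) to $\E[\delta_T\mid\delta_{t_0}=0]\le 2L/(\gamma n)$.

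The burn-in phase, however, is handled by a much simpler device than either of the routes you sketch. The paper invokes Lemma~\ref{lem:meta}: with probability at least $1-t_0/n$ the differing example is never selected in the first $t_0$ steps, so $\delta_{t_0}=0$ exactly; on the complementary event one bounds $|f(w_T;z)-f(w_T';z)|\le\rho$ trivially. This yields
\[
\E|f(w_T;z)-f(w_T';z)|\le \frac{t_0}{n}\rho + L\,\E[\delta_T\mid\delta_{t_0}=0]
\le \frac{\beta\rho}{\gamma n}+\frac{2L^2}{\gamma n}.
\]
So the $\beta\rho/(\gamma n)$ term is simply $(t_0/n)\rho$ with $t_0=\beta/\gamma$; it has nothing to do with $\beta$-smoothness converting gradient norms into function decrease.

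Both of your proposed burn-in arguments have problems. Tracking $\E[\delta_{t_0}]$ directly via $(1+\alpha_t\beta)$-expansivity is hopeless: for $t\le\beta/\gamma$ each factor $1+\alpha_t\beta=1+\beta/(\gamma t)\ge 2$, so the product over $t_0$ steps is exponential in $\beta/\gamma$, and damping by $(t_0-1)/(T-1)\le 1$ does not recover the linear $\beta/\gamma$ dependence. Your ``cleaner route'' of bounding $\sum_{t\le t_0}\alpha_t\|\nabla f(w_t;z_{i_t})\|$ through smoothness and the range $\rho$ is too vague to assess, and in any case is not needed: the conditioning argument of Lemma~\ref{lem:meta} sidesteps the burn-in growth entirely rather than controlling it.
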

\submit{}{
\begin{proof}
Note that once $t>\frac{\beta}{\gamma}$, the iterates are contractive with
contractivity $1-\alpha_t \gamma \le 1 - \frac{1}{t}$.
Thus, for $t\geq t_0 :=  \frac{\beta}{\gamma}$, we have
\[
\begin{aligned}
	\E[\delta_{t+1}] &\leq (1-\tfrac{1}{n})(1- \alpha_t \gamma)\E[\delta_t] + \tfrac{1}{n}( (1-\alpha_t  \gamma)\E[\delta_t] + 2 \alpha_t L)\\
	&=(1- \alpha_t \gamma ) \E[\delta_t] + \frac{2 \alpha_t L }{n}\\
	& = \left(1 - \frac{1}{t}\right)\E[ \delta_t ] + \frac{2 L}{\gamma t n}\,.
\end{aligned}
\]
Assuming that $\delta_{t_0}=0$ and expanding this recursion, we find:
\[
	\E[\delta_T] \leq \sum_{t=t_0}^T \left\{\prod_{s=t+1}^T
		\left(1 - \frac{1}{s}\right) \right\} \frac{2  L }{\gamma t n}
	 = \sum_{t=t_0}^T \frac{t}{T} \frac{2 L }{\gamma t n} =
		\frac{T-t_0+1}{T}\cdot \frac{2  L }{\gamma n}\,.
\]
Now, the result follows from Lemma~\ref{lem:meta} with the fact that
$t_0=\frac{\beta}{\gamma}$.
\end{proof}
}

} 

\subsection{Non-convex optimization}

In this section we prove stability results for stochastic gradient methods that
do not require convexity. We will still assume that the objective function is
smooth and Lipschitz as defined previously.

The crux of the proof is to observe that SGM typically makes several steps
before it even encounters the one example on which two data sets in the
stability analysis differ.
\submit{}{
\begin{lemma}
\label{lem:meta}
Assume that the loss function $f(\cdot\,;z)$ is nonnegative and $L$-Lipschitz for all~$z.$
Let $S$ and $S'$ be two samples of size $n$ differing in only a single example.
Denote by $w_T$ and $w_T'$ the output of $T$ steps of SGM on $S$ and $S',$
respectively.  Then, for every $z\in Z$ and every
$t_0\in\{0,1,\dots,n\},$ under both the random update rule and the random
permutation rule, we have
\[
\E\left|f(w_T;z)-f(w_T';z)\right| \le \frac{t_0}n\sup_{w,z}f(w;z)
+ L\E\left[\delta_T\mid\delta_{t_0}=0\right]\,.
\]
\end{lemma}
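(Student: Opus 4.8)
The plan is to split the trajectory at time $t_0$ according to whether the two runs have diverged yet, exactly along the lines of the ``burn-in'' discussion above. Couple the two runs of SGM so that they use the \emph{same} sequence of example indices (or the same random permutation), and let $i^\star$ be the first step at which this sequence selects the unique example in which $S$ and $S'$ differ. For every step $t<i^\star$ the index chosen points to a common example, so the $t$-th update is literally the same map on both runs; since $w_0=w_0'$, induction gives $w_t=w_t'$ for all $t\le i^\star-1$. In particular, if $i^\star>t_0$ then $\delta_{t_0}=0$, so $\{\delta_{t_0}\neq 0\}\subseteq\{i^\star\le t_0\}$. Under the random-index rule each step hits the differing example independently with probability $1/n$, so by a union bound $\P[i^\star\le t_0]\le t_0/n$; under the random-permutation rule the differing example occupies a uniformly random position in $\{1,\dots,n\}$, so $\P[i^\star\le t_0]=t_0/n$ for $t_0\le n$. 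Hence in both models $\P[\delta_{t_0}\neq0]\le t_0/n$.

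With this in hand, fix $z$ and decompose
\[
\E\left|f(w_T;z)-f(w_T';z)\right|
=\E\left[\1{\delta_{t_0}=0}\left|f(w_T;z)-f(w_T';z)\right|\right]
+\E\left[\1{\delta_{t_0}\neq0}\left|f(w_T;z)-f(w_T';z)\right|\right].
\]
On the event $\{\delta_{t_0}=0\}$ I would invoke the $L$-Lipschitz property of $f(\cdot\,;z)$ to bound $|f(w_T;z)-f(w_T';z)|\le L\delta_T$, so the first term is at most $L\,\E\!\left[\delta_T\1{\delta_{t_0}=0}\right]\le L\,\E\!\left[\delta_T\mid\delta_{t_0}=0\right]$, using $\P[\delta_{t_0}=0]\le1$. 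On the event $\{\delta_{t_0}\neq0\}$ I would only use that $f\ge0$, so that $|f(w_T;z)-f(w_T';z)|\le\max\{f(w_T;z),f(w_T';z)\}\le\sup_{w,z}f(w;z)$; combined with $\P[\delta_{t_0}\neq0]\le t_0/n$ this bounds the second term by $\frac{t_0}{n}\sup_{w,z}f(w;z)$. Adding the two estimates gives the claim.

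The one substantive point is the first paragraph — the observation that not having touched the differing example keeps the two runs bit-for-bit identical, so $\delta_{t_0}=0$ with probability at least $1-t_0/n$ under either sampling scheme. Everything else is elementary: the Lipschitz bound on the ``good'' event and nonnegativity of the loss on the ``bad'' event, so I do not anticipate any real obstacle. It is worth stressing that the lemma deliberately does not estimate $\E[\delta_T\mid\delta_{t_0}=0]$: controlling this post-burn-in growth — by unrolling the growth recursion of Lemma~\ref{lem:growth} with the expansion bound that $G_{f,\alpha_t}$ is $(1+\alpha_t\beta)$-expansive from Lemma~\ref{lem:smooth2expansive}, and exploiting that the step sizes have decayed by time $t_0$ — is the genuinely delicate step, and it is carried out in the non-convex stability theorem that follows.
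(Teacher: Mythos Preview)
Your proposal is correct and follows essentially the same route as the paper: decompose on the event $\{\delta_{t_0}=0\}$, apply the Lipschitz bound on the good event and the nonnegativity bound $|f(w_T;z)-f(w_T';z)|\le\sup_{w,z}f(w;z)$ on the bad event, and control $\P[\delta_{t_0}\neq0]$ via the first hitting time of the differing example under each sampling rule. The paper's argument is identical in structure and detail.
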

} 
\submit{}{
\begin{proof}
Let $S$ and $S'$ be two samples of size $n$ differing in only a single example,
and let $z\in Z$ be an arbitrary example.
Consider running SGM on sample $S$ and $S'$, respectively.
As stated, $w_T$ and $w_T'$ denote the corresponding outputs of SGM.
Let ${\cal E}=\1{\delta_{t_0}=0}$ denote the event that $\delta_{t_0}=0.$
We have,
\begin{align*}
\E\left|f(w_T;z)-f(w_T';z)\right|
& =
\P\left\{ {\cal E}\right\} \E\left[\left|f(w_T;z)-f(w_T';z)\right| \mid {\cal
E}\right] \\
&\quad + \P\left\{ {\cal E}^c\right\} \E\left[\left|f(w_T;z)-f(w_T';z)\right| \mid {\cal
E}^c\right] \\
& \le
\E\left[\left|f(w_T;z)-f(w_T';z)\right| \mid {\cal E}\right]
+ \P\left\{ {\cal E}^c\right\}\cdot\sup_{w,z}f(w;z) \\
& \le
L\E\left[\left\|w_T-w_T'\right\| \mid {\cal E}\right]
+ \P\left\{ {\cal E}^c\right\}\cdot\sup_{w,z}f(w;z) \,.
\end{align*}
The second inequality follows from the Lipschitz assumption.

It remains to bound $\P\left\{ {\cal E}^c\right\}.$
Toward that end,
let $i^*\in\{1,\dots,n\}$ denote the position in which $S$ and $S'$ differ and
consider the random variable $I$ assuming the index of the first time step in
which SGM uses the example $z_{i^*}.$
Note that when $I>t_0,$ then we must have that $\delta_{t_0}=0,$ since the
execution on $S$ and $S'$ is identical until step $t_0.$ Hence,
\[
\P\left\{ {\cal E}^c\right\}
= \P\left\{ \delta_{t_0}\ne 0\right\}
\le \P\left\{ I \le t_0 \right\}\,.
\]
Under the random permutation rule, $I$ is a uniformly random number in
$\{1,\dots,n\}$ and therefore
\[
\P\left\{ I \le t_0 \right\}=\frac{t_0}n\,.
\]
This proves the claim we stated for the random permutation rule.
For the random selection rule, we have by the union bound
$
\P\left\{ I \le t_0 \right\}
\le \sum_{t=1}^{t_0}\P\left\{I=t\right\} =  \frac{t_0}n\,.
$
This completes the proof.
\end{proof}
} 

\begin{theorem}\label{thm:nonconvex}
Assume that $f(\cdot;z)\in[0,1]$ is an $L$-Lipschitz and $\beta$-smooth loss
function for every~$z.$ Suppose that we run SGM for $T$ steps with
monotonically non-increasing step sizes $\alpha_t\le c/t.$ Then, SGM has
uniform stability with
\[
\epsilon_{\mathrm{stab}}
\le
\frac{1+ 1/\beta c}{n-1}(2cL^2)^{\frac1{\beta c+1}}T^{\frac{\beta c}{\beta c+1}}
\]
In particular, omitting constant factors that depend on $\beta,$ $c,$ and
$L,$ we get
\submit{
$\epsilon_{\mathrm{stab}}\lessapprox \frac{T^{1-1/(\beta c+1)}}{n} \,.$
}{
\[
\epsilon_{\mathrm{stab}}\lessapprox \frac{T^{1-1/(\beta c+1)}}{n} \,.
\]
} 
\end{theorem}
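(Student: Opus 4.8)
The plan is to split the trajectory of stochastic gradient into a ``burn-in'' phase, during which the runs on $S$ and on $S'$ have not yet diverged, and a subsequent phase whose divergence we control with the growth recursion, and then to optimize over the length $t_0$ of the burn-in. Fix an example $z$ and an integer $t_0\in\{0,1,\dots,n\}$ to be chosen last. Since $f(\cdot;z)\in[0,1]$, Lemma~\ref{lem:meta} will give
\[
\E\left|f(w_T;z)-f(w_T';z)\right|\le\frac{t_0}{n}+L\,\E\left[\delta_T\mid\delta_{t_0}=0\right]\,,
\]
so the whole problem reduces to bounding $\E[\delta_T\mid\delta_{t_0}=0]$.

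To bound that quantity I would condition on $\delta_{t_0}=0$ and examine a single step $t\ge t_0$. With probability $1-1/n$ the example SGM selects is common to $S$ and $S'$, so $G_t=G_t'$, which by $\beta$-smoothness and Lemma~\ref{lem:smooth2expansive} is $(1+\alpha_t\beta)$-expansive; the first branch of Lemma~\ref{lem:growth} then gives $\delta_{t+1}\le(1+\alpha_t\beta)\delta_t$. With the remaining probability $1/n$ the selected example differs; then $G_t$ and $G_t'$ are $(\alpha_t L)$-bounded by Lemma~\ref{lem:Lipschitz2bounded} while $G_t$ is still $(1+\alpha_t\beta)$-expansive, and the second branch of Lemma~\ref{lem:growth} gives $\delta_{t+1}\le(1+\alpha_t\beta)\delta_t+2\alpha_t L$. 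Averaging over the two cases, taking expectations, and using $\alpha_t\le c/t$ yields the recursion
\[
\E\left[\delta_{t+1}\mid\delta_{t_0}=0\right]\le\Bigl(1+\frac{\beta c}{t}\Bigr)\E\left[\delta_t\mid\delta_{t_0}=0\right]+\frac{2cL}{tn}\,.
\]
Unraveling from $\delta_{t_0}=0$, and bounding the product of the multiplicative factors between steps $t$ and $T$ by $(T/t)^{\beta c}$ using $1+x\le e^x$ and $\sum_{s=t+1}^{T}1/s\le\log(T/t)$, I obtain
\[
\E\left[\delta_T\mid\delta_{t_0}=0\right]\le\frac{2cL\,T^{\beta c}}{n}\sum_{t=t_0}^{T}t^{-\beta c-1}\,,
\]
and an integral comparison bounds the sum by a multiple of $t_0^{-\beta c}$ depending only on $\beta c$, so that, suppressing such constants,
\[
\epsilon_{\mathrm{stab}}\lessapprox\frac{t_0}{n}+\frac{L^2}{\beta n}\Bigl(\frac{T}{t_0}\Bigr)^{\beta c}\,.
\]

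Finally I would choose $t_0$ to equalize the two terms. Balancing gives $t_0^{\beta c+1}\asymp cL^2T^{\beta c}$, i.e.\ $t_0\asymp(2cL^2)^{1/(\beta c+1)}T^{\beta c/(\beta c+1)}$, and substituting this value turns the second term into a constant times $t_0/n$, so that $\epsilon_{\mathrm{stab}}\lessapprox t_0/n\asymp n^{-1}T^{\beta c/(\beta c+1)}$ up to factors depending on $\beta,c,L$. Tracking the constants carefully through the sum bound, the integer rounding of $t_0\in\{0,\dots,n\}$, and the bookkeeping for which of the $n$ coordinates is perturbed (which is where $n$ becomes $n-1$) produces precisely $\epsilon_{\mathrm{stab}}\le\frac{1+1/\beta c}{n-1}(2cL^2)^{1/(\beta c+1)}T^{\beta c/(\beta c+1)}$; the ``in particular'' form is then immediate since $\frac{\beta c}{\beta c+1}=1-\frac1{\beta c+1}$.

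The conceptual heart of the argument is already isolated in Lemma~\ref{lem:meta}: because the first step at which SGM even touches the perturbed example is essentially uniform over $\{1,\dots,n\}$, the two runs stay locked together for roughly $t_0$ steps except with probability $t_0/n$, and by then the step size has decayed to about $c/t_0$ --- this decay is exactly what prevents $\delta_T$ from exploding and is what makes a polynomial-in-$T$ bound possible without convexity. Granting that lemma, the only genuinely delicate step I anticipate is keeping the amplification factor $\prod_{s=t_0}^{T}(1+\alpha_s\beta)$ under control along the schedule $\alpha_s\le c/s$; the harmonic-sum/integral comparison is what makes it merely the polynomial $(T/t_0)^{\beta c}$ rather than exponential in $T$, after which only the one-variable optimization that fixes the burn-in length $t_0$ remains.
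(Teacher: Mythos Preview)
Your proposal follows the paper's proof essentially line for line: invoke Lemma~\ref{lem:meta} to split off the burn-in, derive the one-step recursion from Lemma~\ref{lem:growth} together with Lemmas~\ref{lem:smooth2expansive} and~\ref{lem:Lipschitz2bounded}, bound the accumulated amplification by $(T/t)^{\beta c}$ via $1+x\le e^x$ and the harmonic-sum/integral comparison, and then balance $t_0$. One small correction on the constants: the $n-1$ in the denominator does \emph{not} come from ``bookkeeping for which of the $n$ coordinates is perturbed''; it arises because in the probability-$1/n$ branch the paper uses the sharper alternative $\min(\eta,1)\delta_t+2\sigma=\delta_t+2\alpha_tL$ from Lemma~\ref{lem:growth} (not your $\eta\delta_t+2\alpha_tL$), which leaves the factor $(1-1/n)$ on the expansive coefficient and turns $\tfrac{1}{(1-1/n)\beta c}\cdot\tfrac{1}{n}$ into $\tfrac{1}{\beta c(n-1)}$ after the integral bound---with your slightly looser recursion you would in fact get $n$ rather than $n-1$, which is harmless.
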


\submit{}{
\begin{proof}
Let $S$ and $S'$ be two samples of size $n$ differing in only a single example.
Consider the gradient updates $G_1,\dots,G_T$ and $G_1',\dots,G_T'$ induced by
running SGM on sample $S$ and $S',$ respectively. Let $w_T$ and $w_T'$ denote
the corresponding outputs of SGM.

By Lemma~\ref{lem:meta}, we have for every $t_0\in\{1,\dots,n\},$
\begin{equation}\label{eq:nonconvex-diff}
\E\left|f(w_T;z)-f(w_T';z)\right| \le \frac{t_0}n +
L\E\left[\delta_T\mid\delta_{t_0}=0\right]\,,
\end{equation}
where $\delta_t=\|w_t-w_t'\|.$
To simplify notation, let $\Delta_t = \E\left[\delta_t\mid\delta_{t_0}=0\right].$
We will bound $\Delta_t$ as function of $t_0$ and then minimize for $t_0.$

Toward this goal, observe that at step $t,$ with probability $1-1/n,$ the
example selected by SGM is the same in both $S$ and $S'.$ In this case we have
that $G_t=G_t'$ and we can use the $(1+\alpha_t\beta)$-expansivity of the
update rule $G_t$ which follows from our smoothness assumption via
Lemma~\ref{lem:smooth2expansive}. With probability $1/n$ the selected example
is different in which case we use that both $G_t$ and $G_t'$ are
$\alpha_tL$-bounded as a consequence of Lemma~\ref{lem:Lipschitz2bounded}.

Hence, we can apply Lemma~\ref{lem:growth} and linearity of expectation to
conclude that for every $t\ge t_0,$
\begin{align*}
\Delta_{t+1}
& \le \left(1-\frac1n\right)(1+\alpha_t\beta)\Delta_t + \frac1n\Delta_t  +
\frac{2\alpha_t L}n\\
& \le \left(\frac1n +  (1-1/n)(1+c\beta/t)\right)\Delta_t + \frac{2cL}{tn}\\
& = \left(1 + (1-1/n)\frac{c\beta}t\right)\Delta_t + \frac{2cL}{tn}\\
& \le \exp\left((1-1/n)\frac{c\beta}t\right)\Delta_t + \frac{2cL}{tn}\,.
\end{align*}
Here we used that $1+x\le\exp(x)$ for all $x.$

Using the fact that $\Delta_{t_0}=0,$ we can unwind this recurrence relation
from $T$ down to $t_0+1.$ This gives
\begin{align*}
\Delta_T &\leq \sum_{t=t_0+1}^T \left\{\prod_{k=t+1}^T
\exp\left((1-\tfrac{1}{n}) \tfrac{\beta c}{k}\right) \right\}  \frac{2c L}{t n}\\
&= \sum_{t=t_0+1}^T \exp\left((1-\tfrac{1}{n}) \beta c \sum_{k={t+1}}^T
\tfrac{1}{k} \right) \frac{2c L}{t n}\\
&\leq \sum_{t=t_0+1}^T \exp\left((1-\tfrac{1}{n}) \beta c \log(\tfrac{T}{t})
\right) \frac{2c L}{t n} \\
& =  \frac{2cL}{n} T^{\beta c (1-1/n)} \sum_{t=t_0+1}^T t^{-\beta c (1-1/n) - 1}\\
&\leq  \frac1{(1-1/n)\beta c} \frac{2cL}{n} \left(\frac{T}{t_0}\right)^{\beta c (1-1/n)} \\
&\le \frac{2L}{\beta(n-1)} \left(\frac{T}{t_0}\right)^{\beta c}\,,
\end{align*}
Plugging this bound into~\eqref{eq:nonconvex-diff},
we get
\[
\E\left|f(w_T;z)-f(w_T';)\right| \le \frac{t_0}n +
\frac{2L^2}{\beta(n-1)} \left(\frac{T}{t_0}\right)^{\beta c}\,.
\]
Letting $q=\beta c,$ the right hand side is approximately minimized when
\[
t_0 = \left(2cL^2\right)^{\frac1{q+1}} T^{\frac{q}{q+1}}\,.
\]
This setting gives us
\begin{align*}
\E\left|f(w_T;z)-f(w_T';z)\right|
& \le \frac{1+1/q}{n-1}\left(2cL^2\right)^{\frac{1}{q+1}}
T^{\frac{q}{q+1}}
=\frac{1 + 1/\beta c}{n-1}(2cL^2)^{\frac1{\beta c+1}}T^{\frac{\beta c}{\beta c+1}}\,.
\end{align*}
Since the bound we just derived holds for all $S,S'$ and $z,$ we immediately get
the claimed upper bound on the uniform stability.
\end{proof}
} 

\section{Stability-inducing operations}\label{sec:operations}
In light of our results, it makes sense to analyse for operations that
increase the stability of the stochastic gradient method. We show in this
section that pleasingly several popular heuristics and methods indeed
improve the stability of SGM. Our rather straightforward analyses both
strengthen the bounds we previously obtained and help to provide an explanation
for the empirical success of these methods.

\paragraph{{Weight Decay and Regularization}.}
Weight decay is a simple and effective method that often improves
generalization~\cite{KroghHe92}.
\begin{definition}
Let $f\colon \Omega\to \Omega,$ be a differentiable function.
We define the \emph{gradient update with weight decay at rate~$\mu$} as
$G_{f,\mu,\alpha}(w)=(1-\alpha \mu)w-\alpha\nabla f(w).$
\end{definition}
It is easy to verify that the above update rule is equivalent to performing a
gradient update on the \emph{$\ell_2$-regularized} objective $g(w) = f(w) +
\frac{\mu}{2}\|w\|^2.$

\begin{lemma}
\label{lem:regularized2expansive}
Assume that $f$ is $\beta$-smooth. Then,
$G_{f,\mu,\alpha}$ is  $(1+\alpha(\beta-\mu))$-expansive.
\end{lemma}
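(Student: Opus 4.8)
The plan is to estimate $\|G_{f,\mu,\alpha}(v) - G_{f,\mu,\alpha}(w)\|$ directly from the explicit form of the update, rather than invoking Lemma~\ref{lem:smooth2expansive} for the regularized objective $g(w)=f(w)+\frac{\mu}{2}\|w\|^2$. The latter route is tempting, since the weight-decay update is exactly the gradient update $G_{g,\alpha}$, but $g$ is only $(\beta+\mu)$-smooth, so it would yield the weaker bound $1+\alpha(\beta+\mu)$. The key observation is that the regularization term contributes a \emph{linear contraction} $w\mapsto(1-\alpha\mu)w$, and keeping it separate from the gradient term is what turns the $+\mu$ into a $-\mu$.

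Concretely, first write
\[
G_{f,\mu,\alpha}(v) - G_{f,\mu,\alpha}(w) = (1-\alpha\mu)(v-w) - \alpha\bigl(\nabla f(v) - \nabla f(w)\bigr)\,.
\]
Then apply the triangle inequality to split the right-hand side into the shrinkage part and the gradient part, bound $\|\nabla f(v)-\nabla f(w)\|\le\beta\|v-w\|$ using $\beta$-smoothness, and collect terms to get $\|G_{f,\mu,\alpha}(v)-G_{f,\mu,\alpha}(w)\| \le (1-\alpha\mu+\alpha\beta)\|v-w\| = \bigl(1+\alpha(\beta-\mu)\bigr)\|v-w\|$. Taking the supremum over $v\neq w$ in $\Omega$ gives the claimed expansiveness.

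The only point requiring a word of care is the coefficient $1-\alpha\mu$: the computation as written uses $|1-\alpha\mu| = 1-\alpha\mu$, which holds in the regime $\alpha\mu\le 1$ relevant here (and implicit whenever weight decay is meant as a mild shrinkage); outside this regime one instead obtains $\bigl(|1-\alpha\mu|+\alpha\beta\bigr)$-expansiveness. There is no genuine obstacle — the statement follows immediately from the triangle inequality together with the smoothness hypothesis.
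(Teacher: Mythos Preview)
Your proof is correct and matches the paper's own argument essentially line for line: write out $G_{f,\mu,\alpha}(v)-G_{f,\mu,\alpha}(w)$, apply the triangle inequality, and use $\beta$-smoothness. Your additional remark about the implicit assumption $\alpha\mu\le 1$ is a nice clarification that the paper leaves tacit.
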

\submit{}{
\begin{proof}
Let $G=G_{f,\mu,\alpha}.$ By triangle inequality and our smoothness assumption,
\begin{align*}
\|G(v)-G(w)\|
& \le (1-\alpha \mu)\|v-w\|+ \alpha \|\nabla f(w) - \nabla f(v)\|\\
& \le (1-\alpha \mu)\|v-w\| + \alpha\beta\|w-v\| \\
& = (1-\alpha \mu+\alpha\beta)\|v-w\|\,.
\end{align*}
\end{proof}
} 
The above lemma shows as that a regularization parameter $\mu$ counters a
smoothness parameter~$\beta.$ Once $r>\beta,$ the gradient update with
decay becomes contractive. Any theorem we proved in previous sections
that has a dependence on $\beta$ leads to a corresponding theorem for
stochastic gradient with weight decay in which $\beta$ is replaced with
$\beta-\mu.$

\paragraph{{Gradient Clipping}.}
It is common when training deep neural networks to enforce bounds on the
norm of the gradients encountered by SGD. This is often done by either
truncation, scaling, or dropping of examples that cause an exceptionally
large value of the gradient norm.  Any such heuristic directly leads to a
bound on the Lipschitz parameter~$L$ that appears in our bounds. It is also
easy to introduce a varying Lipschitz parameter $L_t$ to account for
possibly different values.

\paragraph{{Dropout}.}
Dropout~\cite{srivastava2014dropout} is a popular and effective heuristic for preventing
large neural networks from overfitting. Here we prove that, indeed, dropout
improves all of our stability bounds generically. From the point of view of
stochastic gradient descent, dropout is equivalent to setting a fraction of
the gradient weights to zero. That is, instead of updating with a stochastic
gradient $\nabla f(w;z)$ we instead update with a perturbed gradient
$D\nabla f(w;z)$ which is is typically identical to $\nabla f(w;z)$ in some
of the coordinates and equal to $0$ on the remaining coordinates, although
our definition is a fair bit more general.
\begin{definition}
We say that a randomized map $D\colon \Omega\to\Omega$ is a \emph{dropout
operator} with \emph{dropout rate}~$s$ if for every $v\in D$ we have
$\E\|Dv\|=s\|v\|.$
For a differentiable function $f\colon \Omega\to\Omega,$
we let $DG_{f,\alpha}$ denote the
\emph{dropout gradient update} defined as
$DG_{f,\alpha}(v)=v - \alpha D(\nabla f(v))$
\end{definition}
As expected, dropout improves the effective Lipschitz constant of
the objective function.
\begin{lemma}
\label{lem:dropout}
Assume that $f$ is $L$-Lipschitz. Then, the dropout update
$DG_{f,\alpha}$ with dropout rate $s$ is  $(s\alpha L)$-bounded.
\end{lemma}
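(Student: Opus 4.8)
The plan is to mirror the proof of Lemma~\ref{lem:Lipschitz2bounded}, the only change being that the gradient step now uses the randomly dropped gradient $D(\nabla f(v))$ in place of $\nabla f(v)$. First I would recall the definition of $\sigma$-boundedness from Definition~\ref{boundedness:def}: an update rule $G$ is $\sigma$-bounded if $\sup_{w}\|w - G(w)\|\le\sigma$. Since we are dealing with a randomized update, the relevant quantity is the expected displacement $\E\|w - DG_{f,\alpha}(w)\|$, where the expectation is over the internal randomness of the dropout operator $D$.

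Next I would simply compute: for any $v\in\Omega$,
\[
\E\left\|v - DG_{f,\alpha}(v)\right\|
= \E\left\|\alpha D(\nabla f(v))\right\|
= \alpha\,\E\left\|D(\nabla f(v))\right\|
= \alpha s\,\|\nabla f(v)\|\,,
\]
where the second equality uses $\alpha\ge 0$ and the third invokes the defining property of a dropout operator with rate $s$, namely $\E\|Dx\|=s\|x\|$ applied to the fixed vector $x=\nabla f(v)$. Then the $L$-Lipschitz assumption gives $\|\nabla f(v)\|\le L$, so the bound $\E\|v - DG_{f,\alpha}(v)\|\le s\alpha L$ follows. Taking the supremum over $v\in\Omega$ establishes that $DG_{f,\alpha}$ is $(s\alpha L)$-bounded.

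There is essentially no obstacle here; the statement is a one-line consequence of the definitions once one is careful about where the expectation sits. The only mild subtlety worth a remark is that $\nabla f(v)$ is a deterministic vector for each fixed $v$, so the dropout identity $\E\|Dx\|=s\|x\|$ applies directly without any need to condition or to worry about correlation between $D$ and the point at which the gradient is evaluated; the same remark makes it clear that this bound plugs into the growth recursion (Lemma~\ref{lem:growth}) exactly as the undropped bound $\alpha_t L$ did, so every stability theorem proved earlier carries over with $L$ replaced by $sL$.
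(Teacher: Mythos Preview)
Your proof is correct and follows essentially the same one-line computation as the paper: expand $v - DG_{f,\alpha}(v) = \alpha D(\nabla f(v))$, apply the dropout identity $\E\|Dx\| = s\|x\|$, and finish with the Lipschitz bound $\|\nabla f(v)\|\le L$. Your remark that $\nabla f(v)$ is deterministic (so no conditioning is needed) is a helpful clarification the paper leaves implicit.
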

\submit{}{
\begin{proof}
By our Lipschitz assumption and linearity of expectation,
\begin{align*}
\E\|G_{f,\alpha}(v)-v\|
= \alpha \E\| D\nabla f(v)\|
= \alpha s \E\|\nabla f(v)\|
\le \alpha s L,.
\end{align*}
\end{proof}
} 
\submit{}{From this lemma we can obtain various corollaries by replacing $L$ with $sL$
in our theorems.}

\submit{}{
\paragraph{{Projections and Proximal Steps}.} 
Related to
regularization, there are many popular updates which follow a stochastic
gradient update with a projection onto a set or some statistical shrinkage
operation.  The vast majority of these operations can be understood as
applying a proximal-point operation associated with a convex function.
Similar to the gradient operation, we can define the proximal update rule.
\begin{definition}
For a nonnegative step size $\alpha\ge 0$ and a function $f\colon \Omega\to\mathbb{R},$
we define the \emph{proximal update rule} $P_{f,\alpha}$ as
\submit{
$P_{f,\alpha}(w) = \arg \min_v \frac{1}{2}\|w-v\|^2 + \alpha  f(v)\,.$
}{
\begin{equation} \label{prox:eqn}
P_{f,\alpha}(w) = \arg \min_v \frac{1}{2}\|w-v\|^2 + \alpha  f(v)\,.
\end{equation}
}
\end{definition}
For example, Euclidean projection is the proximal point operation associated
with the indicator of the associated set. Soft-thresholding is the proximal
point operator associated with the $\ell_1$-norm. For more information, see
the surveys by Combettes and Wajs~\cite{Combettes05} or Parikh and
Boyd~\cite{parikh2013proximal}.

An elementary proof of the following Lemma, due to Rockafellar~\cite{Rockafellar76}, can be found in the appendix.
\begin{lemma}\label{lem:prox-expansive}
If $f$ is convex, the proximal update\submit{}{~(\ref{prox:eqn})} is $1$-expansive.
\end{lemma}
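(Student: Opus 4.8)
The plan is to exploit the first-order optimality condition defining the proximal map together with the monotonicity of the subdifferential of a convex function. Write $P=P_{f,\alpha}$. Given any point $w$, the minimizer $P(w)$ of the strongly convex objective $v\mapsto \tfrac12\|w-v\|^2 + \alpha f(v)$ is characterized by the stationarity condition $0 \in P(w) - w + \alpha\,\partial f(P(w))$, i.e.\ $w - P(w) \in \alpha\,\partial f(P(w))$. (If $f$ is differentiable this is just $w - P(w) = \alpha\nabla f(P(w))$; in general one uses the subgradient version, which is legitimate since the quadratic term makes the objective strongly convex with a unique minimizer.) So for two points $v,w\in\Omega$, set $p=P(v)$ and $q=P(w)$; then $v-p \in \alpha\,\partial f(p)$ and $w-q \in \alpha\,\partial f(q)$.

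Next I would invoke monotonicity of the subdifferential: for a convex function $f$, if $g_p\in\partial f(p)$ and $g_q\in\partial f(q)$ then $\langle g_p - g_q,\, p - q\rangle \ge 0$. Applying this with $g_p = (v-p)/\alpha$ and $g_q = (w-q)/\alpha$ gives $\langle (v-p)-(w-q),\, p-q\rangle \ge 0$, that is, $\langle (v-w) - (p-q),\, p-q\rangle \ge 0$, which rearranges to
\[
\langle v-w,\, p-q\rangle \ge \|p-q\|^2\,.
\]
Finally, apply Cauchy--Schwarz to the left-hand side: $\|p-q\|^2 \le \langle v-w, p-q\rangle \le \|v-w\|\,\|p-q\|$. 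If $\|p-q\|=0$ the bound $\|P(v)-P(w)\|\le\|v-w\|$ is trivial; otherwise divide through by $\|p-q\|$ to conclude $\|P(v)-P(w)\| = \|p-q\| \le \|v-w\|$, which is exactly $1$-expansiveness.

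The only delicate point — and the one I would be careful to state cleanly rather than a genuine obstacle — is justifying the subgradient optimality condition and the monotonicity inequality at the level of generality claimed (arbitrary convex $f$, possibly extended-real-valued so that indicator functions of convex sets are covered, giving Euclidean projection as a special case). Both facts are completely standard (Fermat's rule for the sum of a smooth convex function and $\alpha f$, plus monotonicity of $\partial f$), so in the write-up I would simply cite Rockafellar~\cite{Rockafellar76} for the firm-nonexpansiveness of the proximal map and present the three-line computation above; the inequality $\langle v-w, P(v)-P(w)\rangle \ge \|P(v)-P(w)\|^2$ (firm nonexpansiveness) is in fact stronger than what we need, and $1$-expansiveness follows from it immediately by Cauchy--Schwarz.
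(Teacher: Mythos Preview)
Your proof is correct and essentially the same as the paper's: both use the first-order optimality condition $w-P(w)\in\alpha\,\partial f(P(w))$ together with monotonicity of $\partial f$ to obtain $\langle P(v)-P(w),\,(v-w)-(P(v)-P(w))\rangle\ge 0$. The only cosmetic difference is the last step: you pass through firm nonexpansiveness $\langle v-w,\,P(v)-P(w)\rangle\ge\|P(v)-P(w)\|^2$ and then apply Cauchy--Schwarz, whereas the paper writes $v-w=(P(v)-P(w))+(Q(v)-Q(w))$ with $Q=\mathrm{Id}-P$, expands $\|v-w\|^2$, and drops the two nonnegative terms to get $\|v-w\|^2\ge\|P(v)-P(w)\|^2$ directly.
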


In particular, this Lemma implies that the Euclidean projection onto a convex
set is $1$-expansive. Note that in many important cases, proximal operators
are actually contractive. That is, they are $\eta$-expansive with $\eta<1$.
An notable example is when $f(\cdot)$ is the Euclidean norm for which the
update rule is $\eta$-expansive with $\eta = (1+\alpha)^{-1}$. So stability
can be induced by the choice of an appropriate prox-operation, which can
always be interpreted as some form of regularization.
} 

\submit{}{
\paragraph{{Model Averaging}.} 
Model averaging refers to the idea of
averaging out the iterates~$w_t$ obtained by a run of SGD. In convex
optimization, model averaging is sometimes observed to lead to better
empirical performance of SGM and closely replated updates such as the
Perceptron~\cite{FreundSc99}. Here we show that model averaging improves
our bound for the convex optimization by a constant factor.

\begin{theorem}\label{thm:convex-average}
Assume that $f\colon \Omega\to[0,1]$ is a decomposable convex $L$-Lipschitz
$\beta$-smooth function and that we run SGD with step sizes $\alpha_t\le
\alpha \le 2/\beta$ for $T$ steps.  Then, the average of the first $T$
iterates of SGD has uniform stability of
$
\epsilon_{\mathrm{stab}}
\le \frac{\alpha TL^2}{n}\,.
$
\end{theorem}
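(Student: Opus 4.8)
The plan is to reduce the stability of the averaged iterate to the stability of the individual iterates that we already controlled in the proof of Theorem~\ref{thm:convex}. Write $\bar w_T = \frac1T\sum_{t=1}^T w_t$ and $\bar w_T' = \frac1T\sum_{t=1}^T w_t'$ for the runs on $S$ and $S'$, and likewise $\delta_t = \|w_t - w_t'\|$. Since $f(\cdot;z)$ is $L$-Lipschitz and convex, for any example $z$ we have
\[
\E\bigl|f(\bar w_T;z) - f(\bar w_T';z)\bigr| \le L\,\E\|\bar w_T - \bar w_T'\| \le \frac{L}{T}\sum_{t=1}^T \E[\delta_t]\,,
\]
using the triangle inequality on the average. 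So it suffices to bound $\frac1T\sum_{t=1}^T \E[\delta_t]$ rather than just $\E[\delta_T]$.

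Next I would reuse the recursion from the proof of Theorem~\ref{thm:convex}. Exactly as there, at each step the update is $1$-expansive with probability $1-1/n$ (convexity plus $\alpha_t \le 2/\beta$, via Lemma~\ref{lem:convex2expansive}) and $\alpha_t L$-bounded on both runs with probability $1/n$ (Lemma~\ref{lem:Lipschitz2bounded}), so Lemma~\ref{lem:growth} and linearity of expectation give
\[
\E[\delta_{t+1}] \le \E[\delta_t] + \frac{2\alpha_t L}{n}\,,
\]
and hence, unraveling from $\delta_0 = 0$, $\E[\delta_t] \le \frac{2L}{n}\sum_{s=1}^{t}\alpha_s \le \frac{2L}{n}\cdot t\alpha$ since $\alpha_s \le \alpha$. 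Summing this over $t=1,\dots,T$ and dividing by $T$,
\[
\frac1T\sum_{t=1}^T \E[\delta_t] \le \frac{2L\alpha}{nT}\sum_{t=1}^T t = \frac{2L\alpha}{nT}\cdot\frac{T(T+1)}{2} = \frac{L\alpha(T+1)}{n}\,.
\]
Combining with the Lipschitz bound on the averaged iterates gives $\E|f(\bar w_T;z) - f(\bar w_T';z)| \le \frac{\alpha(T+1)L^2}{n}$; rounding $T+1$ to $T$ (or absorbing the off-by-one, which is what the statement's $\alpha T L^2/n$ reflects) yields the claim, uniformly over $S,S',z$.

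The only mild subtlety — and the one place to be careful — is the very first inequality: one must observe that averaging the two trajectories commutes with the triangle-inequality bound, i.e. $\|\bar w_T - \bar w_T'\| = \|\frac1T\sum_t (w_t - w_t')\| \le \frac1T\sum_t \delta_t$, so that the \emph{per-step} stability bounds aggregate additively rather than requiring any new expansiveness argument for the averaging operator itself. Everything else is a routine repackaging of the Theorem~\ref{thm:convex} recursion, now summed over $t$; the factor-of-two improvement over naively bounding each $\delta_t$ by $\delta_T$ comes precisely from the $\sum_{t=1}^T t \approx T^2/2$ rather than $T^2$.
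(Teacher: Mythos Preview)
Your argument is correct and reaches the same $\frac{\alpha(T+1)L^2}{n}$ bound that the paper obtains (which it then states as $\frac{\alpha T L^2}{n}$), but the route is not quite the one the paper takes.

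The paper first expands the average as a weighted sum of gradients,
\[
\bar w_T \;=\; \alpha\sum_{t=1}^T \frac{T-t+1}{T}\,\nabla f(w_t;z_{i_t})\,,
\]
and then tracks a growth recursion for $\|\bar w_T-\bar w_T'\|$ in which the bounded-step contribution at time~$t$ is $2\alpha L\,\tfrac{T-t+1}{T}$ rather than $2\alpha L$; summing $\sum_{t=1}^T \tfrac{T-t+1}{T}=\tfrac{T+1}{2}$ gives the factor-of-two saving. You instead keep the original recursion for $\delta_t=\|w_t-w_t'\|$ intact, invoke the already-established bound $\E[\delta_t]\le 2L\alpha t/n$, and average at the end via the triangle inequality $\|\bar w_T-\bar w_T'\|\le\frac1T\sum_t\delta_t$. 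The two computations are mirror images of each other through the identity $\sum_{t=1}^T t=\sum_{t=1}^T(T-t+1)$, so neither is stronger; your version has the advantage of being a direct corollary of the Theorem~\ref{thm:convex} recursion without having to set up a new weighted recursion or justify expansiveness for the averaged updates.
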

\submit{}{
\begin{proof}
Let $\bar{w}_T = \frac{1}{T} \displaystyle \sum_{t=1}^T w_t$ denoet the
average of the stochastic gradient iterates. Since
\[
w_t = \sum_{k=1}^t \alpha \nabla f(w_k; (x_k,y_k))\,,
\]
we have
\[
 \bar{w}_T =  \alpha \sum_{t=1}^T \frac{T-t+1}{T}  \nabla f(w_k; (x_k,y_k))
\]
Using Lemma 3.8, the deviation between $\bar{w}_t$ and $\bar{w}_t'$ obeys
\[
	\delta_t \leq (1-1/n) \delta_{t-1} +
		\frac{1}{n}\left( \delta_{t-1} + 2 \alpha L  \frac{T-t+1}{T}\right)\,.
\]
which implies
\[
	\delta_T \leq  \frac{2 \alpha L}{n} \sum_{t=1}^T \frac{T-t+1}{T} =
		\frac{\alpha L(T+1)}{n}\,.
\]
Since $f$ is $L$-Lipschitz, we have
\[
\E |f(\bar{w}_T)-f(\bar{w}_T')|
\le L\|\bar{w}_T-\bar{w}_T'\|
\le\frac{\alpha (T+1)L^2}{n}\,.
\]
Here the expectation is taken over the algorithm and hence the claim follows by
our definition of uniform stability.
\end{proof}
} 
} 

\section{Convex risk minimization}\label{sec:case-studies}

We now outline how our generalization bounds lead to bounds on the
population risk achieved by SGM in the convex setting.  We restrict our
attention to the convex case where we can contrast against known results.
The main feature of our results is that we show that one can achieve bounds
comparable or perhaps better than known results on stochastic gradient for
risk minimization by running for multiple passes over the data set.

The key to the analysis in this section is to decompose the risk estimates
into an \emph{optimization error} term and a stability term.  The
optimization error designates how closely we optimize the empirical risk or a
proxy of the empirical risk.  By optimizing with stochastic gradient, we
will be able to balance this optimization accuracy against how well we
generalize.  These results are inspired by the work of Bousquet and Bottou
who provided similar analyses for SGM based on uniform
convergence~\cite{Bottou08}. However, our stability results will yield
sharper bounds.

Throughout this section, our risk decomposition works as follows. We define
the \emph{optimization error} to be the gap between the empirical risk and
minimum empirical risk in expectation:
\submit{
	$\epsilon_{\mathrm{opt}}(w)\eqdef  \E\left [R_S[w] - R_S[w_\star^S]\right]$
where $w_\star^S = \arg\min_w R_S[w].$
}{
\[
	\epsilon_{\mathrm{opt}}(w)\eqdef  \E\left [R_S[w] - R_S[w_\star^S]\right]
	\,\mbox{ where }\, 
w_\star^S = \arg\min_w R_S[w]
	\,.
\]
}
By Theorem~\ref{thm:stab2gen}, the expected risk of a $w$ output by SGM is
bounded as
\submit{
	$\E[R[w]] \leq  \E[R_S[w]]  + \epsilon_{\mathrm{stab}}
	 \leq  \E[R_S[w_\star^S]]  + \epsilon_{\mathrm{opt}}(w) +  \epsilon_{\mathrm{stab}}.$
}{
\[
	\E[R[w]] \leq  \E[R_S[w]]  + \epsilon_{\mathrm{stab}}
	 \leq  \E[R_S[w_\star^S]]  + \epsilon_{\mathrm{opt}}(w) +  \epsilon_{\mathrm{stab}}\,.
\]
}
In general, the optimization error decreases with the number of SGM
iterations while the stability increases.  Balancing these two terms will
thus provide a reasonable excess risk against the empirical risk minimizer.
Note that our analysis involves the expected minimum empirical risk which
could be considerably smaller than the minimum risk. However, as we now
show, it can never be larger.

\begin{lemma}\label{lem:erm-beats-pop}
Let $w_\star$ denote the minimizer of the population risk and $w_\star^{S}$ denote the
minimizer of the empirical risk given a sampled data set $S$.
Then $\E[R_S[w_\star^{S}]] \leq R[w_\star]$.
\end{lemma}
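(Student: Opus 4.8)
The plan is to observe that for any \emph{fixed} model $w$, the empirical risk $R_S[w]$ is an unbiased estimate of the population risk $R[w]$, since $S$ consists of i.i.d.\ draws from $\mathcal{D}$; that is, $\E_S[R_S[w]] = R[w]$. The subtlety is that $w_\star^{S}$ is \emph{not} fixed—it depends on $S$—so we cannot directly apply unbiasedness to it. Instead, the key step is to exploit the fact that $w_\star^{S}$ \emph{minimizes} $R_S[\cdot]$, so in particular $R_S[w_\star^{S}] \le R_S[w_\star]$ pointwise for every realization of $S$, where $w_\star$ is the (fixed, data-independent) population minimizer.

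Concretely, I would write
\[
\E_S\!\left[R_S[w_\star^{S}]\right] \le \E_S\!\left[R_S[w_\star]\right] = R[w_\star],
\]
where the inequality is the pointwise optimality of $w_\star^{S}$ for the empirical objective combined with monotonicity of expectation, and the final equality is unbiasedness of the empirical risk at the fixed point $w_\star$ (using linearity of expectation and $\E_{z_i \sim \mathcal{D}} f(w_\star; z_i) = R[w_\star]$ for each $i$). That completes the proof.

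The argument is genuinely short; the only thing to be careful about is not to conflate the two minimizers. The ``main obstacle,'' such as it is, is purely conceptual: recognizing that one must compare $R_S[w_\star^{S}]$ against $R_S[w_\star]$ (empirical risk of the \emph{population} optimum) rather than attempting to take an expectation of $R_S[w_\star^{S}]$ directly, which would be circular. There are no nontrivial calculations and no need to invoke stability or any of the update-rule machinery—this lemma is a standalone observation used only to interpret the excess-risk bound relative to the true optimum.
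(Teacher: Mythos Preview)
Your proposal is correct and essentially identical to the paper's proof: both use that $R_S[w_\star^{S}]\le R_S[w]$ pointwise for any fixed $w$, then take expectations and invoke unbiasedness $\E_S[R_S[w]]=R[w]$. The paper writes the argument with $\inf_w$ carried through the chain before the inequality, whereas you instantiate $w=w_\star$ immediately, but the substance is the same.
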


\submit{}{
\begin{proof}
\begin{align*}
	R[w_\star] = \inf_w R[w]
	&=\inf_w \E_{z}[ f(w;z) ] \\
	&=\inf_w \E_S\left[ \tfrac{1}{n} \sum_{i=1}^n f(w;z_i) \right] \\
	&\geq\inf_w \E_S\left[ \tfrac{1}{n} \sum_{i=1}^n f(w_\star^{S};z_i)
	\right]\\
	&=\E_S\left[ \tfrac{1}{n} \sum_{i=1}^n f(w_\star^{S};z_i) \right]
= \E[R_S[w_\star^{S}]]\,.
\end{align*}
\end{proof}
} 

To analyze the optimization error, we will make use of a classical result due to Nemirovski and
Yudin~\cite{NY83}.
\begin{theorem}
\label{thm:NY}
Assume we run stochastic gradient descent with constant stepsize $\alpha$ on
a convex function
\submit{
$R[w] = \E_z[ f(w;z) ]\,.$
}{
\[
R[w] = \E_z[ f(w;z) ]\,.
\]
} 
Assume further that $\|\nabla f(w;z)\|\leq L$ and $\|w_0 - w_\star\|\leq D$
for some minimizer $w_\star$ of~$R$.  Let $\bar{w}_T$ denote the average of
the $T$ iterates of the algorithm.  Then we have
\submit{
	$R[\bar{w}_T] \le R[w_\star] + \tfrac{1}{2}\frac{D^2}{T\alpha} + \tfrac{1}{2} L^2 \alpha \,.$
}{
\[
	R[\bar{w}_T] \le R[w_\star] + \tfrac{1}{2}\frac{D^2}{T\alpha} + \tfrac{1}{2} L^2 \alpha \,.
\]
} 
\end{theorem}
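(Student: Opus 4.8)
The plan is to establish the standard $O(D^2/(T\alpha) + L^2\alpha)$ bound for averaged stochastic gradient descent on a convex objective by the classical potential-function argument, tracking the squared distance to a minimizer $w_\star$. First I would write the one-step expansion: for the update $w_{t+1} = w_t - \alpha g_t$ where $g_t = \nabla f(w_t; z_t)$ is an unbiased estimate of a subgradient of $R$ at $w_t$ (conditioned on the past), expand $\|w_{t+1} - w_\star\|^2 = \|w_t - w_\star\|^2 - 2\alpha \langle g_t, w_t - w_\star\rangle + \alpha^2 \|g_t\|^2$. Taking conditional expectation and using $\E[g_t \mid w_t] \in \partial R(w_t)$ together with the bound $\|g_t\| \le L$, this becomes $\E\|w_{t+1}-w_\star\|^2 \le \E\|w_t - w_\star\|^2 - 2\alpha\, \E\langle \nabla R(w_t), w_t - w_\star\rangle + \alpha^2 L^2$.

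Next I would invoke convexity of $R$ in the form $R(w_t) - R(w_\star) \le \langle \nabla R(w_t), w_t - w_\star\rangle$, so that rearranging the displayed inequality yields
\[
2\alpha\, \E[R(w_t) - R(w_\star)] \le \E\|w_t - w_\star\|^2 - \E\|w_{t+1} - w_\star\|^2 + \alpha^2 L^2 \,.
\]
Then I would sum this over $t = 0, 1, \dots, T-1$. The right-hand side telescopes to $\|w_0 - w_\star\|^2 - \E\|w_T - w_\star\|^2 + T\alpha^2 L^2 \le D^2 + T\alpha^2 L^2$, using the hypothesis $\|w_0 - w_\star\| \le D$ and discarding the nonnegative final term. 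Dividing by $2\alpha T$ gives $\frac{1}{T}\sum_{t=0}^{T-1} \E[R(w_t) - R(w_\star)] \le \frac{D^2}{2\alpha T} + \frac{\alpha L^2}{2}$.

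Finally, I would apply Jensen's inequality: since $R$ is convex and $\bar w_T = \frac1T \sum_{t=0}^{T-1} w_t$, we have $R(\bar w_T) \le \frac1T \sum_{t=0}^{T-1} R(w_t)$, so taking expectations and combining with the previous display gives $\E[R(\bar w_T)] \le R(w_\star) + \frac{D^2}{2T\alpha} + \frac{L^2 \alpha}{2}$, which is the claim. (If the averaging is over $w_1,\dots,w_T$ rather than $w_0,\dots,w_{T-1}$, the argument is identical up to an index shift, still absorbed into the same bound.) There is no real obstacle here — the result is entirely classical; the only point requiring mild care is the conditioning argument that makes $\E[g_t \mid \mathcal{F}_{t-1}]$ a genuine (sub)gradient of the population risk $R$, which is immediate since each $z_t$ is drawn independently from $\mathcal{D}$ and $R(w) = \E_z f(w;z)$ permits exchanging expectation and differentiation under the stated Lipschitz/smoothness assumptions.
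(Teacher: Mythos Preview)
Your argument is correct and is exactly the classical potential-function proof of this bound. Note, however, that the paper does not actually prove Theorem~\ref{thm:NY}: it is stated there as a known result attributed to Nemirovski and Yudin~\cite{NY83} and used as a black box in the subsequent risk decomposition, so there is no paper proof to compare against beyond the citation. Your write-up supplies precisely the standard derivation that underlies that citation.
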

The upper bound stated in the previous theorem is known to be tight even if the function
is $\beta$-smooth~\cite{NY83}

If we plug in the population risk for~$J$ in the previous theorem, we
directly obtain a generalization bound for SGM that holds when we make a
single pass over the data. The theorem requires fresh samples from the
distribution in each update step of SGM. Hence, given $n$ data points, we
cannot make more than $n$ steps, and each sample must not be used more than
once.

\begin{corollary}\label{cor:NY}
Let $f$ be a convex loss function satisfying
$\|\nabla f(w,z)\|\le L$ and let $w_\star$ be a minimizer of the population risk
$R[w]=\E_z f(w;z).$
Suppose we make a single pass of SGM over the sample
$S=(z_1,\dots,z_n)$ with a suitably chosen fixed step size starting from a
point $w_0$ that satisfies $\|w_0-w_\star\|\le D.$
Then, the average $\bar{w}_n$ of the iterates
satisfies
\submit{
	$\E[R[\bar{w}_n] ] \leq R[w_\star] + \frac{DL}{\sqrt{n}}\,.$
}{
\begin{equation}\label{eq:single-pass}
	\E[R[\bar{w}_n] ] \leq R[w_\star] + \frac{DL}{\sqrt{n}}\,.
\end{equation}
} 
\end{corollary}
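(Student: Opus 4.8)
\textbf{Proof proposal for Corollary~\ref{cor:NY}.}
The plan is to simply instantiate Theorem~\ref{thm:NY} with the population risk $R[w] = \E_z f(w;z)$ playing the role of the convex objective, and then optimize the free step-size parameter $\alpha$. First I would observe that making a single pass over the i.i.d.\ sample $S = (z_1,\dots,z_n)$ with $i_t$ ranging over fresh indices means that at each step the stochastic gradient $\nabla f(w_t; z_{i_t})$ is an unbiased estimate of $\nabla R[w_t]$, and since each $z_i$ is used at most once these estimates satisfy exactly the hypotheses under which Theorem~\ref{thm:NY} is stated; the number of steps is $T = n$. Applying the theorem with $D$ as in the hypothesis $\|w_0 - w_\star\| \le D$ and $\|\nabla f(w;z)\| \le L$ gives
\[
\E[R[\bar{w}_n]] \le R[w_\star] + \frac{1}{2}\frac{D^2}{n\alpha} + \frac{1}{2}L^2\alpha\,.
\]

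Next I would minimize the right-hand side over $\alpha > 0$. The two error terms $\tfrac{1}{2}D^2/(n\alpha)$ and $\tfrac12 L^2\alpha$ are balanced when $\alpha = D/(L\sqrt{n})$, and plugging this choice back in makes each term equal to $\tfrac12 DL/\sqrt{n}$, so their sum is exactly $DL/\sqrt{n}$. This yields
\[
\E[R[\bar{w}_n]] \le R[w_\star] + \frac{DL}{\sqrt{n}}\,,
\]
which is the claimed bound~\eqref{eq:single-pass}. The ``suitably chosen fixed step size'' referenced in the statement is precisely $\alpha = D/(L\sqrt{n})$.

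There is essentially no obstacle here: the corollary is a direct specialization plus a one-line AM--GM optimization of the step size. The only point requiring a sentence of care is the justification that a single pass over an i.i.d.\ sample produces the unbiased, bounded stochastic gradients that Theorem~\ref{thm:NY} assumes --- in particular that the expectation in the theorem's conclusion, when the objective is taken to be the population risk, is over both the algorithm's sampling and the draw of $S$, which is consistent with the $\E$ appearing in the corollary's statement. Everything else is arithmetic.
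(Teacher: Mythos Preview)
Your proposal is correct and matches the paper's approach exactly: the paper does not give a formal proof of Corollary~\ref{cor:NY} but simply remarks that plugging the population risk into Theorem~\ref{thm:NY} with fresh samples at each step yields the bound, which is precisely what you do (including the optimization over $\alpha$ that the paper leaves implicit in the phrase ``suitably chosen fixed step size'').
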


We now contrast this bound with what follows from our results.
\begin{proposition}\label{prop:risk}
Let $S=(z_1,\dots,z_n)$ be a sample of size $n.$
Let $f$ be a $\beta$-smooth convex loss function satisfying
$\|\nabla f(w,z)\|\le L$ and let $w_\star^S$ be a minimizer of the empirical risk
$R_S[w]=\frac1n\sum_{i=1}^n f(w;z_i).$ Suppose we run $T$ steps of SGM with suitably chosen step
size from a starting point $w_0$ that satisfies $\|w_0-w_\star^S\|\le D.$
Then, the average $\bar{w}_T$ over the iterates satisfies
\submit{
$\E[R[\bar{w}_T]] \leq \E[R_S[w_\star^{S}]]
+ \frac{D L}{\sqrt{n}} \sqrt{\frac{n+2T}{T} }\,.$
}{
\[
\E[R[\bar{w}_T]] \leq \E[R_S[w_\star^{S}]]
+ \frac{D L}{\sqrt{n}} \sqrt{\frac{n+2T}{T} }\,.
\]
}
\end{proposition}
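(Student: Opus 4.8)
The plan is to combine the stability bound for model averaging in the convex case (Theorem~\ref{thm:convex-average}) with the optimization error bound of Nemirovski and Yudin (Theorem~\ref{thm:NY}) applied to the \emph{empirical} risk, and then optimize over the step size. Concretely, run SGM on the empirical risk $R_S$ with a constant step size $\alpha \le 2/\beta$ for $T$ steps and let $\bar w_T$ be the average iterate. The risk decomposition from the start of this section gives
\[
\E[R[\bar w_T]] \le \E[R_S[w_\star^S]] + \epsilon_{\mathrm{opt}}(\bar w_T) + \epsilon_{\mathrm{stab}}\,.
\]

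For the stability term, Theorem~\ref{thm:convex-average} yields $\epsilon_{\mathrm{stab}} \le \alpha T L^2/n$. For the optimization term, I would apply Theorem~\ref{thm:NY} to $R_S$ rather than to $R$: since SGM makes $T$ passes of stochastic gradient updates on the decomposable function $R_S$, and each $\nabla f(w;z_{i_t})$ is an unbiased estimate of $\nabla R_S(w)$ with norm at most $L$, the theorem gives $\epsilon_{\mathrm{opt}}(\bar w_T) = \E[R_S[\bar w_T] - R_S[w_\star^S]] \le \tfrac{D^2}{2T\alpha} + \tfrac{1}{2}L^2\alpha$ using $\|w_0 - w_\star^S\| \le D$. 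Adding the two contributions,
\[
\E[R[\bar w_T]] \le \E[R_S[w_\star^S]] + \frac{D^2}{2T\alpha} + \frac{L^2\alpha}{2} + \frac{\alpha T L^2}{n}\,.
\]

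Now I would optimize the right-hand side over $\alpha$. Grouping the $\alpha$-terms, we want to minimize $\frac{D^2}{2T\alpha} + \frac{\alpha L^2}{2}\bigl(1 + \frac{2T}{n}\bigr)$. The optimal choice is $\alpha = \frac{D}{L}\sqrt{\frac{n}{T(n+2T)}}$, which (after checking it respects the constraint $\alpha\le 2/\beta$, or noting one can always shrink $D$-dependence into the hypothesis) gives a combined bound of $DL\sqrt{\frac{n+2T}{nT}} = \frac{DL}{\sqrt n}\sqrt{\frac{n+2T}{T}}$, matching the claimed expression. This also uses Lemma~\ref{lem:erm-beats-pop} implicitly only if one wishes to further replace $\E[R_S[w_\star^S]]$ by $R[w_\star]$, but the proposition as stated keeps the empirical minimizer.

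The main obstacle I anticipate is the clean application of Theorem~\ref{thm:NY} to the empirical risk: one must verify that the stochastic gradient oracle for $R_S$ (sampling an index $i_t$ and returning $\nabla f(w;z_{i_t})$) satisfies exactly the hypotheses of that theorem, namely unbiasedness of the gradient estimate and the uniform norm bound $L$, and that the step-size constraint $\alpha \le 2/\beta$ from Theorem~\ref{thm:convex-average} is compatible with the optimal $\alpha$ — if the optimal $\alpha$ exceeds $2/\beta$ one simply takes $\alpha = 2/\beta$ and the bound only improves in the relevant regime, or absorbs it into constants. The algebra of the step-size optimization is routine; the conceptual care is entirely in pairing the two black-box theorems on the \emph{same} run of SGM with consistent assumptions.
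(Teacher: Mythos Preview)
Your proposal is correct and matches the paper's proof essentially step for step: the paper also applies Theorem~\ref{thm:NY} to the empirical risk $R_S$ to bound $\epsilon_{\mathrm{opt}}$, invokes Theorem~\ref{thm:convex-average} for $\epsilon_{\mathrm{stab}}$, adds the two, and minimizes over $\alpha$ to obtain the same optimal step size $\alpha = \frac{D\sqrt{n}}{L\sqrt{T(n+2T)}}$.
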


\submit{}{
\begin{proof}
On the one hand, applying Theorem~\ref{thm:NY} to the empirical risk~$R_S,$ we get
\[
	\epsilon_{\mathrm{opt}}(\bar{w}_T) \leq
		\tfrac{1}{2}\frac{D^2}{T\alpha} + \tfrac{1}{2} L^2 \alpha \,.
\]
Here, $w_\star^S$ is an empirical risk minimizer.
On the other hand, by our stability bound from Theorem~\ref{thm:convex-average},
\[
	\epsilon_{\mathrm{stab}} \leq  \frac{T L^2 \alpha}{n}
\]
Combining these two inequalities we have,
\[
	\E[R[\bar{w}_T]] \leq  \E[R_S[w_\star^{S}]] +
	\tfrac{1}{2}\frac{D^2}{T\alpha} + \tfrac{1}{2} L^2 \left(1  + \frac{2T}{n}\right)\alpha
\]
Choosing $\alpha$ to be
\[
\alpha = \frac{D \sqrt{n}}{L \sqrt{T (n + 2 T)}} ~,
\]
yields the bound provided in the proposition.
\end{proof}
} 

Note that the bound from our stability analysis is not directly comparable to
Corollary~\ref{cor:NY} as we are comparing against the expected minimum
empirical risk rather than the minimum risk.  Lemma~\ref{lem:erm-beats-pop}
implies that the excess risk in our bound is at most worse by a factor of
$\sqrt{3}$ compared with Corollary~\ref{cor:NY} when $T=n$.  Moreover, the
excess risk in our bound tends to a factor merely $\sqrt{2}$ larger than the
Nemirovski-Yudin bound as $T$ goes to infinity.  In contrast, the classical
bound does not apply when $T>n.$

\remove{
\subsection{Regularization}
Regularization is one of the core concepts in data analysis, transforming
ill-posed problems into well posed in many practical scenarios. In this
section, we show how regularization has another unforeseen benefit. When we
regularize with the square of the Euclidean norm, the empirical risk becomes
stable. Hence, as described in Section~\ref{sec:sc-sgd}, we can run SGM for
an arbitrarily number of iterations and always be stable. Thus,
regularization allows us to decouple the stability and optimization errors:
the optimization error on the regularized cost can be made arbitrarily small,
and then we will pay a cost due to stability and the bias introduced by
regularization.

Consider the regularized empirical risk minimization problem:
\begin{equation}\label{eq:regularized-erm-problem}
		\min_w  R_{S,\mu}[w] \eqdef \tfrac{1}{n} \sum_{i=1}^n f(w;z_i) + \frac{\mu}{2} \|w\|_2^2
\end{equation}
We consider the following procedure for minimizing this function.  Let
\[
	B\eqdef  \tfrac{1}{n} \sum_{i=1}^n f(0; z_i)\qquad\text{and}\qquad
	R\eqdef  \sqrt{\frac{2 B}{\mu}}\,.
\]
Let $\Pi_R$ denote the Euclidean projection onto the ball with radius $R$ around zero.  At each iteration $t$, choose an example at random and set
\begin{equation}\label{eq:strong-convex-proj-sg-update}
	w_{t+1} = \Pi_R\left( w_t -\frac{1}{\mu t} \nabla f(w_t; z_i)  \right)
\end{equation}
That is, we run the projected stochastic gradient method with stepsize
\[
	\alpha_t = \frac{1}{\mu t}\,.
\]
The main result of this section is the following:

\begin{proposition}\label{prop:strongly-convex-case}
Suppose the risk function is convex and that we run the stochastic gradient
algorithm with update~\eqref{eq:strong-convex-proj-sg-update} and $\mu =
\theta\frac{D}{L\sqrt{n}}$.
Then
\[
	\E[R[w_T]] \leq \E[ R_S[w_\star^S] ] +  Q_1 \frac{D}{L}{\sqrt{n}}
	\]
where
\[
Q_1 = \max\{\theta,\tfrac{2L^2 + \beta}{L^2\theta}\} + \frac{n}{2 \theta}
	\frac{\log T  + 1}{T}\,.
\]
Assume additionally that the risk is strongly convex with parameter
$\gamma$ and suppose we run the stochastic gradient algorithm with
update~\eqref{eq:strong-convex-proj-sg-update} with $\mu <\gamma$,
then
\[
	\E[R[w_T]] \leq \E[ R_S[w_\star^S] ] +  \frac{Q_2 L^2}{\lambda n}
\]
where
\[
Q_2 = \frac{1}{\frac{\mu}{\lambda} (1-\frac{\mu}{\lambda})}
	\left\{\frac{n}{T} \min\left\{\tfrac{2\beta}{\mu}, \tfrac{1}{2} (\log(T)+1)\right\}   +
 \frac{\beta}{L^2} + 2 \right\}\,.
\]
\end{proposition}

Again, it is constructive to compare this result to existing bounds.  Our bound again holds for multiple passes over the data and is with respect to the expected minimum empirical risk rather than the minimum risk.  Note that our result applies to the \emph{final iterate of SGM}, not to the average of the iterates.

For the general case, note that if we optimally picked $\theta$ and ran stochastic gradient for $O(n \log n)$ iterations, we would again achieve the optimal excess risk of Nemirovski and Yudin.  In the strongly convex case, had we chosen
$\mu = \lambda/2$, then as $T\rightarrow \infty$, our bound tends to
\[
\E[R[w_T]] \leq \E[ R_S[w_\star^S] ] +  \frac{8 L^2 + \beta}{\lambda n}\,.
\]

The second term is nearly identical to the bound derived
in~\cite{HazanKale14,Rakhlin11} for a single pass algorithm that requires
averaging the iterates while our does not require averaging. To recap, the all
prior results listed here do not apply when one runs multiple passes over the
data.

We note that the results in this section could also be directly derived by
combining the foundational work of Bousquet and Elisseeff with known results
on the error of SGM. The fact that these results give identical bounds in this
scenario provides further confirmation of the validity of our framework.

\paragraph{Proof of Propsition~\ref{prop:strongly-convex-case}}
The basis of our analysis uses the following theorem, distilled from
Nemirovski\emph{et al.}~\cite{Nemirovski09} and Hazan \emph{et al.}~\cite{Hazan06}.

\begin{theorem}[Nemirovski \emph{et al.}~\cite{Nemirovski09}, Hazan \emph{et
	al.}~\cite{Hazan06}]\label{thm:sc-sgd}
Consider the convex function
\[
	R[w] = \E_z[ f(w;z) ] \,.
\]
Assume $R$ is $\beta$-smooth and strongly convex with parameter $\gamma$.
Assume that $\|\nabla f(w;z)\|\leq L$.  Suppose we run the projected
stochastic gradient updates with step-size $\frac{1}{\gamma t}$ on samples
generated by the distribution on $\xi$ starting from an initial iterate
$w_0=0$. Then the $T$th iterate, $w_T$ satisfies
\[
R[w_T] - R[w_\star] \leq
	\min\left\{\tfrac{2\beta}{\gamma},
		\tfrac{1}{2} (\log(T)+1)\right\} \frac{L^2}{\gamma T} \,.
\]
\end{theorem}

We will additionally need the following lemmas.

\begin{lemma}\label{lemma:sc-dist-bound}
Assume the empirical risk $R$ is strongly convex with strong convexity
parameter $\gamma$. Then,
\[
	\tfrac{1}{2}\E_S[\|w-w_\star^{S}\|^2] \leq \frac{R[w] -
	\E[R_\mathrm{emp}[w_\star^{S}]]}{\gamma}\,.
\]
\end{lemma}
\begin{proof}
By strong convexity, we have
\[
	\tfrac{1}{2}\|w-w_\star^{S}\|^2 \leq \frac{R[w] - R[w_\star^{S}]}{\gamma}
\]
Using the chain of inequalities
\[
	\E_S[R[w_\star^{S}] ]\geq R[w_\star] \geq \E_S[R_S[w_\star^{S}]]
\]
yields the desired result.
\end{proof}

\begin{lemma}\label{lemma:regularization-bias}
Let $w_\star^{\mu,S}$ denote the minimizer of the regularized empirical risk~\eqref{eq:regularized-erm-problem}. Then
\[
R_S[w] \leq  R_S[w_\star^{S}] +
	\left(R_{S,\mu}[w] - R_{S,\mu}[w_\star^{\mu,S}]\right) +
	\frac{\mu}{2} \|w - w_\star^{S}\|^2\,.
\]
\end{lemma}
\begin{proof}
\begin{align*}
	R_S[w] -  R_S[w_\star^{S}] &=
	R_{S,\mu}[w] -  R_{S,\mu}[w_\star^{S}]  - \frac{\mu}{2} \|w\|_2^2 + \frac{\mu}{2} \|w_\star^{S}\|_2^2\\
	&\leq R_{S,\mu}[w] -  R_{S,\mu}[w_\star^{\mu,S}]- \frac{\mu}{2} \|w\|_2^2 + \frac{\mu}{2} \|w_\star^{S}\|_2^2\\
	&\leq R_{S,\mu}[w] -  R_{S,\mu}[w_\star^{\mu,S}]+  \frac{\mu}{2} \|w-w_\star^{S}\|_2^2\,.
\end{align*}
Here, the first equality follows from the definitions of
$R_S[\cdot]$ and $R_{S,\mu}[\cdot]$.  The subsequent inequality follows
because $R_{S,\mu}[w_\star^{\mu,S}] \leq R_{S,\mu}[w_\star^{S}]$.
\newline
{\large\bf YS: ???}
The final inequality uses the fact that $\|a\|^2-\|b\|^2 \leq \|a-b\|^2$.
\end{proof}

We now complete the proof of Proposition~\ref{prop:strongly-convex-case}.

\[
\begin{aligned}
	\E[R[w_T]] &\leq \E[ R_S[w_T] ] + \epsilon_\mathrm{stab}\\
	 &\leq \E[R_S[w_\star^{S}]] + \E\left[R_{S,\mu}[w_T] - R_{S,\mu}[w_\star^{\mu,S}]\right] + \mu \E[\|w_T - w_\star^S\|^2] + \epsilon_\mathrm{stab}\\
	 	 &\leq \E[R_S[w_\star^{S}]] + \E\left[R_{S,\mu}[w_T] - R_{S,\mu}[w_\star^{\mu,S}]\right] + \mu \E[\|w_T - w_\star^S\|^2] +\frac{2L^2 + \beta}{\mu n}\,.
	 \end{aligned}
\]
Here, the first inequality is our main stability result.  The second
inequality follows from Lemma~\ref{lemma:regularization-bias}.  The third
inequality plugs in the values from Theorem~\ref{thm:sc-sgd} and
Lemma~\ref{thm:sconvex-decaying}.  Upper bounding $\E[\|w_T -
w_\star^S\|^2]$ by $D^2$ and applying our value of $\mu$ proves the general case for convex risks.  The second part of the proposition follows by using Lemma~\ref{lemma:sc-dist-bound} to upper bound $\E[\|w_T - w_\star^S\|^2]$ and then rearranging the resulting expression.

}

\newcommand{\cifarpath}{plots/}
\newcommand{\mnistpath}{plots/}
\newcommand{\mnistsqpath}{plots/}
\newcommand{\ptbpath}{plots/}
\newcommand{\imagenetcpu}{plots/}
\newcommand{\imagenetsize}{plots/}

\section{Experimental Evaluation}

The goal of our experiments is to isolate the effect of training time, measured
in number of steps, on the stability of SGM. We evaluated broadly a
variety of neural network architectures and varying step sizes on a number of
different datasets.

To measure algorithmic stability we consider two proxies.  The first is the
Euclidean distance between the parameters of two identical models trained on
the datasets which differ by a single example.  In all of our proofs, we use
slow growth of this \emph{parameter distance} as a way to prove stability.
Note that it is not necessary for this parameter distance to grow slowly in
order for our models to be algorithmically stable. This is a strictly
stronger notion. Our second weaker proxy is to measure the generalization
error directly in terms of the absolute different between the test error and
training error of the model.

We analyzed four standard machine learning datasets each with their own
corresponding deep architecture.  We studied the LeNet architecture for MNIST,
the cuda-convnet architecture for CIFAR-10, the AlexNet model for ImageNet,
and the LSTM model for the Penn Treebank Language Model (PTB). Full details
of our architectures and training procedures can be found below.

In all cases, we ran the following experiment.  We choose a random example
from the training set and remove it.  The remaining examples constitute our set $S$. Then we create a set $S'$ by replacing a random element of $S$ with the element we deleted.  We train stochastic gradient descent with the same random seed on datasets $S$ and $S'$.  We record the Euclidean distance between the individual layers in the neural network after every $100$ SGM updates.  We also record the training and testing errors once per epoch.

To varying degrees, our experiments show four primary findings:
\begin{enumerate}
\item Typically, halving the step size roughly halves the generalization error.
This behavior is fairly consistent for both generalization error 
defined with respect to classification accuracy and cross entropy (the loss function used
for training). It thus suggests
that there is an intrinsic linear dependence on the step size in the
generalization error. The linear relationship between generalization error and step-size is quite pronounced in the Cifar10 experiments, as shown in Figure~\ref{fig:cifar10-alphas}.
\item We evaluate the Euclidean distance between the parameters of two models trained on two 
copies of the data differing in a random substitution. We observe that the parameter distance 
grows sub-linearly even in cases where our theory currently uses an exponential bound. This
shows that our bounds are pessimistic.
\item There is a close correspondence between the parameter distance and generalization
error. \emph{A priori}, it could have been the case that the generalization error is small even though
the parameter distance is large. Our experiments show that these two quantities
often move in tandem and seem to be closely related.
\item When measuring parameter distance it is indeed important that SGM does
not immediately encounter the random substitution, but only after some progress
in training has occurred. If we artificially place the corrupted data
point at the first step of SGM, the parameter distance can grow significantly
faster subsequently. This effect is most pronounced in the ImageNet experiments, as displayed in Figure~\ref{fig:early-vs-late}.
\end{enumerate}
We evaluated convolutional neural networks for image classification on three
datasets: MNIST, Cifar10 and ImageNet.  

\submit{}{

\subsection{Convolutional neural nets on Cifar}

}
Starting with Cifar10, we chose a standard model consisting of three
convolutional layers each followed by a pooling operation. 
This model roughly corresponds to that proposed by Krizhevsky \emph{et al.}~\cite{krizhevsky2012imagenet} and available in the
``cudaconvnet'' code\footnote{https://code.google.com/archive/p/cuda-convnet}.  
However, to make the experiments more
interpretable, we avoid all forms of regularization such as weight decay or
dropout. We also do not employ data augmentation even though this would greatly
improve the ultimate test accuracy of the model.  Additionally, we use only
constant step sizes in our experiments. With these restrictions the model we
use converges to below $20\%$ test error. While this is not state of the
art on Cifar10, our goal is not to optimize test accuracy but rather a simple,
interpretable experimental setup.
\submit{
\begin{figure}[t!]
\includegraphics[width=0.50\textwidth]{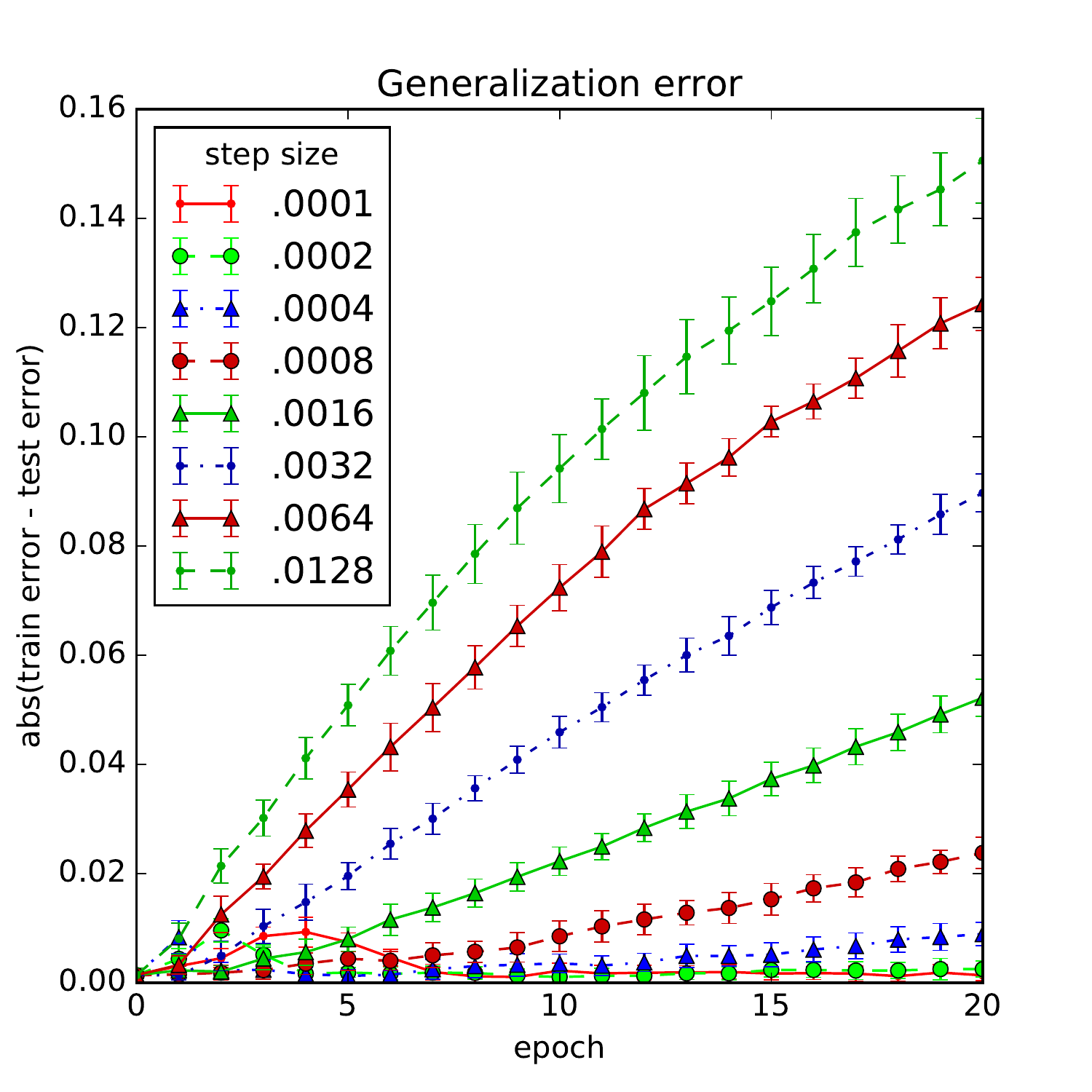}
\includegraphics[width=0.50\textwidth]{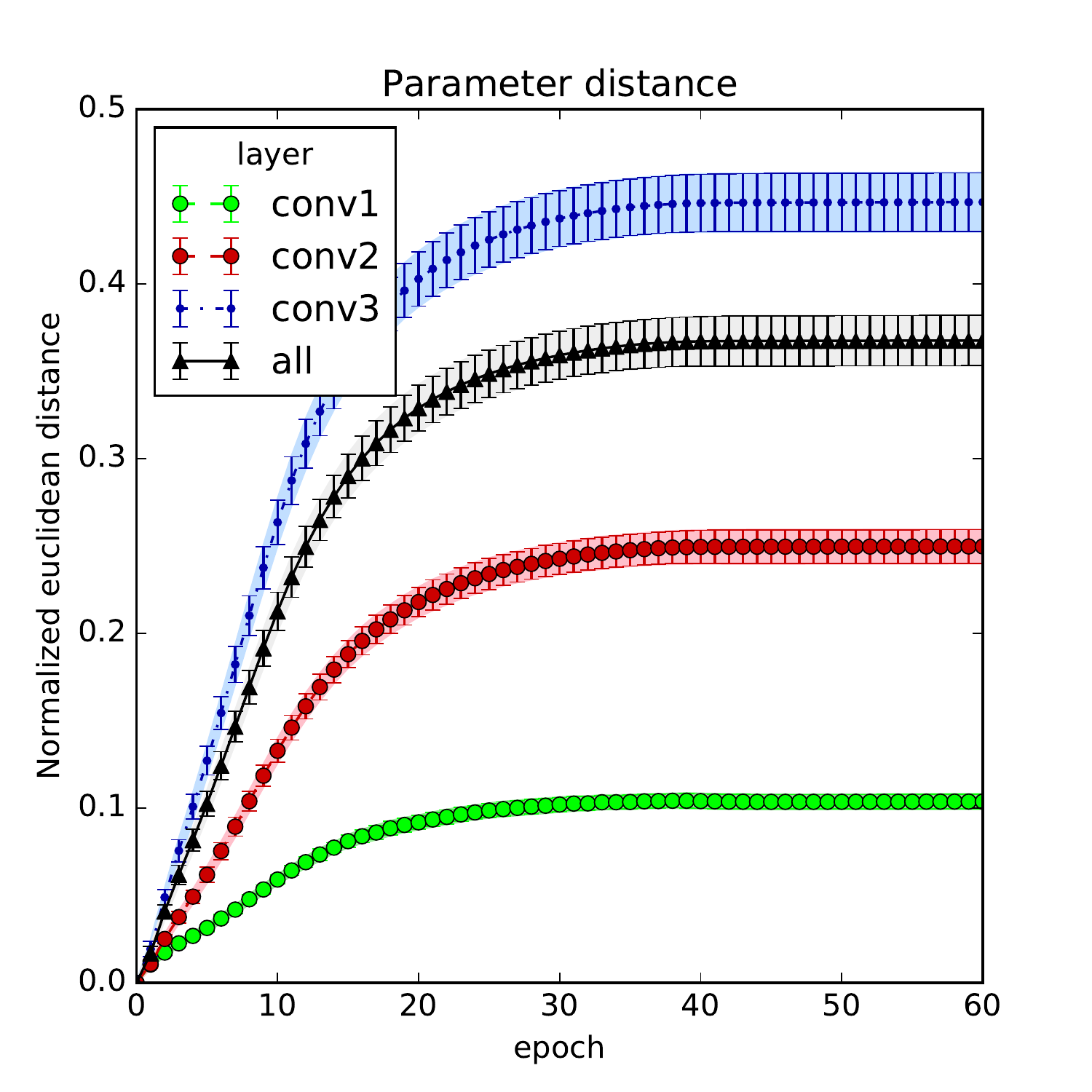}
\includegraphics[width=0.50\textwidth]{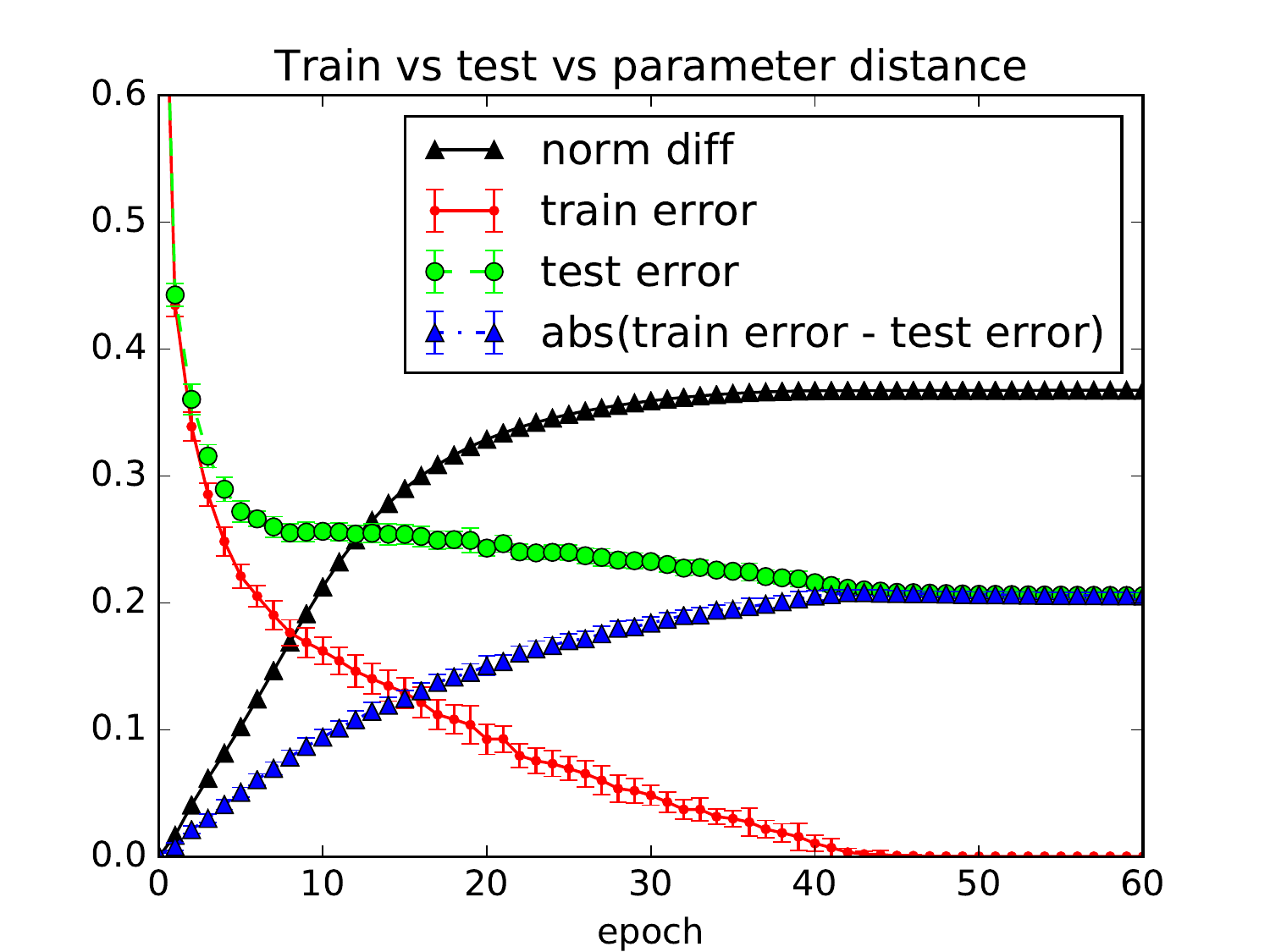}
\caption{Experimental results on Cifar. Top: Generalization error. Middle: Parameter divergence. Bottom: Comparison of train, test, and generalization eror with parameter divergence.}\label{fig:cifar10-alphas}

\end{figure}
}{
\begin{figure}
\includegraphics[width=0.50\textwidth]{{\cifarpath}/cifar10alphas-accuracy21epochs}
\includegraphics[width=0.50\textwidth]{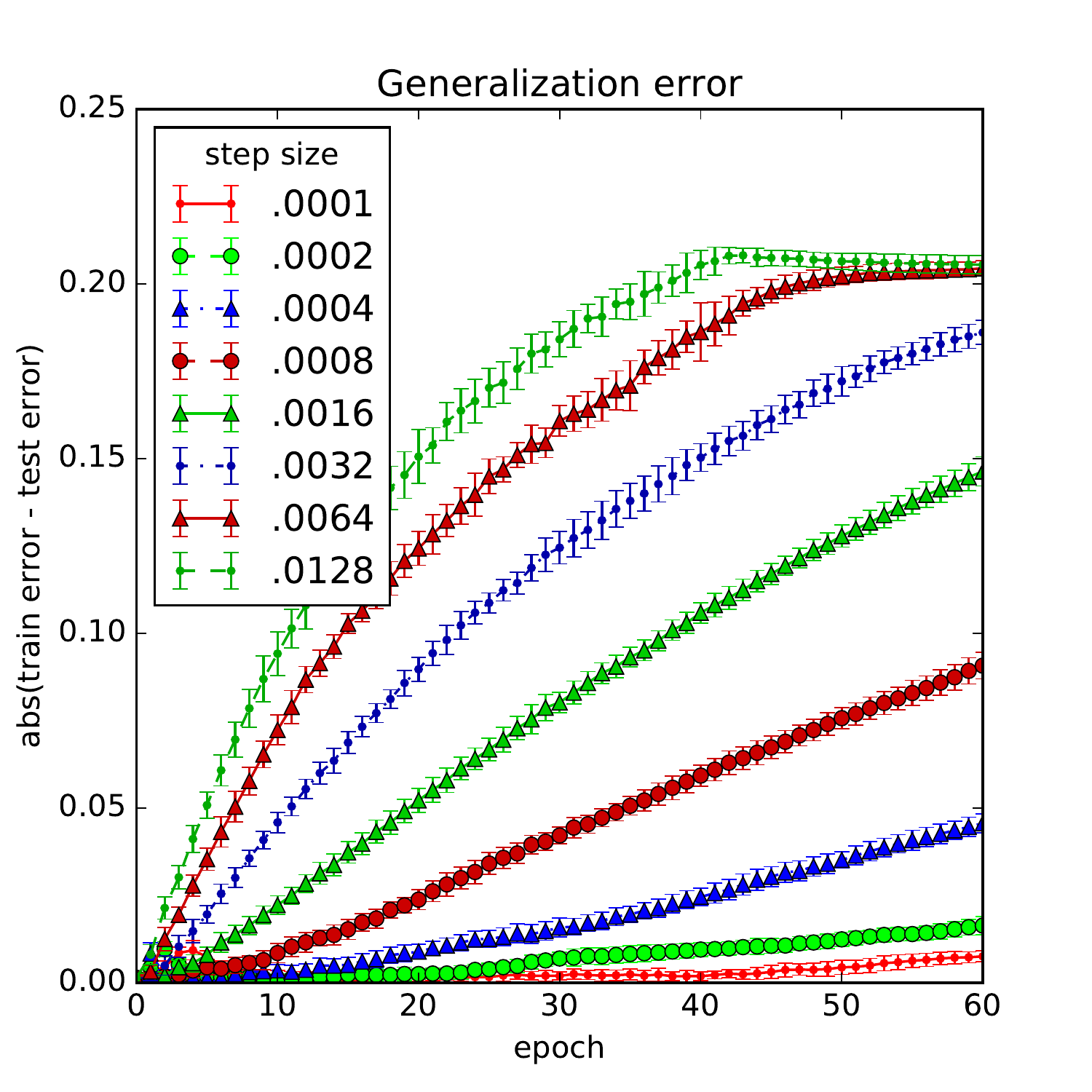}
\caption{Generalization error as a function of the number of epochs for varying step sizes on Cifar10.
Here generalization error is measured with respect to \emph{classification
accuracy}. Left: 20 epochs. Right: 60 epochs.}\label{fig:cifar10-alphas}
\end{figure}
\begin{figure}
\includegraphics[width=0.50\textwidth]{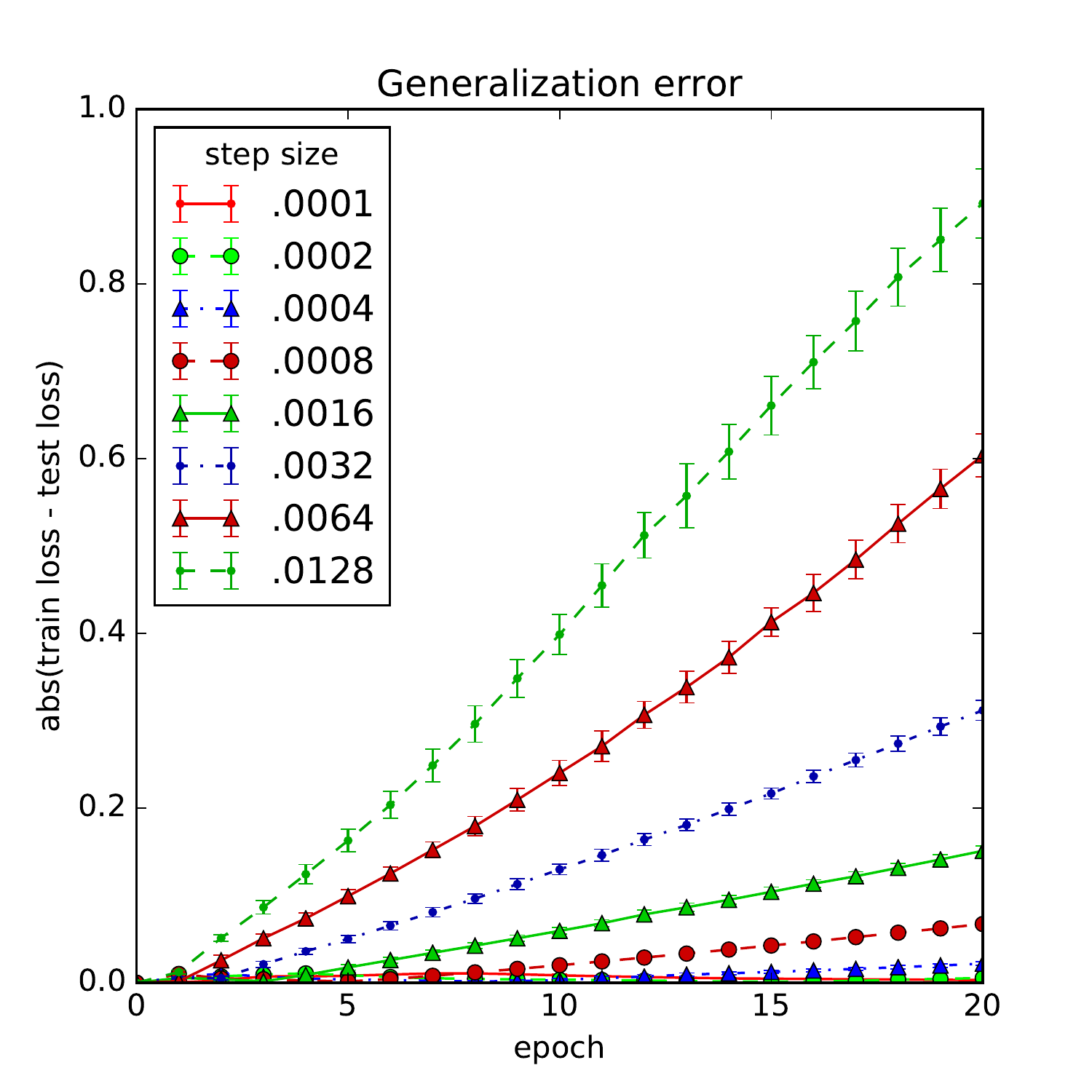}
\includegraphics[width=0.50\textwidth]{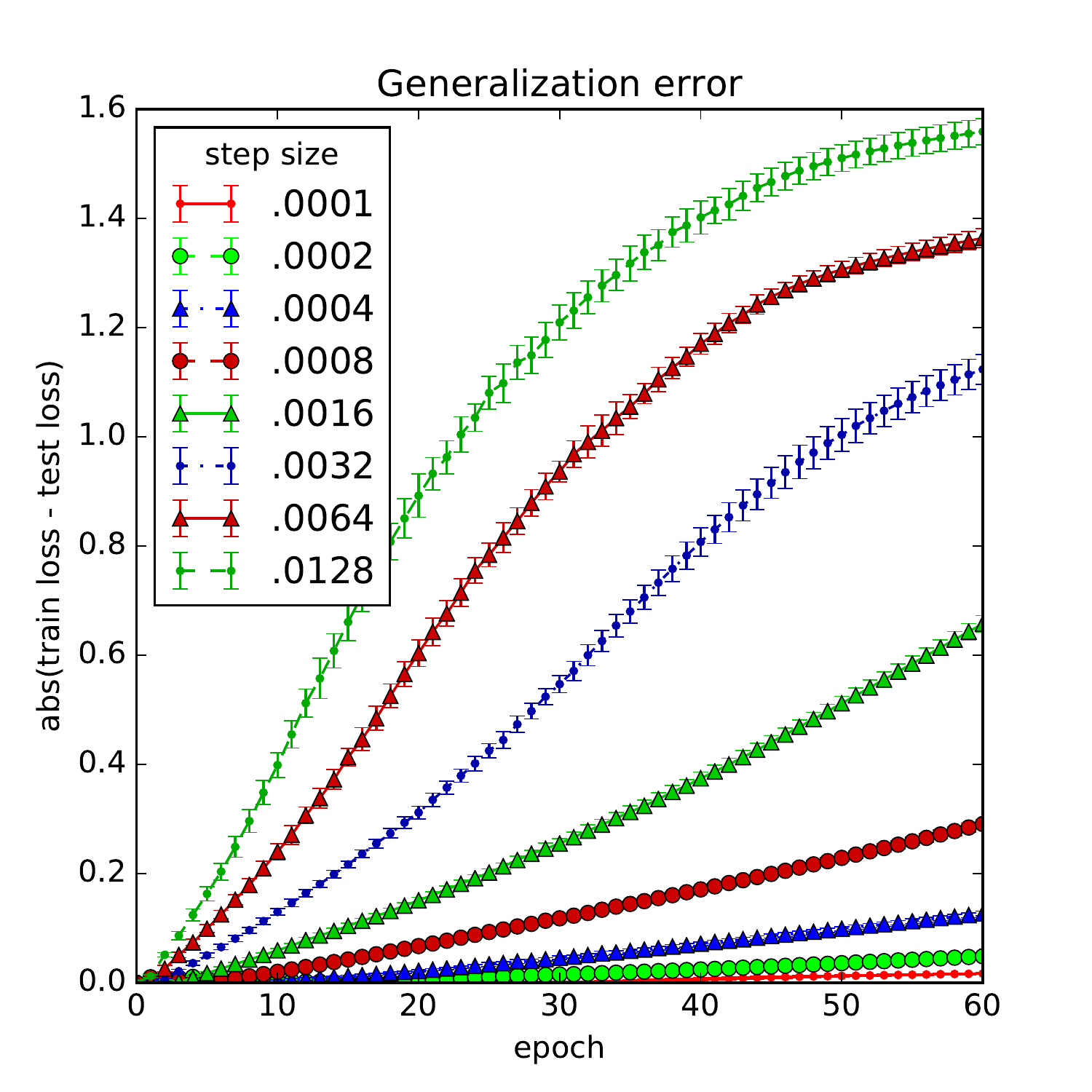}
\caption{Generalization error as a function of the number of epochs for varying
step sizes on Cifar10. Here, generalization error is measured with respect to \emph{cross
entropy} as a loss function. Left: 20 epochs. Right: 60 epochs.}
\end{figure}
\begin{figure}
\includegraphics[width=0.50\textwidth]{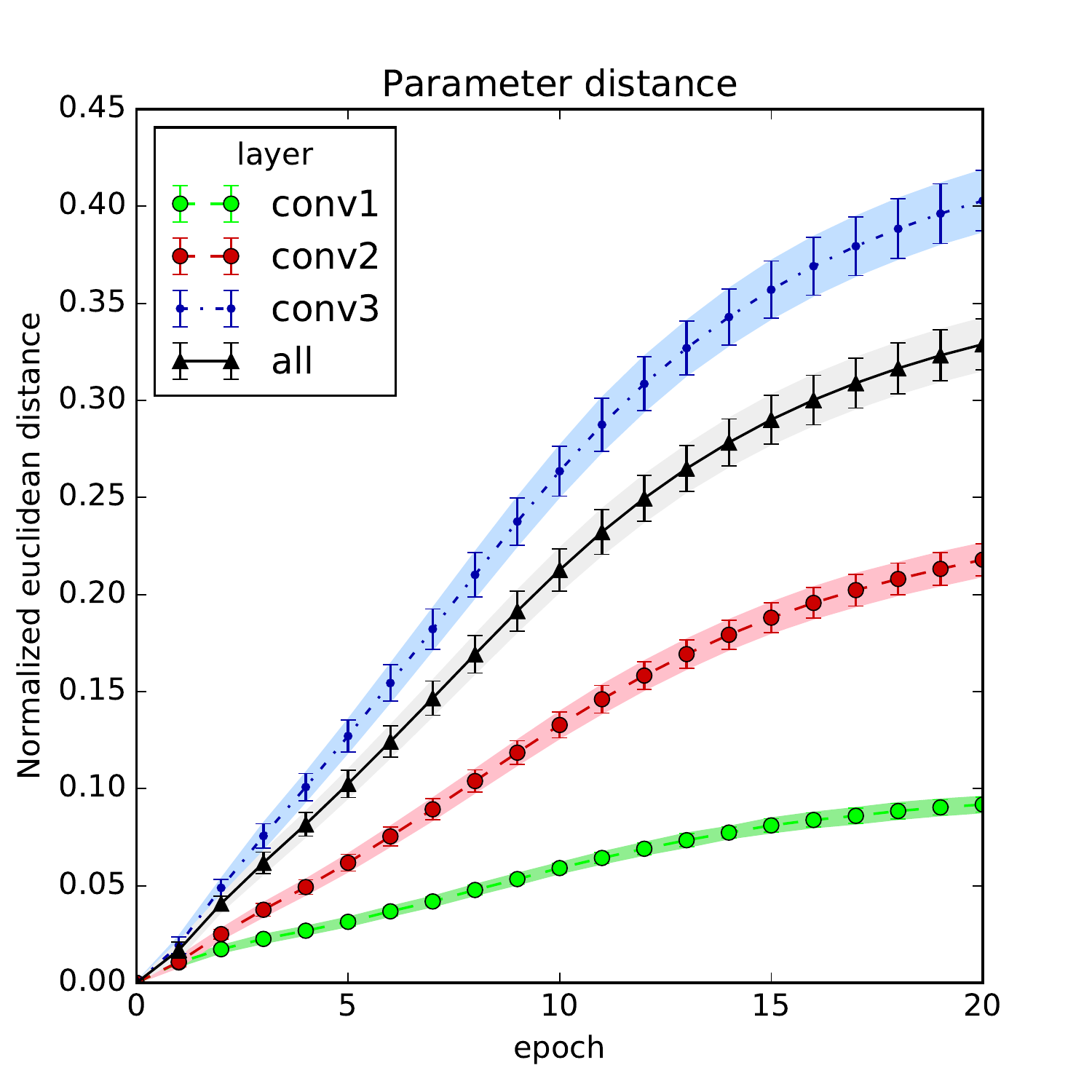}
\includegraphics[width=0.50\textwidth]{{\cifarpath}/cifar10norms-61epochs}
\caption{Normalized euclidean distance between parameters of two models trained under on different random substitution on Cifar 10. Here we show the differences between individual model layers.}
\end{figure}
\begin{figure}
\includegraphics[width=0.50\textwidth]{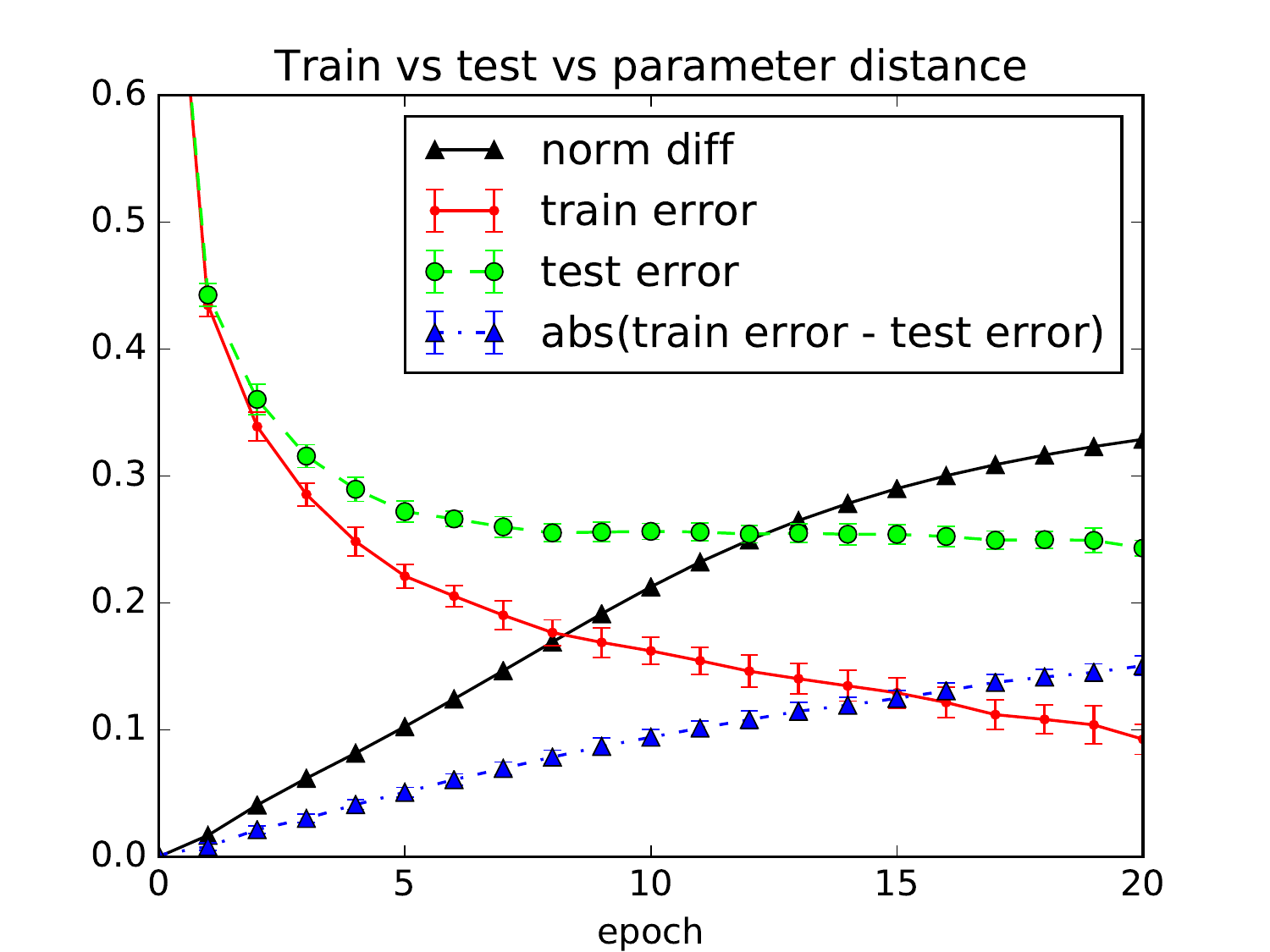}
\includegraphics[width=0.50\textwidth]{{\cifarpath}/cifar10combined-61epochs}
\caption{Parameter distance versus generalization error on Cifar10.}
\end{figure}
} 
\submit{}{
\subsection{Convolutional neural nets on MNIST}
}

The situation on MNIST is largely analogous to what we saw on Cifar10. We trained a LeNet inspired model with two convolutional layers and one fully-connected layer.  The first and second convolutional layers have 20 and 50 hidden units respectively.  This
model is much smaller and converges significantly faster than the Cifar10 models, typically achieving best test error in five epochs. We trained with minibatch size 60. As a result, the amount of overfitting is \submit{smaller.}{smaller as shown in Figure~\ref{fig:mnist}.}
\submit{}{
\begin{figure}
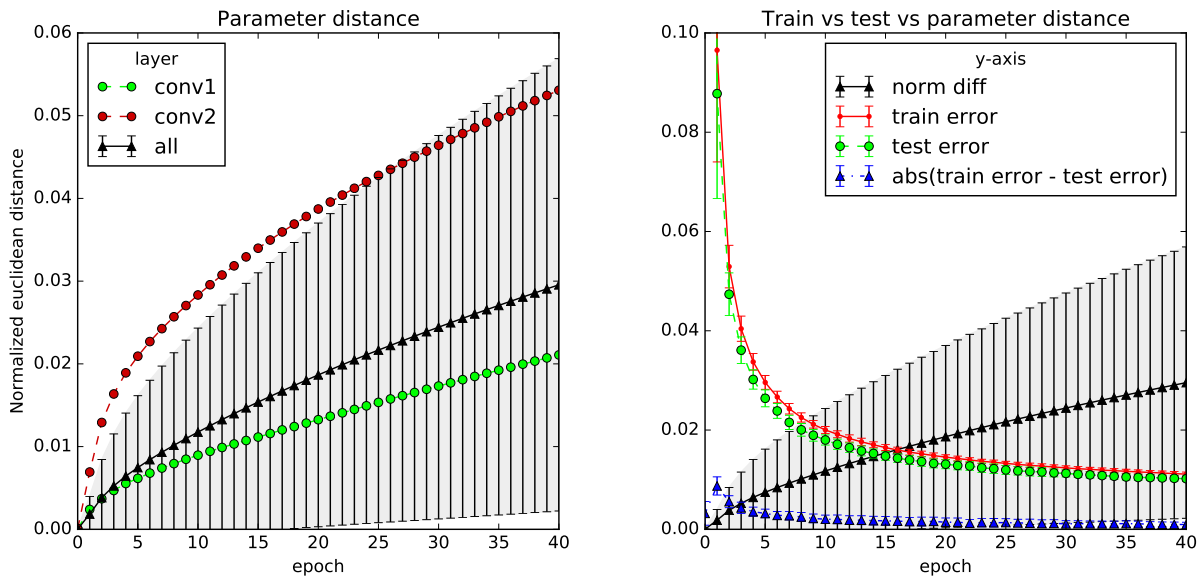

\includegraphics[width=0.50\textwidth]{{\mnistpath}/mnist-norms-41epochs}
\includegraphics[width=0.50\textwidth]{{\mnistpath}/mnist-combined-41epochs}
\caption{Parameter distance and generalization error on MNIST.}
\label{fig:mnist}
\end{figure}
}

In the case of MNIST, we also repeated our experiments after replacing the
usual cross entropy objective with a squared loss objective.  
\submit{The results are in the supplementary materials.}{The results are displayed in Figure~\ref{fig:sq-loss}.}  It turned out that
this does not harm convergence at all, while leading to somewhat smaller
generalization error and parameter divergence.

\submit{}{
\begin{figure}
\includegraphics[width=0.50\textwidth]{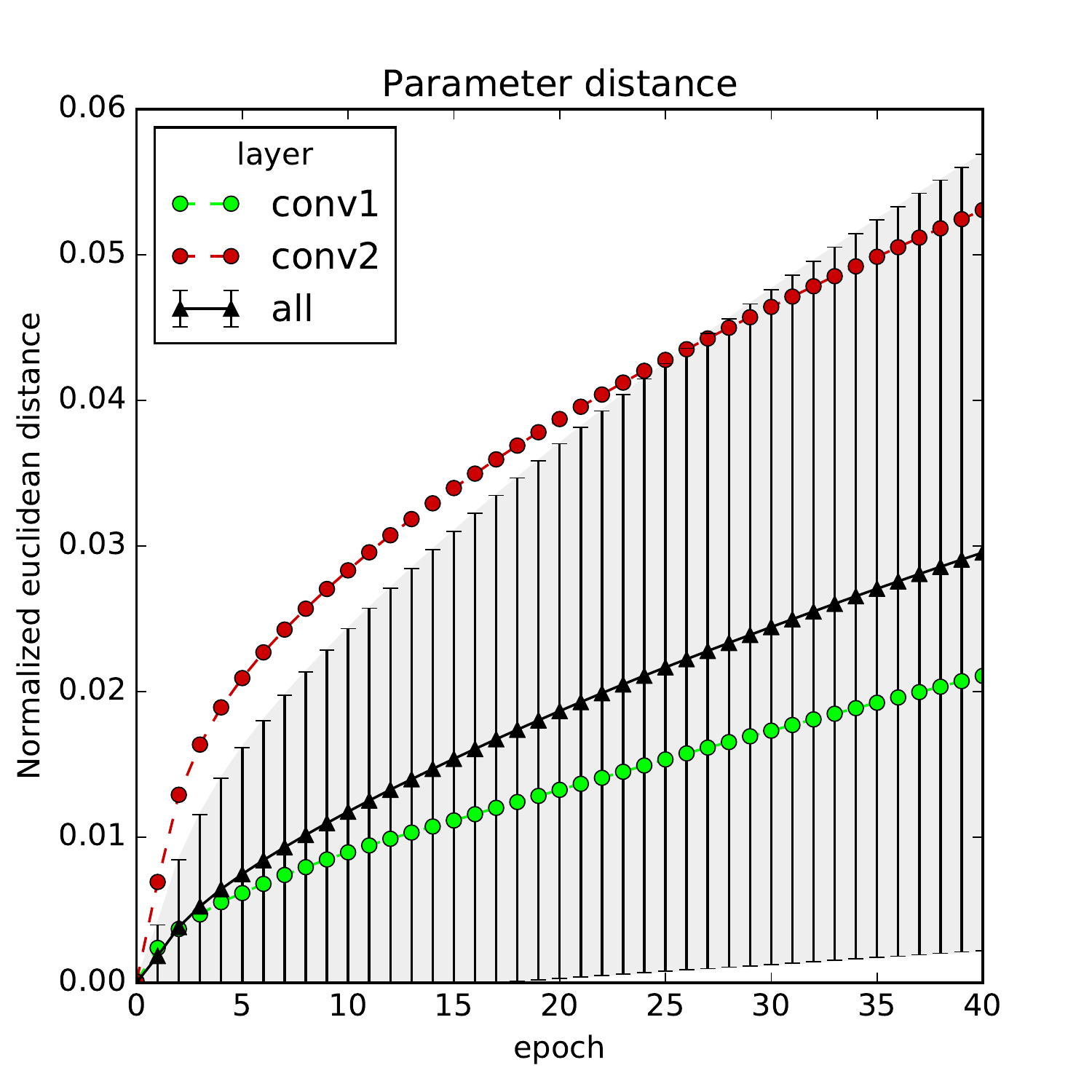}
\includegraphics[width=0.50\textwidth]{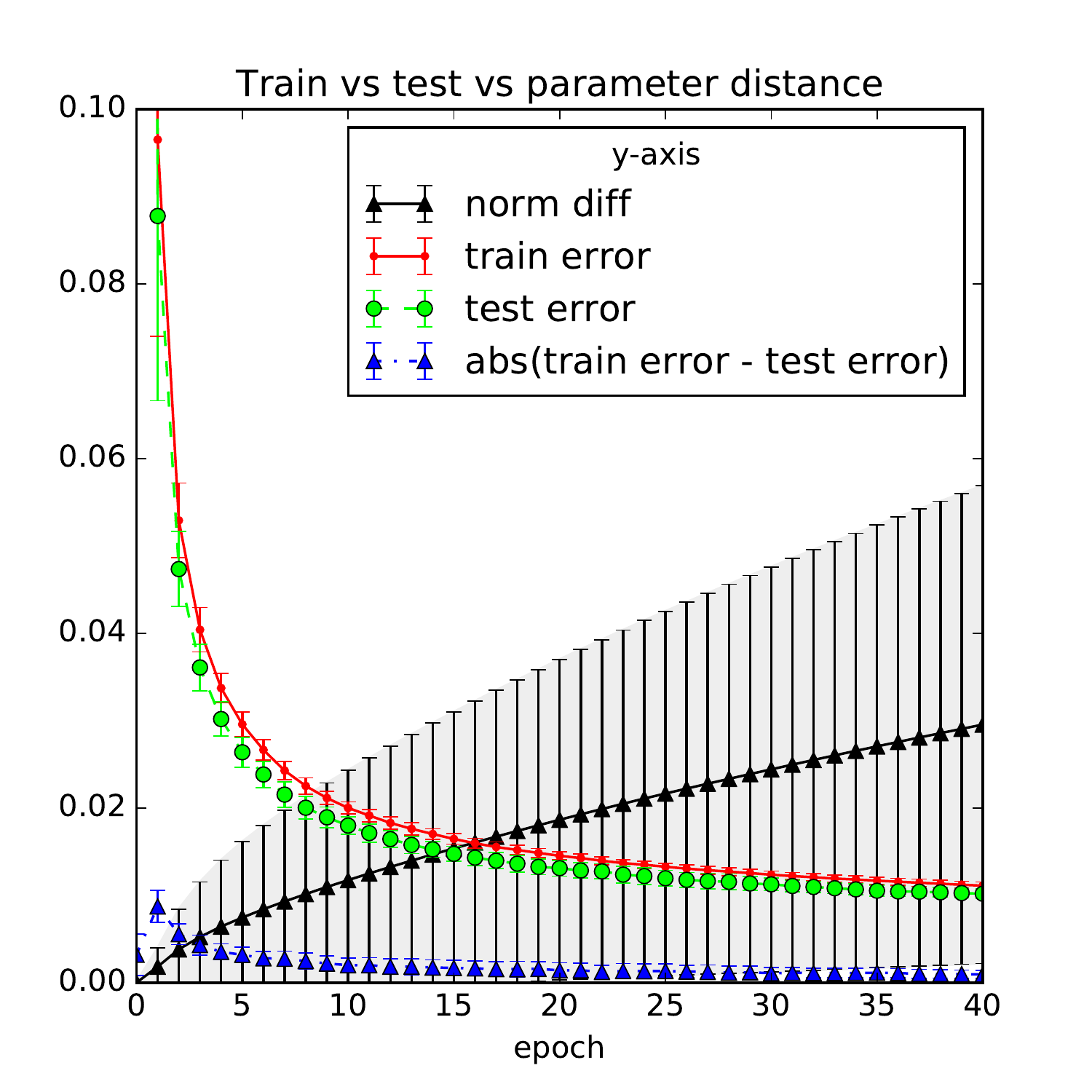}
\caption{Training on MNIST with squared loss objective instead of cross
entropy. Otherwise identical experiments as in the previous figure.}
\label{fig:sq-loss}
\end{figure}
} 

\subsection{Convolutional neural nets on ImageNet}

On ImageNet, we trained the standard AlexNet architecture~\cite{krizhevsky2012imagenet} using data augmentation, regularization, and dropout.  Unlike in the case of Cifar10, we were unable to find a setting of hyperparameters that yielded reasonable performance without using these techniques.   However, for Figure~\ref{fig:model-size}\submit{ (bottom)}{}, we did not use data-augmentation to exaggerate the effects of overfitting and demonstrate the impact scaling the model-size.  This figure demonstrates that the model-size appears to be a second-order effect with regards to generalization error, and step-size has a considerably stronger impact.
\submit{
\begin{figure}[t!]
\includegraphics[width=0.23\textwidth]{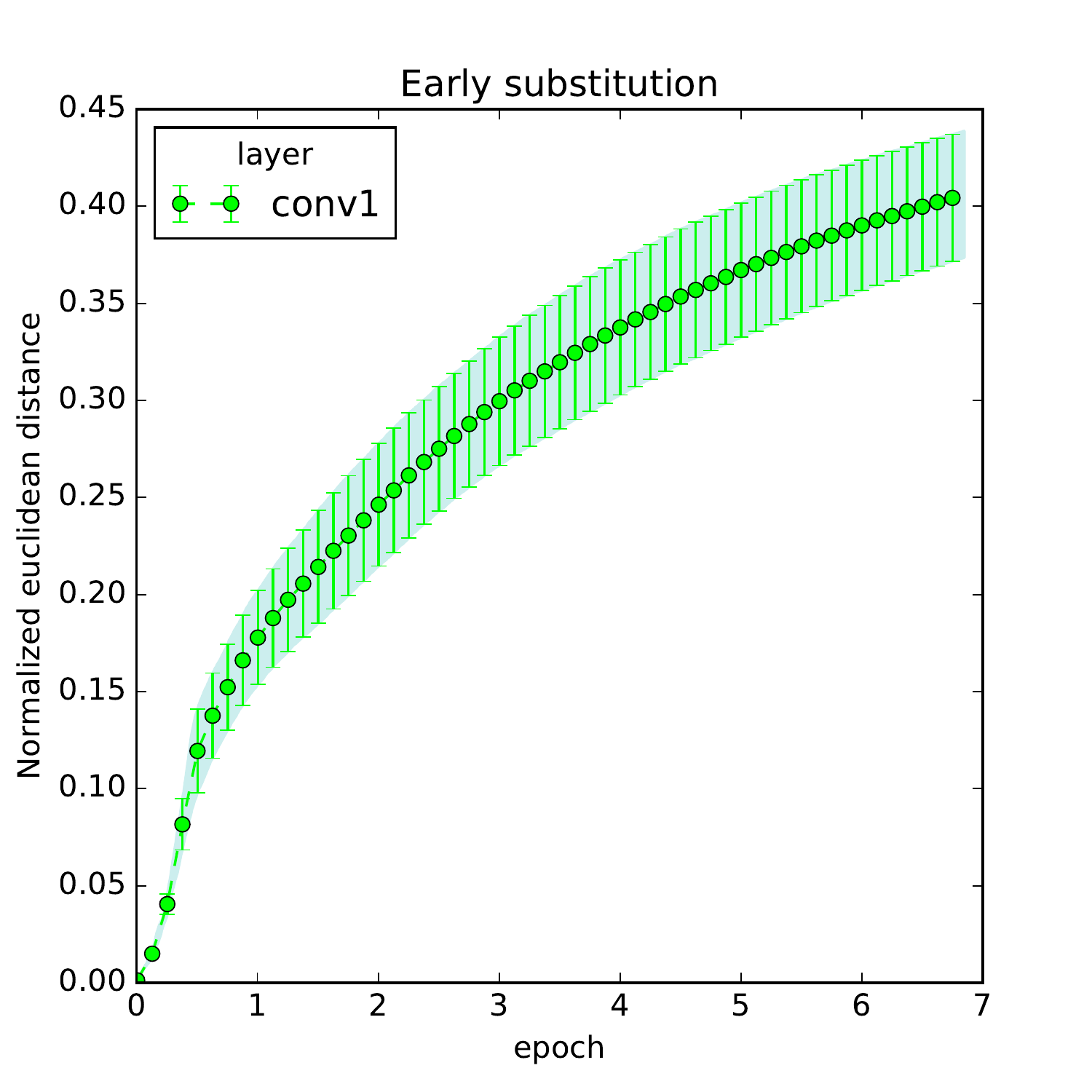}
\includegraphics[width=0.23\textwidth]{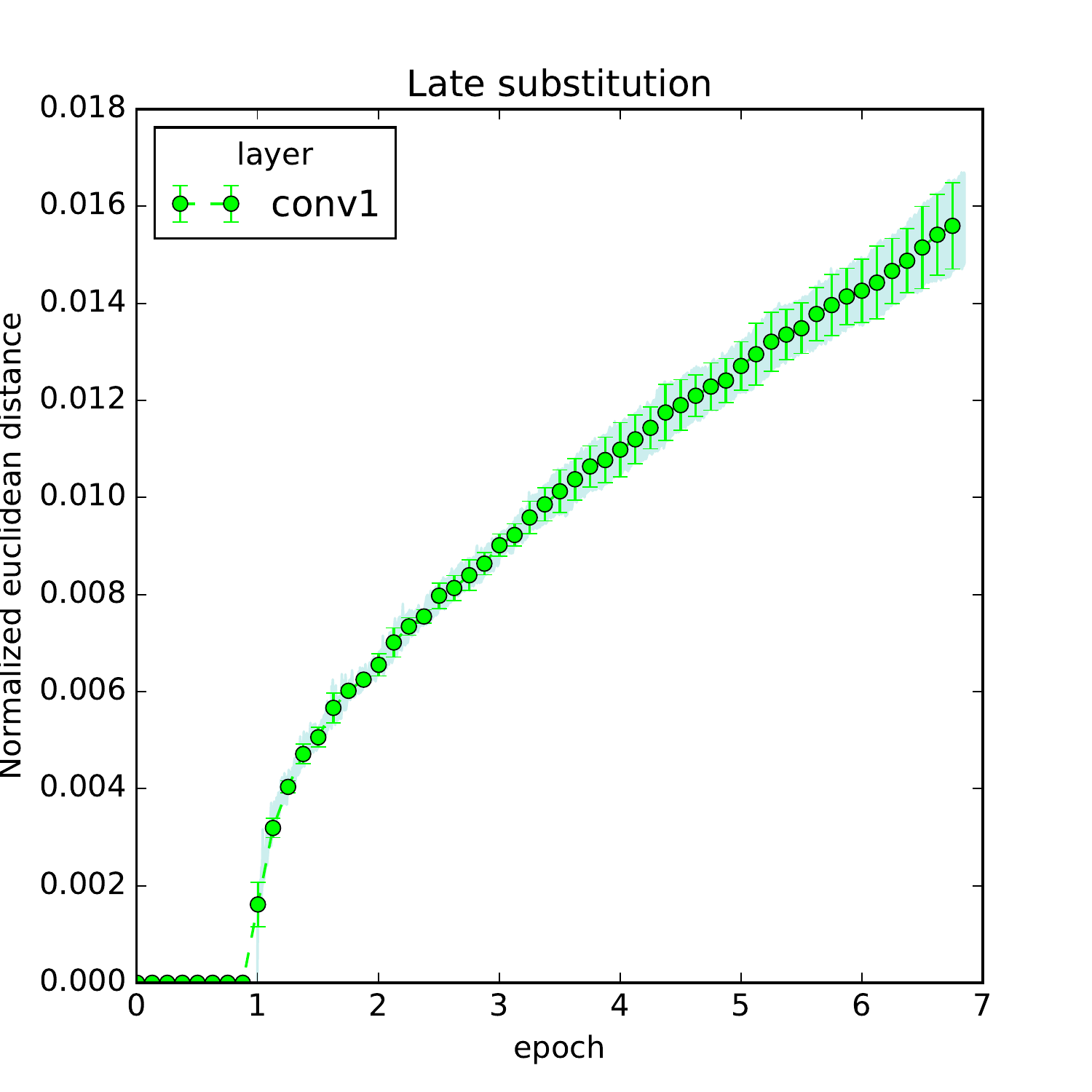}
\includegraphics[width=0.50\textwidth]{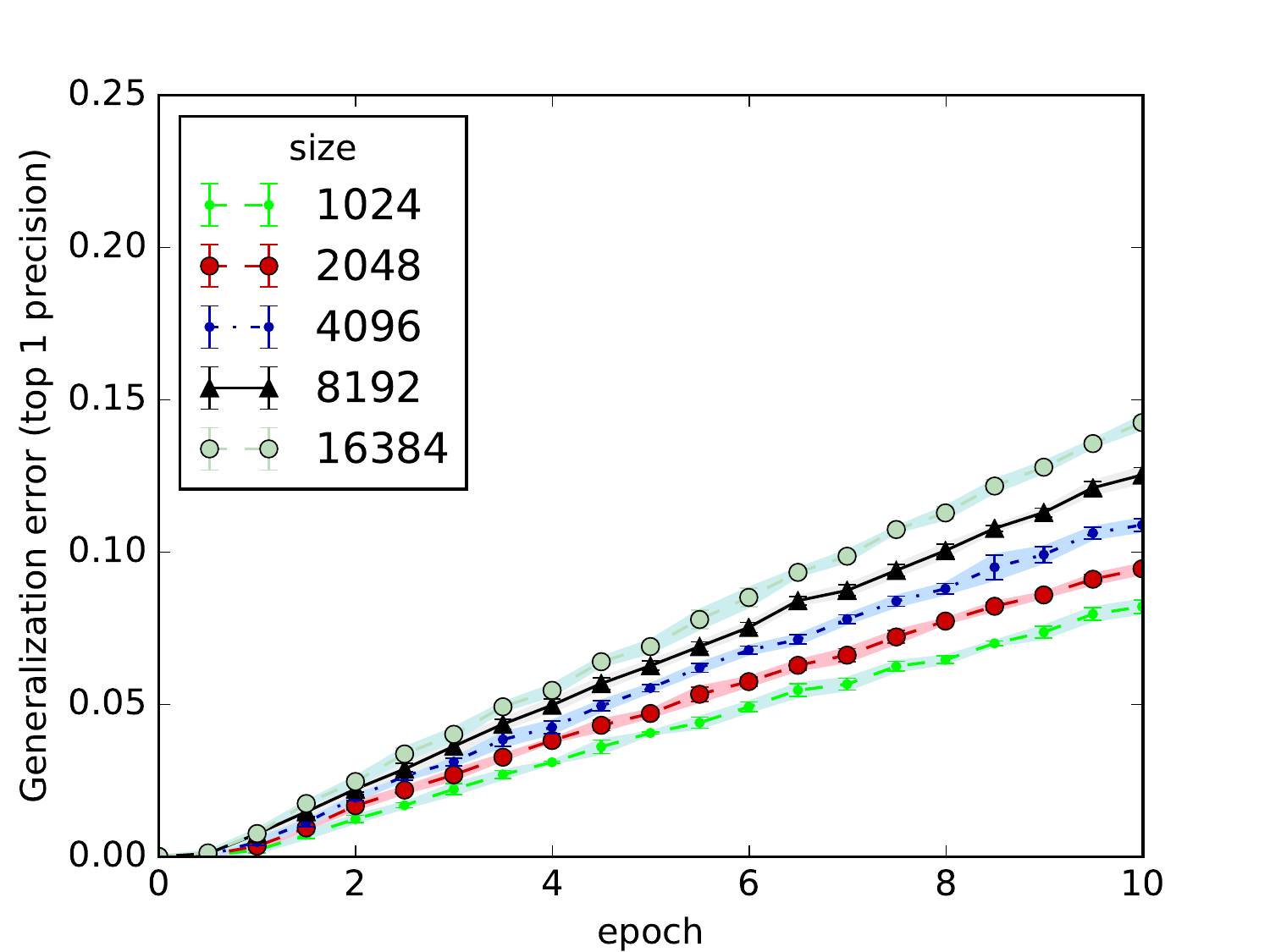}
\caption{Experiments on ImageNet. Top left: Parameter divergence with early substitution. Top right: Late substitution. Bottom: Generalization error for varying model size.}
\label{fig:imagenet}
\label{fig:early-vs-late}
\label{fig:model-size}
\end{figure}
}{
\begin{figure}
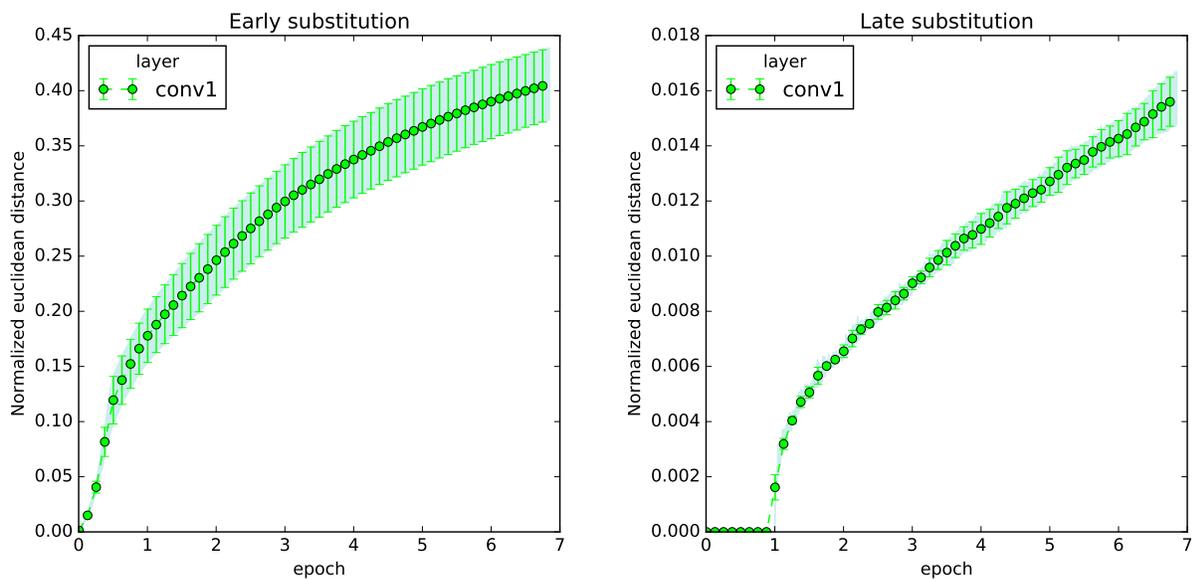

\includegraphics[width=0.50\textwidth]{{\imagenetcpu}/imagenet-cpu-early}
\includegraphics[width=0.50\textwidth]{{\imagenetcpu}/imagenet-cpu-late}
\caption{\submit{Top:}{Left:} Performing a random substitution at the beginning of each epoch on AlexNet. \submit{Bottom:}{Right:} Random substitution at the end of each epoch. The parameter divergence is considerably smaller under late substitution.}
\label{fig:early-vs-late}
\end{figure}

\begin{figure}
\includegraphics[width=0.50\textwidth]{{\imagenetsize}/imagenet-size-top1}
\submit{}{\includegraphics[width=0.50\textwidth]{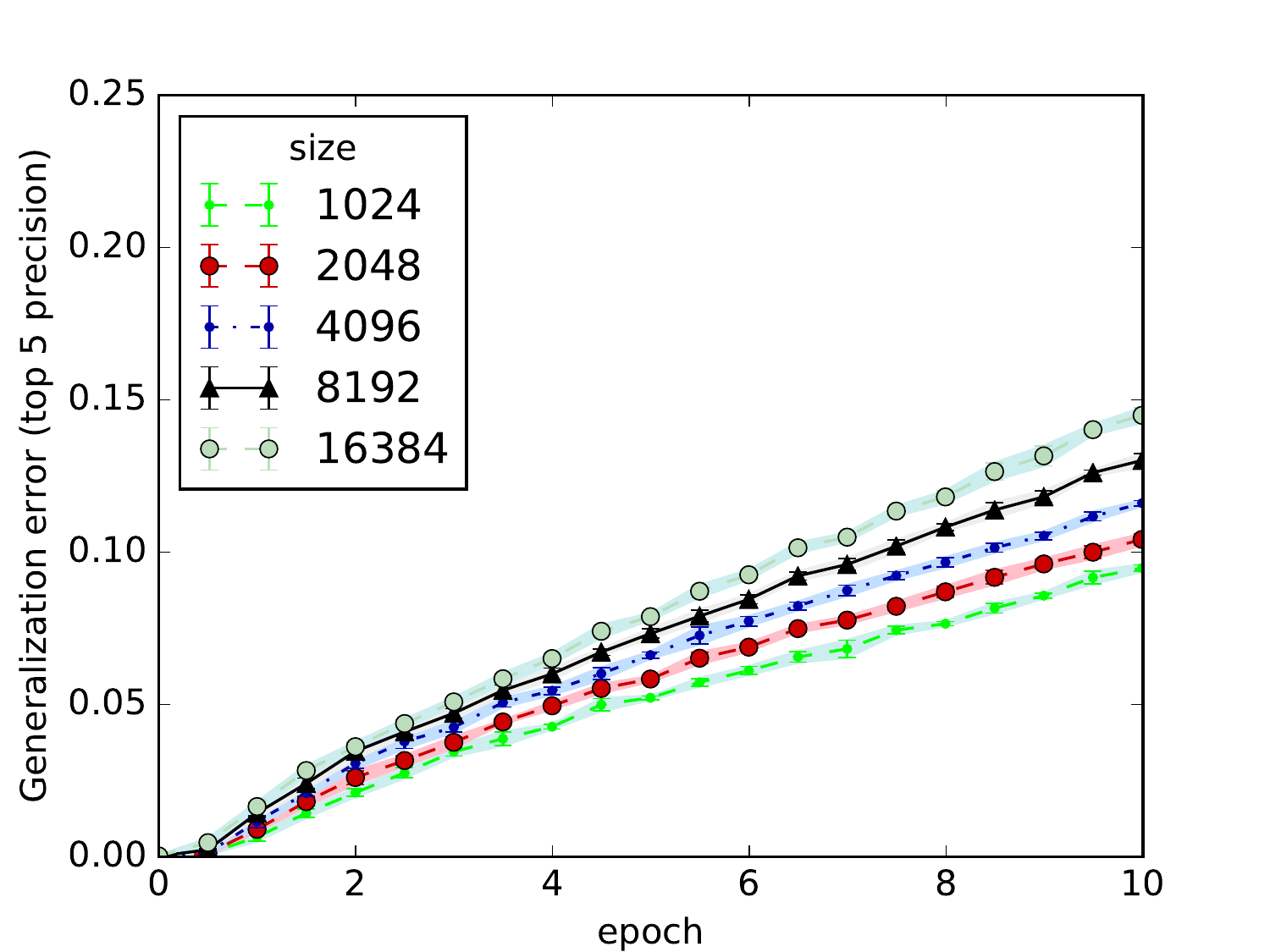}}
\vspace{-5mm}
\caption{\submit{Generalization error for varying model sizes on Imagenet}{Left: Generalization error in terms of top $1$ precision for varying model size on Imagenet.} \submit{}{Right: The same with top $5$ precision.}}
\label{fig:model-size}
\end{figure}
} 

\subsection{Recurrent neural networks with LSTM}
We also examined the stability of recurrent neural networks. Recurrent models
have a considerably different connectivity pattern than their convolutional
counterparts. Specifically, we looked at an LSTM architecture that was used
by Zaremba~\emph{et al.} for language modeling~\cite{zaremba2014recurrent}.
We focused on word-level prediction experiments using the Penn Tree Bank (PTB)
~\cite{marcus1993building}, consisting of 929,000 training words, 73,000
validation words, and 82,000 test words. PTB has 10,000 words in its
vocabulary\footnote{The data can be accessed at the
URL~\url{http://www.fit.vutbr.cz/~imikolov/rnnlm/simple-examples.tgz}}.
Following Zaremba~\emph{et al.}, we trained regularized LSTMs with two layers
that were unrolled for 20 steps. We initialize the hidden states to zero. We
trained with minibatch size 20. The LSTM has 200 units per layer and its
parameters are initialized to have mean zero and standard deviation of 0.1.
We did not use dropout to enhance reproducibility. Dropout would only increase
the stability of our models. The results are displayed in
Figure~\ref{fig:ptb}.
\submit{
\begin{figure}[h!]
\includegraphics[width=0.23\textwidth]{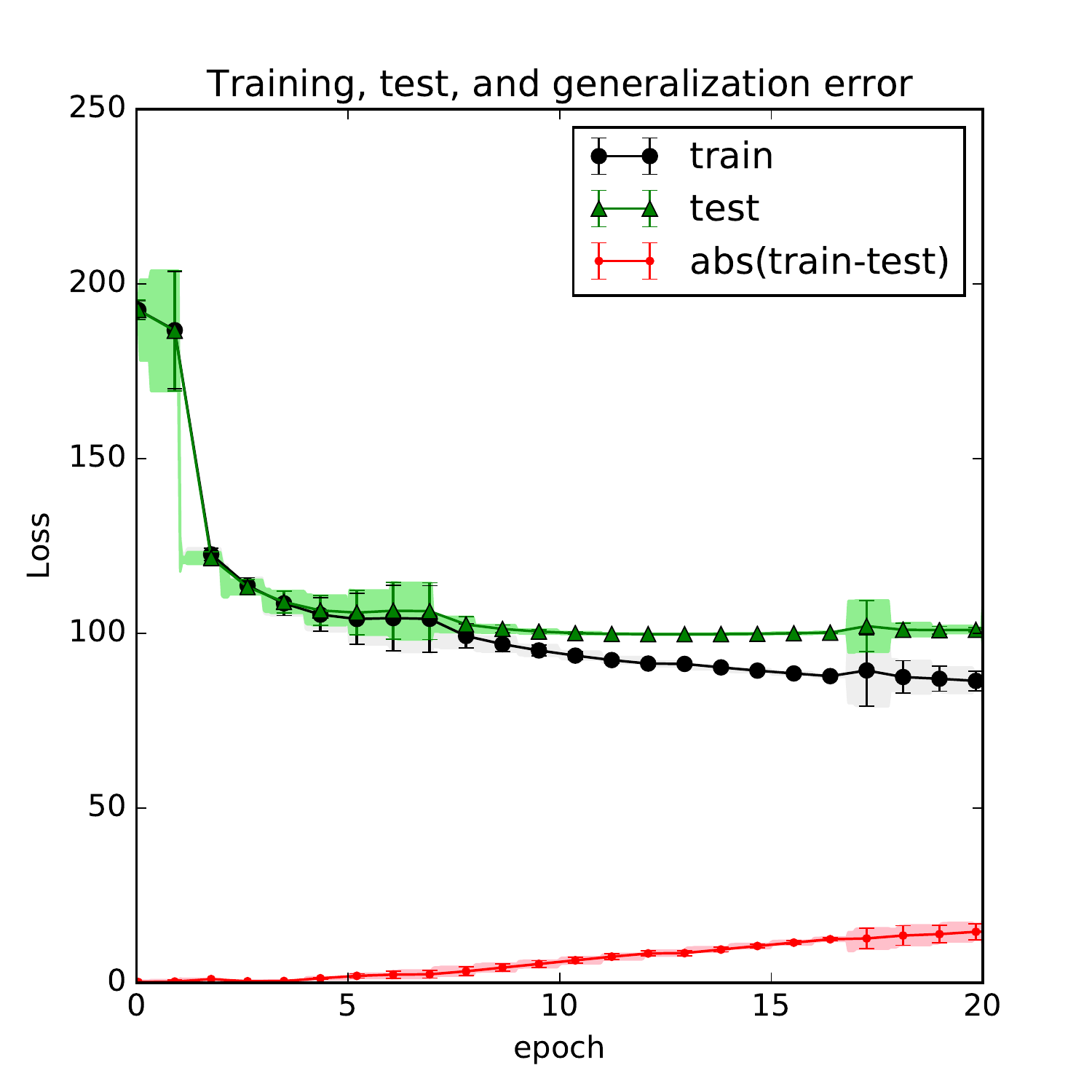}
\includegraphics[width=0.23\textwidth]{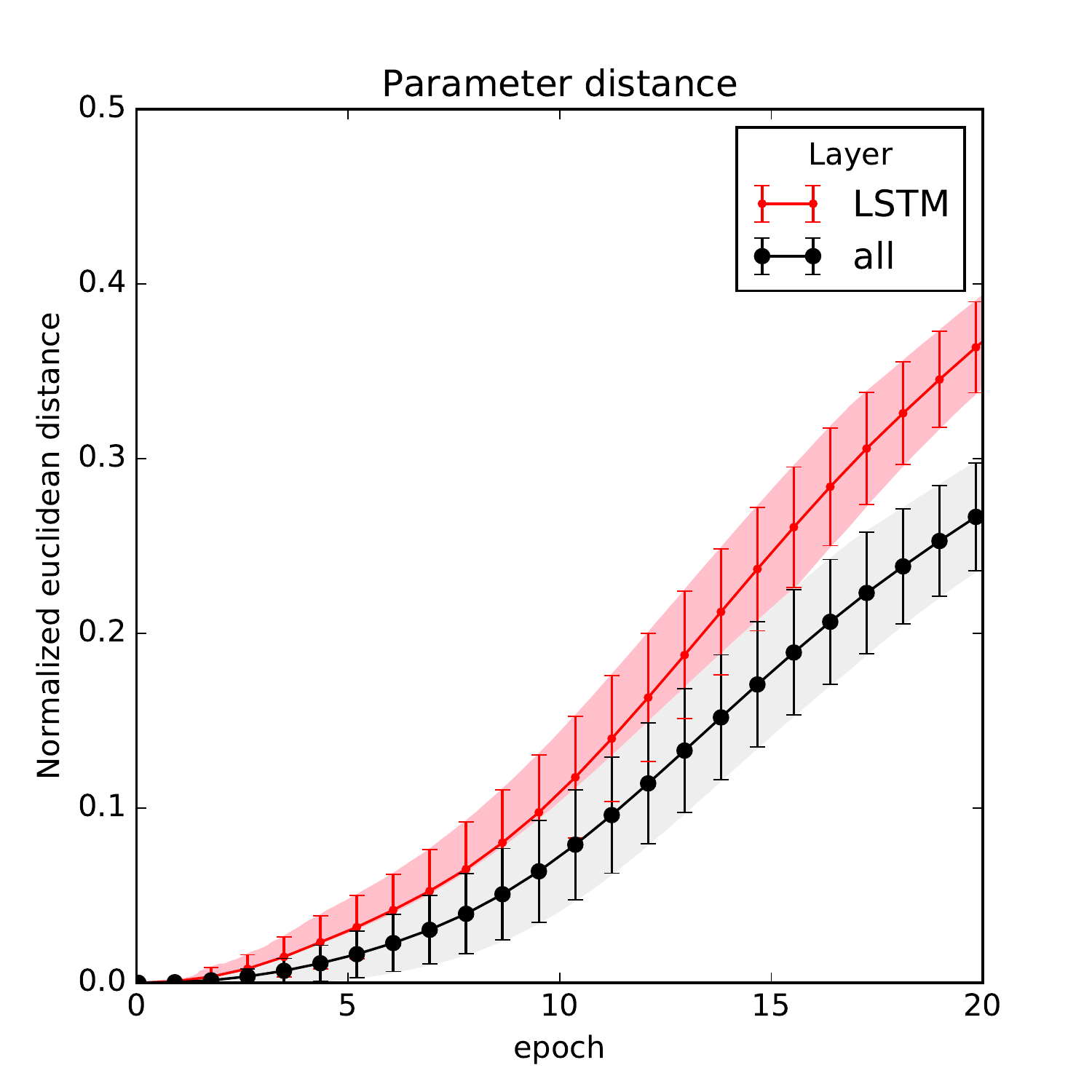}
\caption{Training on PTB dataset with LSTM architecture.}
\label{fig:ptb}
\end{figure}
}{
\begin{figure}[h!]
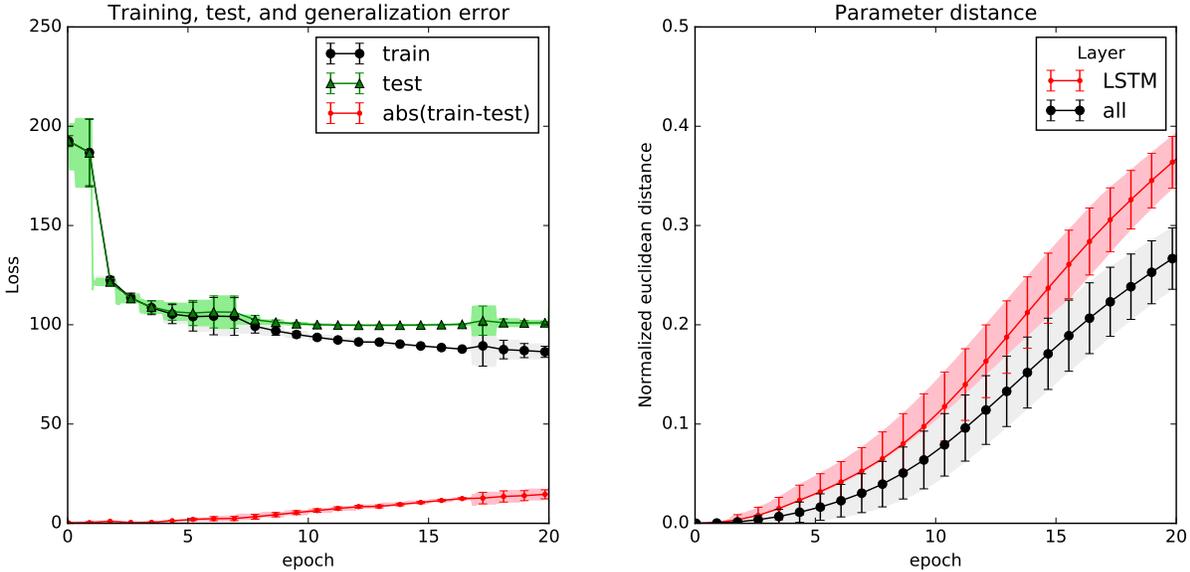

\includegraphics[width=0.5\textwidth]{{\ptbpath}/{ptb-loss-0.1-21epochs}.pdf}
\includegraphics[width=0.5\textwidth]{{\ptbpath}/{ptb-norms-0.1-21epochs}.pdf}
\caption{Training on PTB data set with LSTM architecture.}
\label{fig:ptb}
\end{figure}
}

\section{Future Work and Open Problems}
Our analysis parts from much previous work in that we directly analyze the 
generalization performance of an algorithm rather than the solution of an 
optimization problem. In doing so we build on the toolkit usually used to prove
that algorithms converge in objective value.

This approach could be more powerful than analyzing optimality conditions, as
it may be easier to understand how each data point affects a procedure rather
than an optimal solution. It also has the advantage that the generalization
bound holds even if the algorithm fails to find a unique optimal solution as is
common in non-convex problems.

In addition to this broader perspective on algorithms for learning, there are
many exciting theoretical and empirical directions that we intend to pursue in
future work.  

\paragraph{High Probability Bounds.}  The results in this paper are all~\emph{in expectation}.  Similar to the well-known proofs of the stochastic gradient method, deriving bounds on the expected risk is relatively straightforward, but high probability bounds need more attention and care~\cite{Nemirovski09, Rakhlin11}.  In the case of stability, the standard techniques from Bousquet and Elisseeff require uniform stability on the order of $O(1/n)$ to apply exponential concentration inequalities like McDiaramid's~\cite{BousquetE02}.  For larger values of the stability parameter $\epsilon_{\mathrm{stab}}$, it is more difficult to construct such high probability bounds.  In our setting, things are further complicated by the fact that our algorithm is itself randomized, and thus a concentration inequality must be devised to account for both the randomness in the data and in the training algorithm.  Since differential privacy and stability are closely related, one possibility is to derive concentration via an algorithmic method, similar to the one developed by Nissim and Stemmer~\cite{NissimStemmer15}.  

\paragraph{Stability of the gradient method.}
Since gradient descent can be considered a ``limiting case'' of the stochastic gradient descent method, one can use an argument like our Growth Recursion Lemma (Lemma~\ref{lem:growth}) to analyze its stability.  Such an argument provides an estimate of $\epsilon_\mathrm{stab} \leq \tfrac{\alpha T}{n}$ where $\alpha$ is the step size and $T$ is the number of iterations. Generic bounds for convex functions suggest that gradient descent achieves an optimization error of $O(1/T)$.  Thus, a generalization bound of $O(1/\sqrt{n})$ is achievable, but at a computational complexity of $O(n^{1.5})$.  SGM, on the other hand, achieves a generalization of $O(1/\sqrt{n})$ in time $O(n)$.

In the non-convex case, we are unable to prove any reasonable form of stability
at all. In fact, gradient descent is not \emph{uniformly} stable as it does not enjoy the 
``burn-in'' period of SGM as illustrated in Figure~\ref{fig:hill}.
\begin{figure}
\includegraphics[width=0.5\textwidth,height=2in]{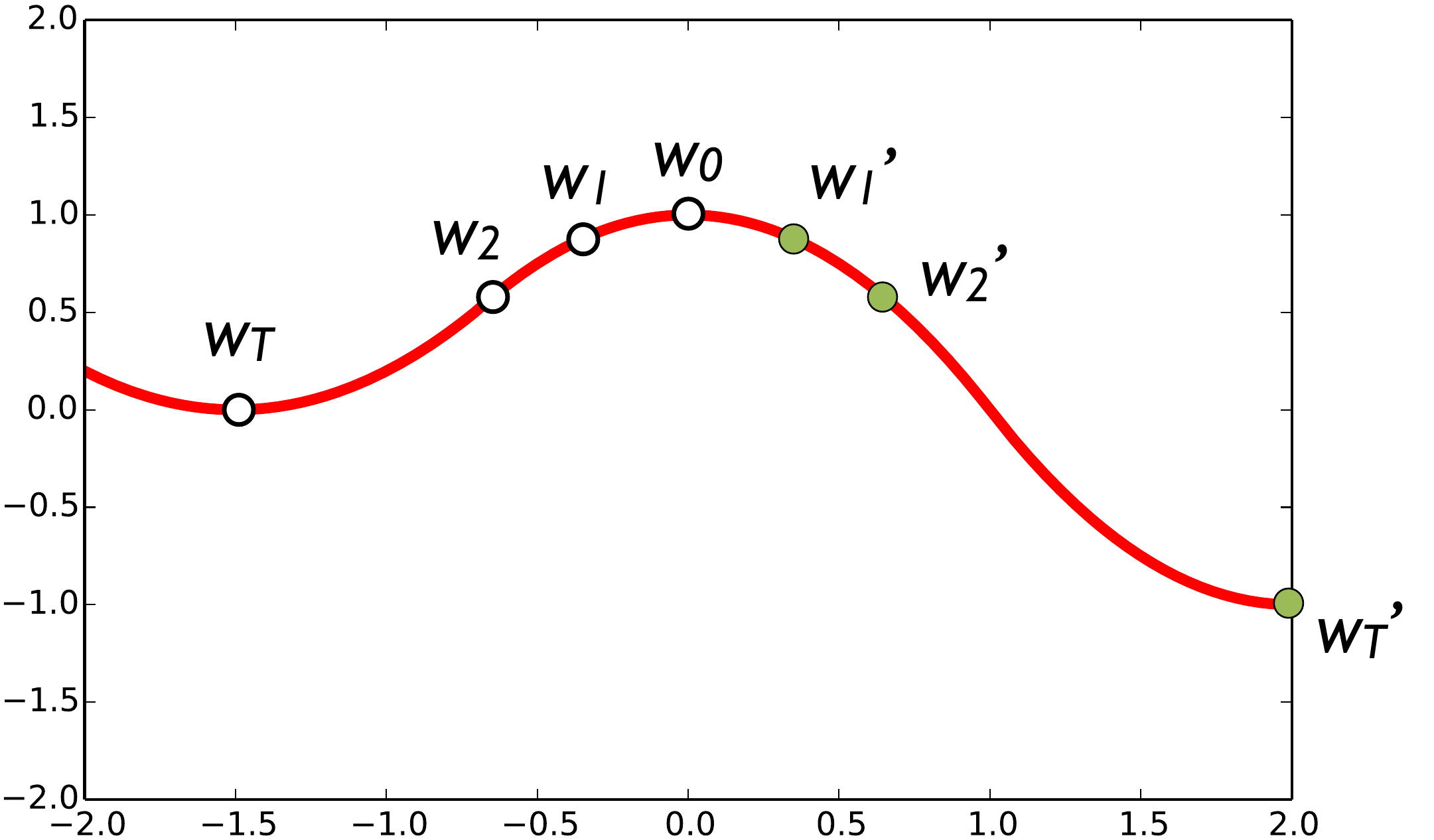}
\includegraphics[width=0.5\textwidth,height=2in]{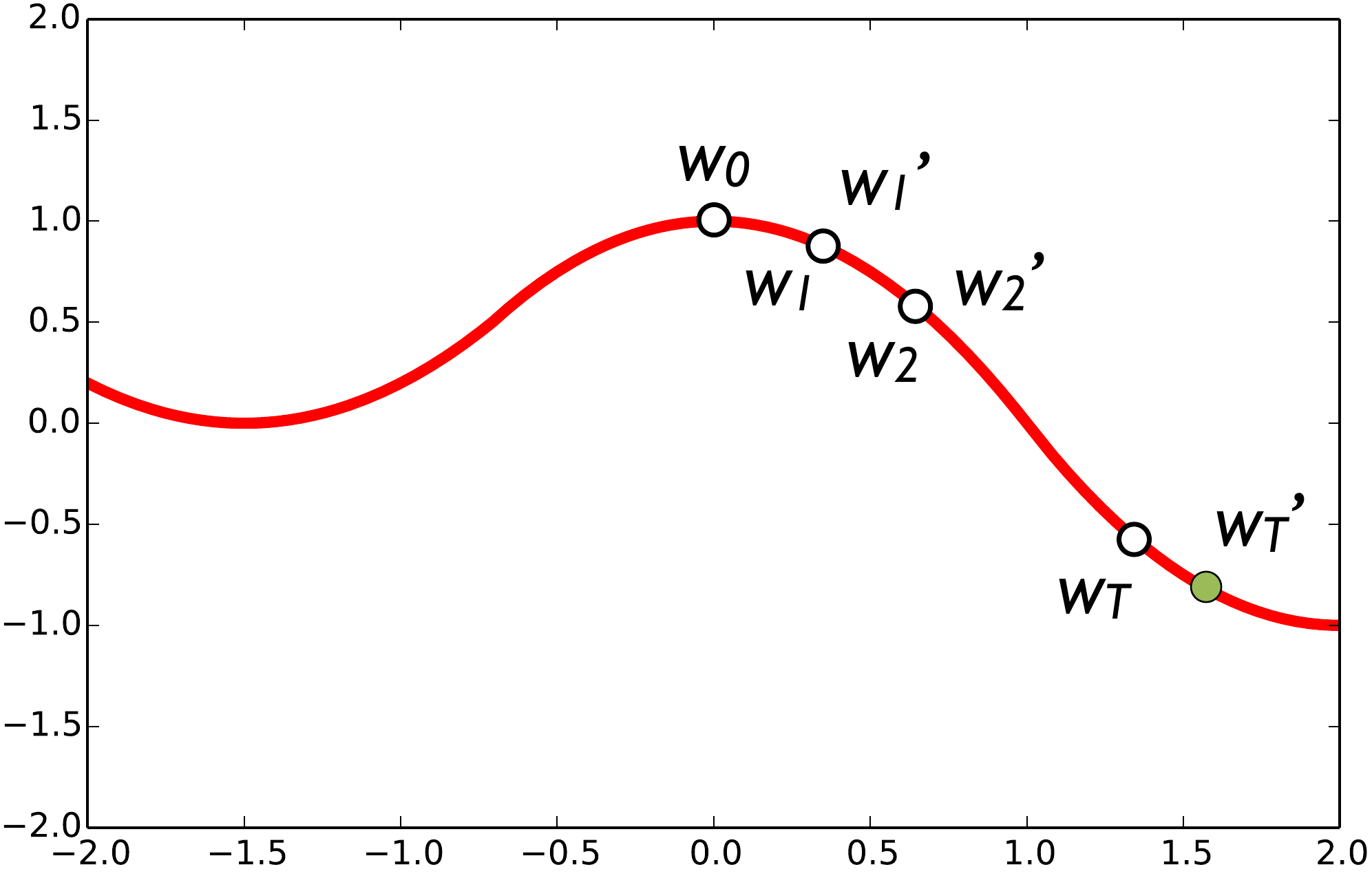}
\caption{Left: Gradient descent is not uniformly stable for non-convex functions. Right: SGM is stable due to ``burn-in'' period.}
\label{fig:hill}
\end{figure}
Poor
generalization behavior of gradient descent has been observed in practice, but
lower bounds for this approach are necessary to rule out a stable
implementation for non-convex machine learning.

\paragraph{Acceleration and momentum.}
We have described how many of the best practices in neural net training can be understood as stability inducing operations.  One very important technique that we did not discuss is~\emph{momentum}.  In momentum methods, the update is a linear combination of the current iterate and the previous direction.  For convex problems, momentum is known to decrease the number of iterations required by stochastic gradient descent~\cite{Lan12}.  For general nonlinear problems, is believed to decrease the number of iterations required to achieve low-training error~\cite{Rumelhart85,PolyakBook}.  However, it is not clear that momentum adds stability.  Indeed, in the case of convex optimization, momentum methods are less robust to noise than gradient methods~\cite{devolder2014first,Lessard14}.  Thus, it is possible that momentum speeds up training but adversely impacts generalization.

\paragraph{Model Selection.} 
Another related avenue that bridges theory and practice is using stability as a method for model selection.  In particular, our results imply that the models that train the fastest also generalize the best.  This suggests that a heuristic for model selection would be to run many different parameter settings and choose the model which results in the lowest training error most quickly.  This idea is relatively simple to try in practice, and ideas from bandit optimization can be applied to efficiently search with this heuristic cost~\cite{Jamieson15,karnin2013almost}.  From the theoretical perspective, understanding the sensibility of this heuristic would require understanding \emph{lower bounds} for generalizability.  Are there necessary conditions which state that models which take a long training time by SGM generalize less well than those with short training times?

\paragraph{High capacity models that train quickly.}  
If the models can be trained quickly via stochastic gradient, our results prove that these models will generalize.  However, this manuscript provides no guidance as to how to build a model where training is stable \emph{and} training error is low.  Designing a family of models which both has high capacity and can be trained quickly would be of significant theoretical and practical interest.

Indeed, the capacity of models trained in current practice steadily increases as
growing computational power makes it possible to effectively train larger
models. It is not uncommon for some models, such as large neural networks, to
have more free parameters than the size of the sample yet have rather small generalization error~\cite{krizhevsky2012imagenet,SzegedyInception}. In fact, sometimes increasing the model capacity even seems to \emph{decrease} the generalization error~\cite{NeyshaburTS15}.  Is it possible to understand this phenomena via stability?  How can we find models which provably both have high capacity and train quickly?

\paragraph{Algorithm Design.}
Finally, we note that stability may also provide new ideas for designing learning rules.  There are a variety of successful methods in machine learning and signal processing that do not compute an exact stochastic gradient, yet are known to find quality stationary points in theory and practice~\cite{bucklew1993weak}.  Do the ideas developed in this paper provide new insights into how to design learning rules that accelerate the convergence and improve the generalization of SGM?

\section*{Acknowledgements}
The authors would like to thank 
Martin Abadi,
Samy Bengio, 
Thomas Breuel,
John Duchi, 
Vineet Gupta, 
Kevin Jamieson,
Kenneth Marino,
Giorgio Patrini,
John Platt, 
Eric Price, 
Ludwig Schmidt,
Nati Srebro, 
Ilya Sutskever,
and Oriol Vinyals 
for their insightful feedback and helpful suggestions.

\bibliographystyle{abbrv}
\bibliography{../refs}

\appendix

\section{Elementary properties of convex functions}

\begin{proof}[Proof of Lemma~\ref{lem:smooth2expansive}]
Let $G=G_{f,\alpha}.$ By triangle inequality and our smoothness assumption,
\begin{align*}
\|G(v)-G(w)\|
& \le \|v-w\|+ \alpha \|\nabla f(w) - \nabla f(v)\|\\
& \le \|v-w\| + \alpha\beta\|w-v\| \\
& = (1+\alpha\beta)\|v-w\|\,. \qedhere
\end{align*}
\end{proof}

\begin{proof}[Proof of Lemma~\ref{lem:convex2expansive}]
Convexity and $\beta$-smoothness implies that the gradients are
co-coercive, namely
\begin{equation}\label{eq:co-coercive}
\langle\nabla f(v)-\nabla f(w),v-w\rangle 
\ge \frac1\beta\|\nabla f(v)-\nabla f(w)\|^2\,.
\end{equation}
We conclude that
\begin{align*}
	\|G_{f,\alpha}(v) - G_{f,\alpha}(w)\|^2 
&= \|v-w\|^2 - 2\alpha\langle \nabla f(v) - \nabla f(w), v-w \rangle 
+ \alpha^2 \|\nabla f(v)-\nabla f(w)\|^2\\
&\leq \|v-w\|^2-\left(\tfrac{2\alpha}{\beta}-\alpha^2\right) \|\nabla f(v)-\nabla f(w)\|^2\\
& \le \|v-w\|^2\,.\qedhere
\end{align*}
\end{proof}

\begin{proof}[Proof of Lemma~\ref{lem:sconvex2expansive}]
First, note that if $f$ is $\gamma$ strongly convex, then $\varphi(w) = f(w) -
\tfrac{\gamma}{2}\|w\|^2$ is convex with $(\beta-\gamma)$-smooth.  Hence, applying~\eqref{eq:co-coercive} to $\varphi$ yields the inequality
\[
\langle \nabla f(v) - \nabla f(w), v-w, \rangle \geq \frac{\beta \gamma}{\beta+\gamma} \|v-w\|^2
+\frac{1}{\beta+\gamma} \| \nabla f(v) - \nabla f(w)\|^2
\]
Using this inequality gives
\[
\begin{aligned}
\|G_{f,\alpha}(v)-G_{f,\alpha}(w)\|^2  &=
\|v-w\|^2 - 2 \alpha \langle \nabla f(v) - \nabla f(w), v-w, \rangle + \alpha^2 \| \nabla f(v) - \nabla f(w)\|^2\\
&\leq  \left(1-2 \frac{\alpha \beta \gamma}{\beta+\gamma}  \right)\|v-w\|^2 - \alpha\left(\frac{2}{\beta+\gamma} - \alpha\right) \| \nabla f(v) - \nabla f(w)\|^2\,.
\end{aligned}
\]
With our assumption that $\alpha \leq \frac{2}{\beta+ \gamma}$, this implies
\[
\|G_{f,\alpha}(v)-G_{f,\alpha}(w)\|  \leq \left(1-2 \frac{\alpha \beta \gamma}{\beta+\gamma}  \right)^{1/2} \|v-w\|\,.
\]
The lemma follows by applying the inequality $\sqrt{1-x} \leq 1-x/2$ which holds for $x \in [0,1]$.
\end{proof}

\begin{proof}[Proof of Lemma~\ref{lem:prox-expansive}]
This proof is due to Rockafellar~\cite{Rockafellar76}. Define
\begin{equation}\label{eq:prox-map-def}
P_\nu(w) = \arg \min_v \frac{1}{2\nu}\|w-v\|^2  + f(v)\,.
\end{equation}
This is the proximal mapping associated with $f$.
Define the map $Q_\nu(w):= w - P_\nu(w)$.  Then, by the optimality conditions
associated with~\eqref{eq:prox-map-def}, we have
\[
	\nu^{-1} Q_\nu(w) \in \partial f (P_\nu(w))\,.
\]
By convexity of $f$, we then have
\[
\langle P_\nu(v) - P_\nu(w), Q_\nu(v)-Q_\nu(w)\rangle \geq 0\,.
\]
Using this inequality, we have
\[
\begin{aligned}
		  \|v-w\|^2&=\|[P_\nu(v) - P_\nu(w)]+[Q_\nu(v)-Q_\nu(w)]\|^2\\
		  & = \|P_\nu(v) - P_\nu(w)\|^2 +2 \langle P_\nu(v)
				- P_\nu(w), Q_\nu(v)-Q_\nu(w)\rangle
				+ \|Q_\nu(v)-Q_\nu(w)\|^2\\
		&\geq \|P_\nu(v) - P_\nu(w)\|^2\,,
\end{aligned}
\]
thus completing the proof.
\end{proof}

\if{0}
\newpage

\section{Minibatches}

For a subset $B \subset S$, denote a \emph{minibatch} stochastic gradient by the average of $\nabla f(w;z)$ over the examples in the set $B$:
\[
	\nabla f(w;B) = \frac{1}{|B|} \sum_{i \in B} \nabla f(w;z)
\]

For a minibatch stochastic gradient update, we sample $b$ examples without replacement from the training set $S$ and update
\[
	w_{t+1} = w_t - \alpha \nabla f(w_t; B)
\]

To analyze the minibatch case, we need a slightly modified version of our Growth Recursion Lemma.
\begin{lemma}[Growth recursion for minibatches]
\label{lem:growth-mb}
Fix an arbitrary sequence of minibatch stochastic gradient updates $\nabla f(\cdot; B_1),\dots,\nabla f(\cdot; B_T)$ with batch size $b$ sampled from the set $S$.
Let the sequence $\nabla f(\cdot; B_1'),\dots,\nabla f(\cdot; B_T')$ where example $z_i$ is replaced by $z_i'$.
Let $\delta_t = \|w_t - w_t'\|$
Then, we have the recurrence relation
\begin{align*}
\delta_0 & = 0 \\
\delta_{t+1} & \le \begin{cases}
\eta\delta_{t} & \text{$B_t=B_t'$ and $w- \nabla f(w;B_t)$ is $\eta$-expansive}  \\
\eta \delta_{t}+2\tfrac{L}{b} & \text{$B_t\neq B_t'$ $\nabla f(w;z)$ is bounded by $L$ for all $w$ and $z$} \\
\end{cases}\tag{$t>0$}
\end{align*}
\end{lemma}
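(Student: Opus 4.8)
The plan is to follow the proof of the Growth Recursion Lemma (Lemma~\ref{lem:growth}) essentially verbatim, the one new ingredient being that a minibatch gradient $\nabla f(\cdot;B) = \frac1b\sum_{z\in B}\nabla f(\cdot;z)$ is an average of $b$ per-example gradients, of which at most one differs between $S$ and $S'$. First I would pin down the correspondence between the two batch sequences: writing $i$ for the index at which $S$ and $S'$ differ, the batch $B_t'$ equals $B_t$ whenever $z_i\notin B_t$, and otherwise $B_t'$ is $B_t$ with $z_i$ replaced by $z_i'$. Hence $B_t=B_t'$ precisely when the differing example is absent from the $t$-th minibatch, and in that case the update maps $G_t(w)=w-\alpha\nabla f(w;B_t)$ and $G_t'(w)=w-\alpha\nabla f(w;B_t')$ are literally the same function of $w$.

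For the first branch ($B_t=B_t'$), this observation gives $\delta_{t+1}=\|G_t(w_t)-G_t(w_t')\|\le\eta\delta_t$ directly from the hypothesis that $w\mapsto w-\alpha\nabla f(w;B_t)$ is $\eta$-expansive (which holds, e.g., via $\beta$-smoothness and Lemma~\ref{lem:smooth2expansive}, or via convexity with a small step size and Lemma~\ref{lem:convex2expansive}, since smoothness and convexity of each summand are inherited by their average). For the second branch ($B_t\neq B_t'$), the key estimate is that the two update maps are \emph{uniformly} close: for every $w$,
\[
\|G_t(w)-G_t'(w)\| = \alpha\,\|\nabla f(w;B_t)-\nabla f(w;B_t')\|
= \frac{\alpha}{b}\,\|\nabla f(w;z_i)-\nabla f(w;z_i')\| \le \frac{2\alpha L}{b},
\]
because the two averaged gradients agree on $b-1$ of their $b$ summands and each per-example gradient has norm at most $L$. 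This is exactly where the batch size enters: a larger batch dilutes the influence of the one perturbed example. Splitting with the triangle inequality as in Lemma~\ref{lem:growth},
\[
\delta_{t+1} = \|G_t(w_t)-G_t'(w_t')\| \le \|G_t(w_t)-G_t(w_t')\| + \|G_t(w_t')-G_t'(w_t')\| \le \eta\delta_t + \frac{2\alpha L}{b},
\]
which is the second branch of the claimed recurrence with the step size made explicit; and, just as in Lemma~\ref{lem:growth}, one may replace $\eta$ by $\min(\eta,1)$ via the alternative split $\delta_{t+1}\le\delta_t+\|G_t(w_t)-w_t\|+\|w_t'-G_t'(w_t')\|$.

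I do not expect a genuine obstacle here; the only thing to be careful about is the bookkeeping of the batch correspondence and confirming that the single swapped summand is the sole source of discrepancy in the second case — everything else is the same triangle-inequality computation as in Lemma~\ref{lem:growth}, with the factor $1/b$ appearing precisely as the averaging gain. As with Lemma~\ref{lem:growth}, I would stress that this is a deterministic recurrence for any fixed realization of the minibatch sequence; the randomness of minibatch selection (in particular, $\P\{z_i\in B_t\}=b/n$ under sampling without replacement) is used only later, together with linearity of expectation, when this recurrence is fed into the stability theorems.
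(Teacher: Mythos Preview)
Your proposal is correct and follows essentially the same route as the paper's proof: both split $\|G_t(w_t)-G_t'(w_t')\|$ via the triangle inequality into $\|G_t(w_t)-G_t(w_t')\|+\|G_t(w_t')-G_t'(w_t')\|$, bound the first term by $\eta\delta_t$ using expansiveness, and bound the second by $\tfrac{2\alpha L}{b}$ since only one of the $b$ averaged summands differs. Your observation that the step size $\alpha$ should appear in the second branch is also borne out by the paper's own derivation, which ends with $\eta\delta_t+\tfrac{2\alpha L}{b}$ despite the statement omitting $\alpha$.
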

\begin{proof}
The first bound on $\delta_t$ follow directly from the assumption that
the minibatch SGD updates are from identical minibatches and then from the definition of expansiveness. For the second bound, we apply the triangle inequality
\begin{align*}
\delta_{t+1} =
&\| w_t - \alpha\nabla f(w_t;B) - w_t' + \alpha \nabla f(w_t';B_t')\| + \alpha \| \nabla f(w_t';B_t) -\nabla f(w_t';B_t')\|\\
\leq & \eta \delta_t +\frac{\alpha}{b} \| \nabla f(w_t';z_t) - \nabla f(w_t',z_t')\|\\
\leq & \eta \delta_t + \frac{2 \alpha L}{b}\,.
\end{align*}
\end{proof}

We additionally need a modified statement of our ``burn in'' Lemma:
\begin{lemma}
\label{lem:meta-mb}
Assume that the loss function $f(\cdot\,;z)$ is $L$-Lipschitz for all~$z.$ Let
$S$ and $S'$ be two samples of size $n$ differing in only a single example,
and denote by $w_T$ and $w_T'$ the output of $T$ steps of minibatch SGD with
batch size $b$ on $S$ and $S',$ respectively. Then, for every $z\in Z$
and every $t_0\in\{0,1,\dots,n\},$ under both the random update rule and the
random permutation rule, we have
\[
\E\left|f(w_T;z)-f(w_T';z)\right| \le \frac{b t_0}n +
L\E\left[\delta_T\mid\delta_{t_0}=0\right]\,.
\]
\end{lemma}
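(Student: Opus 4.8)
The plan is to follow the proof of Lemma~\ref{lem:meta} essentially line by line, the one structural change being that a single minibatch step now ``touches'' $b$ training examples instead of one. First I would fix samples $S$ and $S'$ differing in a single position $i^*$, fix an arbitrary test point $z$, and introduce the event $\mathcal{E}=\{\delta_{t_0}=0\}$. Conditioning on $\mathcal{E}$ and on $\mathcal{E}^c$ gives
\[
\E\left|f(w_T;z)-f(w_T';z)\right|
\le \E\left[\left|f(w_T;z)-f(w_T';z)\right|\mid\mathcal{E}\right]
+\P\{\mathcal{E}^c\}\cdot\sup_{w,z}f(w;z)\,,
\]
and the Lipschitz hypothesis bounds the first term by $L\,\E[\delta_T\mid\mathcal{E}]=L\,\E[\delta_T\mid\delta_{t_0}=0]$, while nonnegativity of $f$ (together with the normalization $\sup_{w,z}f(w;z)\le 1$ under which this lemma is applied) bounds the second term by $\P\{\mathcal{E}^c\}$. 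So everything reduces to showing $\P\{\mathcal{E}^c\}\le bt_0/n$.

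For that I would let $I$ be the index of the first step $t$ whose minibatch $B_t$ contains the differing example $z_{i^*}$. Because $S$ and $S'$ are identical except at position $i^*$, on the event $\{I>t_0\}$ the minibatches $B_1,\dots,B_{t_0}$ pick out the same examples in the two runs; since $w_0=w_0'$ and both runs use the same randomness, this forces $w_t=w_t'$ for every $t\le t_0$, hence $\delta_{t_0}=0$. Therefore $\P\{\mathcal{E}^c\}=\P\{\delta_{t_0}\ne 0\}\le\P\{I\le t_0\}$, and it remains to bound $\P\{I\le t_0\}$ under the two sampling rules. Under the random permutation rule the first $t_0$ minibatches occupy the first $bt_0$ positions of a uniformly random permutation of $\{1,\dots,n\}$, so by symmetry $\P\{I\le t_0\}=bt_0/n$ whenever $bt_0\le n$, and the asserted bound is vacuous otherwise. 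Under the random selection rule each $B_t$ is a uniformly random $b$-subset of $\{1,\dots,n\}$, so $\P\{z_{i^*}\in B_t\}=b/n$, and a union bound over $t\le t_0$ gives $\P\{I\le t_0\}\le\sum_{t=1}^{t_0}b/n=bt_0/n$. Plugging this into the display above yields the claim.

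This is almost entirely bookkeeping; the one spot that needs a moment's care is the permutation case, where one must argue that the block structure (``$b$ consecutive entries of the permutation per step'') makes $B_1\cup\cdots\cup B_{t_0}$ a uniformly random $bt_0$-subset of $\{1,\dots,n\}$, so that position $i^*$ is included with probability exactly $bt_0/n$, and one should not forget the trivial regime $bt_0>n$ where $\P\{I\le t_0\}=1\le bt_0/n$. A secondary point worth stating is that conditioning on $\mathcal{E}$ is harmless for the downstream use of this lemma: the minibatch growth recursion is invoked on the conditioned process started from $\delta_{t_0}=0$, which is exactly the quantity $\E[\delta_T\mid\delta_{t_0}=0]$ appearing on the right-hand side.
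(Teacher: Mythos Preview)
Your proposal is correct and follows exactly the approach the paper intends: the paper's own proof of Lemma~\ref{lem:meta-mb} simply says it is ``nearly identical to that of Lemma~\ref{lem:meta}'' with the single change that the probability of selecting the differing example before step $t_0$ is now bounded by $bt_0/n$ rather than $t_0/n$, and you have unpacked precisely that. Your added care about the implicit nonnegativity/boundedness assumption on $f$ (needed to replace $\sup_{w,z}f(w;z)$ by $1$ in the stated bound) and about the $bt_0>n$ regime in the permutation case is appropriate and does not depart from the paper's argument.
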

\begin{proof}
The proof is nearly identical to that of Lemma~\ref{lem:meta}.  The only difference is in the final probability bound.  Note that in both the permutation model and the random selection rule, the probability that a particular element is selected before time $t_0$ is bounded by $b t_0/n$.  This completes the proof.
\end{proof}

\subsection{Convex and Strongly Convex Cases}

For the convex and strongly convex cases, the stability bounds for mini-batches are identical to the bounds for single-example SGD.  The proofs are also almost the same,  but with a minor modification to the main recurrence relations.  We state these results and sketch the proofs here.

\begin{theorem}
\label{thm:convex-mb}
Assume that $f\colon \Omega\to[0,1]$ is a decomposable convex $\beta$-smooth
$L$-Lipschitz function and that we run minibatch SGD with step sizes $\alpha_t\le 2/\beta$ and batch sizes $b$ for $T$ steps.  Then, SGD has uniform stability with
\[
\epsilon_{\mathrm{stab}}
\le \frac{2L^2}{n}\sum_{t=1}^T\alpha_t\,.
\]
\end{theorem}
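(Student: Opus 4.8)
The plan is to follow the proof of Theorem~\ref{thm:convex} almost verbatim, substituting the minibatch version of the Growth Recursion Lemma~\ref{lem:growth} for the single-example one. Fix two samples $S,S'$ of size $n$ differing only in position $i^\ast$, and run minibatch SGM with the same random choices on both, producing iterates $w_t,w_t'$ and parameter gap $\delta_t=\|w_t-w_t'\|$ with $\delta_0=0$. At step $t$ the algorithm draws a minibatch $B_t\subseteq\{1,\dots,n\}$ of size $b$ and updates $w_{t+1}=w_t-\alpha_t\nabla f(w_t;B_t)$, where $\nabla f(\cdot;B_t)=\tfrac1b\sum_{i\in B_t}\nabla f(\cdot;z_i)$. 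Since $f(\cdot\,;z)$ is $L$-Lipschitz for every $z$, it suffices to bound $\E[\delta_T]$ and then invoke $\E|f(w_T;z)-f(w_T';z)|\le L\,\E[\delta_T]$.

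The case analysis has two branches. If $i^\ast\notin B_t$ — which happens with probability $1-b/n$ under sampling without replacement — then $B_t=B_t'$ and the update is the gradient step $G_{f_{B_t},\alpha_t}$ on the averaged function $f_{B_t}=\tfrac1b\sum_{i\in B_t}f(\cdot;z_i)$. A uniform average of convex $\beta$-smooth functions is again convex and $\beta$-smooth, so for $\alpha_t\le 2/\beta$ the update $G_{f_{B_t},\alpha_t}$ is $1$-expansive by Lemma~\ref{lem:convex2expansive}, giving $\delta_{t+1}\le\delta_t$. If $i^\ast\in B_t$ (probability $b/n$), then $B_t$ and $B_t'$ agree except for that single index, so $\nabla f(w;B_t)-\nabla f(w;B_t')=\tfrac1b\bigl(\nabla f(w;z_{i^\ast})-\nabla f(w;z_{i^\ast}')\bigr)$ has norm at most $2L/b$ by the Lipschitz bound. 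Writing $G_t'(w_t')=w_t'-\alpha_t\nabla f(w_t';B_t)+\alpha_t\bigl(\nabla f(w_t';B_t)-\nabla f(w_t';B_t')\bigr)$ and using the $1$-expansiveness of the common part $w\mapsto w-\alpha_t\nabla f(w;B_t)$ together with the triangle inequality yields $\delta_{t+1}\le\delta_t+2\alpha_tL/b$.

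Taking expectations and combining the two branches via linearity of expectation,
\[
\E[\delta_{t+1}]\le\Bigl(1-\tfrac{b}{n}\Bigr)\E[\delta_t]+\tfrac{b}{n}\Bigl(\E[\delta_t]+\tfrac{2\alpha_t L}{b}\Bigr)=\E[\delta_t]+\tfrac{2\alpha_t L}{n}\,,
\]
so unrolling from $\delta_0=0$ gives $\E[\delta_T]\le\tfrac{2L}{n}\sum_{t=1}^T\alpha_t$. Plugging this into the Lipschitz bound gives $\E|f(w_T;z)-f(w_T';z)|\le\tfrac{2L^2}{n}\sum_{t=1}^T\alpha_t$ for every $z\in Z$ and every admissible pair $S,S'$, which is the claimed uniform stability. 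The identical argument applies under the random permutation rule, since there too a fixed index lies in a given batch of size $b$ with probability $b/n$.

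I do not expect a genuine obstacle: the whole content is the bookkeeping observation that the per-step divergence in the ``differing'' branch shrinks by a factor $b$ relative to single-example SGM, while the probability of landing in that branch grows by the same factor $b$, so the two cancel and the recursion — hence the final bound — is exactly that of the $b=1$ case. The one line requiring care is noting that a uniform average of convex $\beta$-smooth functions is convex and $\beta$-smooth, so that Lemma~\ref{lem:convex2expansive} genuinely applies to $G_{f_{B_t},\alpha_t}$; everything else is mechanical.
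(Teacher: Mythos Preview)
Your proposal is correct and follows essentially the same route as the paper: the same case split on whether $i^\ast\in B_t$, the same use of $1$-expansiveness of the averaged gradient step in the ``same batch'' case, and the same add-and-subtract decomposition in the ``differing batch'' case to get $\delta_{t+1}\le\delta_t+2\alpha_t L/b$, yielding the identical recursion $\E[\delta_{t+1}]\le\E[\delta_t]+2\alpha_t L/n$. Your explicit remark that an average of convex $\beta$-smooth functions is again convex and $\beta$-smooth is the one point the paper leaves implicit, but otherwise the arguments coincide.
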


\begin{proof}
The proof is nearly identical to the proof of Theorem~\ref{thm:convex}.  The only thing that changes is~\eqref{eq:convex-recursion}.  Note that at step $t$, with probability $1-b/n,$ the
batch of examples selected by minibatch SGD is the same in both $S$ and $S'.$ In this case we have
that $G_t=G_t'$ and we can use the $1$-expansivity of the update rule $G_t$.  With probability $b/n$,  the replaced example is selected in which case we the second case in~\ref{lem:growth-mb}. We then conclude that for every $t,$
\begin{align*}
\E\delta_t 
 \le \left(1-\frac{b}{n}\right)\E\delta_t + \frac{b}{n}\E\delta_t  +
\frac{2\alpha_t L}n  = \E\delta_t + \frac{2L\alpha_t}{n}\,.
\end{align*}
The proof is now identical to the proof of Theorem~\ref{thm:convex}.
\end{proof}

\begin{theorem}
\label{thm:sconvex-mb}
Assume that $f\colon \Omega\to[0,1]$ is a decomposable 
$\gamma$-strongly convex $L$-Lipschitz $\beta$-smooth
function and that we run minibatch SGD with constant step size $\alpha\le 1/\beta$ and batch size $b$ for $T$ steps.
Then, SGD has uniform stability with
\[
\epsilon_{\mathrm{stab}}
\le \frac{2 L^2}{\gamma n}\,.
\]
\end{theorem}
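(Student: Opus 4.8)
The plan is to re-run the proof of Theorem~\ref{thm:sconvex} with the two minibatch modifications already exploited in the proof of Theorem~\ref{thm:convex-mb}: we use the minibatch growth recursion (Lemma~\ref{lem:growth-mb}) in place of Lemma~\ref{lem:growth}, and we charge the ``bad event'' to the probability $b/n$ that the one swapped example is selected at a given step. First I would fix two samples $S,S'$ of size $n$ differing only in position $i^\ast$, run the minibatch iteration on both from the common starting point $w_0=w_0'$ with a shared source of randomness, and set $\delta_t=\|w_t-w_t'\|$ (so $\delta_0=0$). Since each loss is $L$-Lipschitz, $\E|f(w_T;z)-f(w_T';z)|\le L\,\E[\delta_T]$ for every $z$, so it suffices to bound $\E[\delta_T]$.

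The key point is that the minibatch gradient $\nabla f(\cdot;B_t)=\tfrac1b\sum_{i\in B_t}\nabla f(\cdot;z_i)$ is the gradient of an average of $\gamma$-strongly convex, $\beta$-smooth functions, hence is itself the gradient of a $\gamma$-strongly convex, $\beta$-smooth function; so for $\alpha\le 1/\beta$ the update $w\mapsto w-\alpha\nabla f(w;B_t)$ is $(1-\alpha\gamma)$-expansive by the simplification of Lemma~\ref{lem:sconvex2expansive} used in the proof of Theorem~\ref{thm:sconvex}. Now consider step $t$. With probability $1-b/n$ the distinguished index $i^\ast$ is not in $B_t$, so the batches (and hence the update maps) agree on $S$ and $S'$, and the first case of Lemma~\ref{lem:growth-mb} gives $\delta_{t+1}\le(1-\alpha\gamma)\delta_t$; with probability $b/n$ the index $i^\ast$ is drawn, the two batches differ in exactly the example $z_{i^\ast}$ versus $z_{i^\ast}'$, and the second case of Lemma~\ref{lem:growth-mb} together with the $L$-bound on gradients gives $\delta_{t+1}\le(1-\alpha\gamma)\delta_t+2\alpha L/b$. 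Taking expectations and combining,
\begin{align*}
\E[\delta_{t+1}]
&\le\Bigl(1-\tfrac bn\Bigr)(1-\alpha\gamma)\,\E[\delta_t]
+\tfrac bn\Bigl((1-\alpha\gamma)\,\E[\delta_t]+\tfrac{2\alpha L}{b}\Bigr)\\
&=(1-\alpha\gamma)\,\E[\delta_t]+\frac{2\alpha L}{n}\,.
\end{align*}

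Unwinding this recursion from $\delta_0=0$ yields $\E[\delta_T]\le\frac{2\alpha L}{n}\sum_{t=0}^{T-1}(1-\alpha\gamma)^t\le\frac{2\alpha L}{n}\cdot\frac1{\alpha\gamma}=\frac{2L}{\gamma n}$, and therefore $\E|f(w_T;z)-f(w_T';z)|\le\frac{2L^2}{\gamma n}$; since this holds for all $S,S'$ differing in one example and all $z$, it is the asserted uniform stability bound. I expect the only delicate points --- and thus the closest thing to an obstacle --- to be two routine verifications: that under both selection rules the swapped index $i^\ast$ lands in batch $B_t$ with probability exactly $b/n$ (the same counting argument underlying Lemma~\ref{lem:meta-mb}, applied per step rather than cumulatively), and that averaging $b$ loss functions preserves both $\gamma$-strong convexity and $\beta$-smoothness so that Lemma~\ref{lem:sconvex2expansive} genuinely applies to the minibatch update. (If one additionally composes the update with a Euclidean projection onto a bounded $\Omega$, as in Theorem~\ref{thm:sconvex}, nothing changes: projection is $1$-expansive by Lemma~\ref{lem:prox-expansive}, so it can only shrink $\delta_{t+1}$.) Everything else is verbatim the argument of Theorem~\ref{thm:sconvex}.
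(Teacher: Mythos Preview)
Your proposal is correct and follows essentially the same argument as the paper's own proof sketch: you reduce to the recurrence $\E[\delta_{t+1}]\le(1-\alpha\gamma)\E[\delta_t]+2\alpha L/n$ via the minibatch growth lemma together with the $(1-\alpha\gamma)$-expansiveness of the minibatch update, then sum the geometric series exactly as in Theorem~\ref{thm:sconvex}. Your explicit justification that averaging preserves $\gamma$-strong convexity and $\beta$-smoothness (so that Lemma~\ref{lem:sconvex2expansive} applies to the minibatch gradient step) is a detail the paper leaves implicit, but the overall route is the same.
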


\begin{proof}
The proof is  nearly identical to that of Theorem~\ref{thm:sconvex} with a slightly different
recurrence relation.  Just as in the convex case, the probabiility that the minibatches at time $t$ are identical is $1-b/n$.  Using Lemma~\ref{lem:growth}, linearity of expectation, and Lemma~\ref{lem:sconvex2expansive}, we then have
\begin{align*}
\E\delta_t
& \le \left(1-\frac{b}{n}\right)(1-\alpha \gamma)\E\delta_t + \frac{b}{n}(1-\alpha \gamma)\E\delta_t  +
\frac{2\alpha L}n  \\
& \le \left(1- \alpha \gamma \right)\E\delta_t    +
\frac{2\alpha L}n  \,.
\end{align*}
The proof is now identical to the proof of Theorem~\ref{thm:sconvex}.
\end{proof}

For decaying stepsizes, we actually find that generalizing the analysis for $b=1$ to $b>1$ yields an upper bound bound degrades as we increase our minibatch size.  However, this does not preclude a new analysis specially tailored to this minibatch setting.
\begin{theorem}
\label{thm:sconvex-decaying-mb}
Assume that $f\colon \Omega\to[0,1]$ is a decomposable $\gamma$-strongly convex 
$L$-Lipschitz $\beta$-smooth
function and that we run minibatch SGD with step sizes  
$\alpha_t= \frac{1}{\gamma t }$ and batch size $b$.
Then, SGD has uniform stability with
\[
	\epsilon_{\mathrm{stab}} \leq \frac{2L^2 + b\beta}{\gamma n}\,.
\]
\end{theorem}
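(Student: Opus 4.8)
The plan is to transcribe the proof of Theorem~\ref{thm:sconvex-decaying} almost verbatim, replacing the growth recursion and the ``burn-in'' lemma by their minibatch analogues (Lemma~\ref{lem:growth-mb} and Lemma~\ref{lem:meta-mb}). Fix two samples $S,S'$ of size $n$ that differ only in position $i^\star$, run minibatch SGM with the same internal randomness on both, and set $\delta_t = \|w_t - w_t'\|$, so $\delta_0 = 0$. By Lemma~\ref{lem:meta-mb}, for every $t_0 \in \{0,1,\dots,n\}$ and every $z\in Z$ we have
\[
\E\left|f(w_T;z) - f(w_T';z)\right| \le \frac{b t_0}{n} + L\,\E\left[\delta_T \mid \delta_{t_0} = 0\right]\,,
\]
so it remains to bound the conditional divergence $\E[\delta_T \mid \delta_{t_0}=0]$ for a convenient choice of $t_0$, which I would take to be $t_0 = \beta/\gamma$. (If $\beta/\gamma > n$ then the asserted bound exceeds $b\ge 1$, and since $f\in[0,1]$ forces $\epsilon_{\mathrm{stab}}\le 1$ the statement is vacuously true; otherwise one rounds $t_0$ up to the nearest integer, affecting only constants.)

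On the event $\{\delta_{t_0}=0\}$ the two runs agree through step $t_0$, so I only need the recursion for $t\ge t_0$, where $\alpha_t = \tfrac{1}{\gamma t}\le\tfrac1\beta$. The minibatch gradient $\nabla f(\cdot;B_t)=\tfrac1b\sum_{i\in B_t}\nabla f(\cdot;z_i)$ is the gradient of a $\gamma$-strongly convex, $\beta$-smooth function, so by the simplified form of Lemma~\ref{lem:sconvex2expansive} used in the proof of Theorem~\ref{thm:sconvex} the map $w\mapsto w-\alpha_t\nabla f(w;B_t)$ is $(1-\alpha_t\gamma)=(1-\tfrac1t)$-expansive. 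Applying Lemma~\ref{lem:growth-mb}: with probability $1-b/n$ the two minibatches coincide and $\delta_{t+1}\le(1-\tfrac1t)\delta_t$; with probability $b/n$ the differing example lands in the batch and $\delta_{t+1}\le(1-\tfrac1t)\delta_t+\tfrac{2\alpha_t L}{b}$, the extra term coming from $\tfrac{\alpha_t}{b}\|\nabla f(w_t';z_{i^\star})-\nabla f(w_t';z_{i^\star}')\|\le \tfrac{2\alpha_t L}{b}$. Taking expectations conditioned on $\delta_{t_0}=0$ and substituting $\alpha_t = \tfrac{1}{\gamma t}$ gives
\[
\E[\delta_{t+1}\mid\delta_{t_0}=0] \le \Big(1-\tfrac1t\Big)\,\E[\delta_t\mid\delta_{t_0}=0] + \frac{2L}{\gamma t n}\,,
\]
which is exactly the recurrence appearing in the proof of Theorem~\ref{thm:sconvex-decaying}: the factor-$b$ increase in the chance of touching $z_{i^\star}$ is precisely cancelled by the factor-$1/b$ shrinkage of the minibatch gradient difference.

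Unrolling this from $t_0$ to $T$ with $\delta_{t_0}=0$ and the telescoping identity $\prod_{s=t+1}^T(1-\tfrac1s)=\tfrac tT$ yields
\[
\E[\delta_T\mid\delta_{t_0}=0] \le \sum_{t=t_0}^T \frac tT\cdot\frac{2L}{\gamma t n} = \frac{T-t_0+1}{T}\cdot\frac{2L}{\gamma n} \le \frac{2L}{\gamma n}\,,
\]
and plugging this together with $t_0=\beta/\gamma$ back into the first display gives $\E|f(w_T;z)-f(w_T';z)| \le \tfrac{b\beta}{\gamma n}+\tfrac{2L^2}{\gamma n}$. Since $S,S'$ and $z$ were arbitrary, this is the claimed bound on $\epsilon_{\mathrm{stab}}$. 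The only step that needs any thought beyond bookkeeping is checking that the $\E[\delta_t]$-recursion is identical to the $b=1$ case; the minibatch size surfaces solely through the enlarged burn-in term $\tfrac{b t_0}{n}$ of Lemma~\ref{lem:meta-mb}, and everything else is a line-by-line copy of the argument for Theorem~\ref{thm:sconvex-decaying}.
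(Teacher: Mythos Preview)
Your proposal is correct and follows essentially the same route as the paper: apply Lemma~\ref{lem:meta-mb} with $t_0=\beta/\gamma$, observe that for $t\ge t_0$ the minibatch update is $(1-\alpha_t\gamma)$-expansive, derive the recursion $\E[\delta_{t+1}]\le(1-\tfrac1t)\E[\delta_t]+\tfrac{2L}{\gamma t n}$ (the $b$ in the selection probability cancelling the $1/b$ in the perturbation size), and unroll exactly as in Theorem~\ref{thm:sconvex-decaying}. The paper's proof is in fact terser---it just writes down the recursion and then defers to Theorem~\ref{thm:sconvex-decaying}---so your added remarks on rounding $t_0$ and on the vacuous regime $\beta/\gamma>n$ are more careful than what the paper does, but the argument is the same.
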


\begin{proof}
Note that once $t>\frac{\beta}{\gamma(1-1/n)}$, the iterates are contractive with contractivity $(1-\alpha_t \gamma)$.   Thus, for $t>t_0 := \lceil \frac{\beta}{\gamma(1-1/n)}\rceil$, we have
\[
\begin{aligned}
	\delta_{t+1} &\leq (1-\tfrac{b}{n} )(1-\alpha_t \gamma)\delta_t + \frac{b}{n}((1-\alpha \gamma)\delta_t + \frac{1}{b} 2 \alpha_t L)\\
	&=(1-  \alpha_t \gamma ) \delta_t + \frac{2 \alpha_t L }{n}\\
	& = (1 - \frac{1}{t}) \delta_t + \frac{2  L }{\gamma t n} 
\end{aligned}
\]
The proof is now identical to that of Theorem~\ref{thm:sconvex-decaying}, but we apply Lemma~\ref{lem:meta-mb} instead of Lemma~\ref{lem:meta}.
\end{proof}

\subsection{Non-convex optimization}

For nonconvex optimization, we again see a significant worsening of our stability bounds.  Essentially, each minibatch costs the same as $b$ stochastic gradient steps.  Again the proof is almost identical to the case when $b=1$, so we provide only a sketch.

\begin{theorem}
Assume that $f\colon \Omega\to[0,1]$ is a decomposable $L$-Lipschitz
$\beta$-smooth function and that 
we run SGD for $T$ steps with monotonically non-increasing 
step sizes $\alpha_t\le c/t.$ Then, SGD has uniform stability with
\[
\epsilon_{\mathrm{stab}}
\le 
\frac{\beta c+1}{n}(2cL^2)^{\frac1{\beta c+1}}(bT)^{\frac{\beta c}{\beta c+1}}\,.
\]
In particular, if $\beta=O(1),$ $c=O(1),$ and $L=O(1),$
then 
\[
\epsilon_{\mathrm{stab}}
\le O\left(\frac{(bT)^{1-1/(\beta c+1)}}{n}\right)
\,.
\]
\end{theorem}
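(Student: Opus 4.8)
The plan is to rerun the proof of Theorem~\ref{thm:nonconvex} essentially verbatim, with the minibatch analogues Lemma~\ref{lem:meta-mb} and Lemma~\ref{lem:growth-mb} in place of Lemma~\ref{lem:meta} and Lemma~\ref{lem:growth}. Fix samples $S,S'$ differing in a single example, run minibatch SGD with the same sequence of minibatch index sets on both, and let $w_T,w_T'$ be the outputs, writing $\delta_t=\|w_t-w_t'\|$. First I would invoke Lemma~\ref{lem:meta-mb}: for every $t_0\in\{0,\dots,n\}$,
\[
\E\left|f(w_T;z)-f(w_T';z)\right|\le\frac{b\,t_0}{n}+L\,\E\left[\delta_T\mid\delta_{t_0}=0\right]\,,
\]
where the burn-in term gains a factor $b$ over the $b=1$ case because each step examines $b$ examples, so the replaced example is first touched before step $t_0$ with probability at most $bt_0/n$.

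Next I would write $\Delta_t=\E[\delta_t\mid\delta_{t_0}=0]$ and set up a recursion for $t\ge t_0$ by conditioning on whether the minibatch $B_t$ contains the replaced example. With probability $1-b/n$ it does not, so $B_t=B_t'$ and, since an average of $\beta$-Lipschitz gradients is still $\beta$-Lipschitz, the batch-averaged gradient map is $(1+\alpha_t\beta)$-expansive by Lemma~\ref{lem:smooth2expansive}, which is the first case of Lemma~\ref{lem:growth-mb}. With probability $b/n$ it does, and the second case of Lemma~\ref{lem:growth-mb} gives $\delta_{t+1}\le(1+\alpha_t\beta)\delta_t+2\alpha_t L/b$. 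The key simplification is that on taking expectations the $1/b$ in the additive term cancels against the $b/n$ probability, leaving
\[
\Delta_{t+1}\le(1+\alpha_t\beta)\Delta_t+\frac{2\alpha_t L}{n}\,,
\]
which is exactly the recurrence from the proof of Theorem~\ref{thm:nonconvex}, minus the $(1-1/n)$ refinement (this is why the leading constant here is a touch larger). Using $\alpha_t\le c/t$, $1+x\le e^x$, $\sum_{k=t+1}^T 1/k\le\log(T/t)$, and $\Delta_{t_0}=0$, unwinding from $T$ down to $t_0+1$ yields $\Delta_T\le\frac{2L}{\beta n}(T/t_0)^{\beta c}$, just as before.

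Plugging this back in gives $\E|f(w_T;z)-f(w_T';z)|\le\frac{bt_0}{n}+\frac{2L^2}{\beta n}(T/t_0)^{\beta c}$, and with $q=\beta c$ the right-hand side is approximately minimized at $t_0=(2cL^2/b)^{1/(q+1)}T^{q/(q+1)}$; substituting makes both terms proportional to $n^{-1}(2cL^2)^{1/(q+1)}(bT)^{q/(q+1)}$, and summing them and bounding the $q$-dependent prefactor by $q+1=\beta c+1$ gives the stated inequality, which bounds the uniform stability since it holds for all $S,S',z$; the asymptotic form is then immediate. I do not expect a serious obstacle: the one point needing a word of care is that $t_0$ must be a legal index, so if the optimal choice exceeds $n$ — a regime where $bT$ is huge relative to $n$ and the bound is weak anyway — one simply takes $t_0=n$. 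The conceptual heart is the observation that the minibatch structure costs a factor $b$ only inside the burn-in term of Lemma~\ref{lem:meta-mb}, while leaving the growth recursion for $\Delta_t$ unchanged, so each minibatch effectively behaves like $b$ ordinary steps — which is precisely the source of the $(bT)$ replacing $T$.
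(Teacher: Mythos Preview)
Your proposal is correct and follows the paper's own proof essentially step for step: invoke Lemma~\ref{lem:meta-mb} for the burn-in (picking up the extra factor $b$), use Lemma~\ref{lem:growth-mb} to set up the recursion $\Delta_{t+1}\le(1+\alpha_t\beta)\Delta_t+2\alpha_t L/n$ (with the key cancellation of $b/n$ against $1/b$ that you identified), unwind exactly as in Theorem~\ref{thm:nonconvex}, and optimize over $t_0$. Your added remarks about the legality of $t_0$ and the conceptual source of the $(bT)$ are nice touches the paper omits.
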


\begin{proof}
Let $S$ and $S'$ be two samples of size $n$ differing in only a single example.
Consider the gradient updates $G_1,\dots,G_T$ and $G_1',\dots,G_T'$ induced by
running SGD on sample $S$ and $S',$ respectively. Let $w_T$ and $w_T'$ denote
the corresponding outputs of SGD. 

By Lemma~\ref{lem:meta-mb}, we have for every $t_0\in\{1,\dots,n\},$
\begin{equation}\label{eq:nonconvex-diff-mb}
\E\left|f(w_T)-f(w_T')\right| \le \frac{b t_0}n +
L\E\left[\delta_T\mid\delta_{t_0}=0\right]\,,
\end{equation}
where $\delta_t=\|w_T-w_T'\|.$
To simplify notation, let $\Delta_t = \E\left[\delta_T\mid\delta_{t_0}=0\right].$
We will bound $\Delta_t$ a function of $t_0$ and then optimize for $t_0.$

Toward this goal, observe that at step $t,$ with probability $1-b/n,$ the
example selected by SGD is the same in both $S$ and $S'.$ In this case we have
that $G_t=G_t'$ and we can use the $(1+\alpha_t\beta)$-expansivity of the update rule $G_t$
which follows from our smoothness assumption via
Lemma~\ref{lem:smooth2expansive}. With probability $b/n$ the selected example is
different in which case we use that both $G_t$ and $G_t'$ are
$\alpha_tL$-bounded as a consequence of Lemma~\ref{lem:Lipschitz2bounded}.

Hence, we can apply Lemma~\ref{lem:growth} and linearity of expectation to
conclude that for every $t>t_0,$
\begin{align*}
\Delta_t 
& \le \left(1-\frac{b}{n}\right)(1+\alpha_t\beta)\Delta_t + \frac{b}{n}(1+\alpha_t \beta)\Delta_t  +
\frac{2\alpha_t L}{n}\\
& \leq \left(1 + \frac{c\beta}t\right)\Delta_t + \frac{2cL}{tn}\\
& \le \exp\left(\frac{c\beta}t\right)\Delta_t + \frac{2cL}{tn}.\\
\end{align*}
Here we used that $1+x\le\exp(x)$ for all $x.$

Using the fact that $\Delta_{t_0}=0,$ we can unwind this recurrence relation
from $T$ down to $t_0+1.$ This gives
\begin{align*}
\Delta_T &\leq \sum_{t=t_0+1}^T \left\{\prod_{k=t+1}^T
\exp\left( \tfrac{\beta c}{k}\right) \right\}  \frac{2c L}{t n}
\le \frac{2L}{\beta n} \left(\frac{T}{t_0}\right)^{\beta c}\,,
\end{align*}
Plugging this bound into~\eqref{eq:nonconvex-diff-mb},
we get
\[
\E\left|f(w_T)-f(w_T')\right| \le \frac{bt_0}n +
\frac{2L^2}{\beta n} \left(\frac{T}{t_0}\right)^{\beta c}\,.
\]
Letting $q=\beta c,$ the right hand side is minimized when
\[
t_0 = \left(2cL^2/b\right)^{\frac1{q+1}} T^{\frac{q}{q+1}}\,.
\]
This setting gives us 
\begin{align*}
\E\left|f(w_T)-f(w_T')\right| 
& \leq
\frac{\beta c+1}{n}(2cL^2)^{\frac1{\beta c+1}}(bT)^{\frac{\beta c}{\beta c+1}}\,.
\end{align*}
Since the bound we just derived holds for all $S$ and $S',$ we immediately get
the claimed upper bound on the uniform stability.
\end{proof}

\subsection{Gradient Descent}

Note that these mini-batch analyses all immediately give stability bounds for the  traditional gradient descent method.

\begin{corollary}
\label{thm:convex-gradient}
Assume that $f\colon \Omega\to[0,1]$ is a decomposable 
$L$-Lipschitz $\beta$-smooth  function. Let
\[
	\bar{f}(w)=\frac{1}{n} \sum_{i=1}^n f(w;z_i)
\]
denote the average of $f$ over the set of samples $S$.

\begin{enumerate}
\item With no additional assumptions on $f$, if we run gradient descent on $\bar{f}$ with step size $\alpha$ for  $T$ iterations, the resulting estimator $w_T$ has uniform stability
\[
	\epsilon_{\mathrm{stab}}
\le \min \left\{ 2L^2 T \alpha,(1+\alpha \beta)^{T+1} \frac{2 L^2}{\beta n} \right\}  \,.
\]

\item If $\bar{f}$ is convex and and we run gradient descent on $\bar{f}$ with step sizes $\alpha_t\le 2/\beta$ for  $T$ iterations.  Then, the resulting estimator $w_T$ has uniform stability with
\[
\epsilon_{\mathrm{stab}}
\le \frac{2L^2}{n}\sum_{t=1}^T\alpha_t\,.
\]
\item If $\bar{f}$ is $\gamma$-strongly convex and we run gradient descent with a constant stepsize $\alpha < 1/\beta$,  the resulting estimator has $w_T$ has uniform stability with
\[
\epsilon_{\mathrm{stab}}
\le \frac{2L^2}{\gamma n}\,.
\]
\end{enumerate}
\end{corollary}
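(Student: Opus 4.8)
The statement is the ``full sample'' special case of our framework: gradient descent on $\bar{f}_S=\frac1n\sum_i f(\cdot\,;z_i)$ versus gradient descent on $\bar{f}_{S'}$, where $S$ and $S'$ differ only in the example at some position $i^*$. Two structural observations drive everything. First, averaging preserves regularity: since every $f(\cdot\,;z_i)$ is $L$-Lipschitz and $\beta$-smooth (and, where assumed, convex or $\gamma$-strongly convex), so is $\bar{f}_S$; hence the gradient update $G_{\bar{f}_S,\alpha}$ is $(\alpha L)$-bounded by Lemma~\ref{lem:Lipschitz2bounded} and enjoys exactly the expansivity granted by Lemmas~\ref{lem:smooth2expansive}--\ref{lem:sconvex2expansive}, depending on the hypothesis at hand. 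Second, the two updates differ only through one of the $n$ averaged terms: for any $w$,
\[
\bigl\|\nabla\bar{f}_S(w)-\nabla\bar{f}_{S'}(w)\bigr\|
=\tfrac1n\bigl\|\nabla f(w;z_{i^*})-\nabla f(w;z_{i^*}')\bigr\|
\le\tfrac{2L}{n}\,,
\]
using that $f(\cdot\,;z)$ is $L$-Lipschitz for every $z$.

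\textbf{The recursion.} Set $w_0=w_0'$, run $w_{t+1}=G_{\bar{f}_S,\alpha_t}(w_t)$ and $w_{t+1}'=G_{\bar{f}_{S'},\alpha_t}(w_t')$, and write $\delta_t=\|w_t-w_t'\|$. Inserting the term $G_{\bar{f}_S,\alpha_t}(w_t')$ and using the triangle inequality,
\[
\delta_{t+1}
\le\bigl\|G_{\bar{f}_S,\alpha_t}(w_t)-G_{\bar{f}_S,\alpha_t}(w_t')\bigr\|
+\bigl\|G_{\bar{f}_S,\alpha_t}(w_t')-G_{\bar{f}_{S'},\alpha_t}(w_t')\bigr\|
\le\eta_t\,\delta_t+\tfrac{2\alpha_t L}{n}\,,
\]
where $\eta_t$ is the expansivity constant of $G_{\bar{f}_S,\alpha_t}$ and the last term is the gradient-difference bound above. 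This is the same decomposition used in the proof of Theorem~\ref{thm:convex}, except that the two update rules now disagree at \emph{every} step while the per-step penalty is reduced by a factor $1/n$ — here arising from the averaging in a decomposable objective rather than from a probability-$1/n$ event. Gradient descent is deterministic, so $\delta_T$ is a number and $\epsilon_{\mathrm{stab}}\le L\,\delta_T$ by Lipschitzness.

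\textbf{The three cases.} It remains to unroll the recursion with the appropriate $\eta_t$. For part~1 with no convexity, $\eta_t=1+\alpha\beta$ (Lemma~\ref{lem:smooth2expansive}), and unrolling from $\delta_0=0$ gives $\delta_T\le\frac{2L}{\beta n}\bigl((1+\alpha\beta)^T-1\bigr)\le\frac{2L}{\beta n}(1+\alpha\beta)^{T+1}$, hence $\epsilon_{\mathrm{stab}}\le\frac{2L^2}{\beta n}(1+\alpha\beta)^{T+1}$; for the other term of the minimum, use instead the crude $\sigma$-bounded branch of Lemma~\ref{lem:growth} (i.e.\ $\delta_{t+1}\le\delta_t+2\alpha L$, forgetting the averaging), which gives $\delta_T\le 2\alpha LT$ and $\epsilon_{\mathrm{stab}}\le 2\alpha L^2T$. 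For part~2 (convex, $\alpha_t\le 2/\beta$), $\eta_t=1$ by Lemma~\ref{lem:convex2expansive}, so $\delta_{t+1}\le\delta_t+\frac{2\alpha_t L}{n}$ telescopes to $\delta_T\le\frac{2L}{n}\sum_t\alpha_t$ and $\epsilon_{\mathrm{stab}}\le\frac{2L^2}{n}\sum_t\alpha_t$. For part~3 (strongly convex, constant $\alpha<1/\beta$), the simplification of Lemma~\ref{lem:sconvex2expansive} used in the proof of Theorem~\ref{thm:sconvex} gives $\eta=1-\alpha\gamma$, so $\delta_T\le\frac{2\alpha L}{n}\sum_{k\ge 0}(1-\alpha\gamma)^k\le\frac{2L}{\gamma n}$ and $\epsilon_{\mathrm{stab}}\le\frac{2L^2}{\gamma n}$.

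\textbf{Main obstacle.} There is no deep difficulty here; the one point to get right is that for the refined bounds one must \emph{not} invoke the off-the-shelf Growth Recursion Lemma in its $\sigma$-bounded branch, since that charges a full $2\alpha_t L$ per step and destroys the crucial $1/n$ factor. Instead one must use the expansive-plus-small-perturbation decomposition above, exploiting that $\nabla\bar{f}_S$ and $\nabla\bar{f}_{S'}$ differ only by a $1/n$-weighted single-example gradient. The remainder is a routine unrolling of a linear recurrence followed by the Lipschitz estimate $\epsilon_{\mathrm{stab}}\le L\,\delta_T$.
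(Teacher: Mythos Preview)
Your proof is correct and takes essentially the same route as the paper: the paper observes that full-batch gradient descent is minibatch SGD with $b=n$ and invokes the minibatch growth recursion (Lemma~\ref{lem:growth-mb}) together with the minibatch analogues of Theorems~\ref{thm:convex} and~\ref{thm:sconvex}, which for $b=n$ reduce exactly to your recursion $\delta_{t+1}\le\eta_t\,\delta_t+\tfrac{2\alpha_t L}{n}$ (since the differing example lies in every batch). The only difference is presentational---you derive this recursion directly from the triangle inequality and the $\tfrac{2L}{n}$ gradient-difference bound rather than routing through the minibatch framework.
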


\begin{proof}

For part 1, the first upper bound follows using the second case of Lemma~\ref{lem:growth}. For the other upper bound, we use the second case of Lemma~\ref{lem:growth-mb} and the formula for summing geometric series.

Parts 2 and 3 are immediate consequences of Theorems~\ref{thm:convex-mb} and~\ref{thm:sconvex-mb}.
\end{proof}

Note that the bound for the nonconvex case rather poor unless the step sizes
and number of iterations are both very small. While these are only upper
bounds on stability, they do suggest that proving stability for gradient
descent for nonconvex functions would be very challenging.  Indeed, it's
quite easy to construct an example where changing one example in the
training set can entirely change the point to which gradient descent
converges.
\fi

\end{document}